\newcommand{\FunctionalNP}[1]{f}
\def\nTestFunctions{n_{\TestFunctionClass{}}}
\def\QfncTest{\Qfnc'}
\newcommand{\ConcentrabilityGeneric}[1]{K^{#1}}
\newcommand{\ConcentrabilityGenericSub}[2]{K^{#1}_{(#2)}}
\def\pInP{\policy \in \PolicyClass}
\def\DatasetDistributionStateActions{\mu}
\def\DatasetDistributionEmpiricalStateAction{\Dataset}
\newcommand{\ConfidenceInterval}[1]{L}
\newcommand{\ConfidenceIntervalEmpirical}[1]{\widehat{L}}
\newcommand{\EigenVectorEmpirical}[1]{\widehat u_{#1}}
\newcommand{\EigenValueEmpirical}[1]{\widehat \lambda_{#1}}
\newcommand{\SuperEmpiricalFeasible}{\EmpiricalFeasibleSet{\policy}(\Rad, \regpar;
  \TestFunctionClass{\policy})}
\newcommand{\SuperPopulationFeasible}{\PopulationFeasibleSet{\policy}(4 \Rad, \regpar;
  \TestFunctionClass{\policy})}
\newcommand{\TestFunction}[1]{f_{#1}}
\newcommand{\TestFunctionReg}[0]{\lambda}
\newcommand{\TestNormaRegularizerEmp}[1]
{\sqrt{\norm{\TestFunction{#1}}{\nSamples}^2 + \regpar}}
\newcommand{\TestNormaRegularizerPop}[1]
{\sqrt{\norm{\TestFunction{#1}}{\DatasetDistributionStateActions}^2 + \regpar}}
\newcommand{\mirrdist}{\ensuremath{\nu}}
\newcommand{\LinSpace}{\ensuremath{\mathcal{L}}}
\newcommand{\LinClass}{\LinSpace}
\newcommand{\martone}{\ensuremath{W}}
\newcommand{\marttwo}{\ensuremath{\martone'}}
\newcommand{\goodname}{off-policy cost coefficient}
\newcommand{\shortgoodname}{OPC }
\newcommand{\Gclass}{\ensuremath{\mathcal{G}}}
\newenvironment{carlist}
 {\begin{list}{$\bullet$}
 {\setlength{\topsep}{0in} \setlength{\partopsep}{0in}
  \setlength{\parsep}{0in} \setlength{\itemsep}{\parskip}
  \setlength{\leftmargin}{0.15in} \setlength{\rightmargin}{0.08in}
  \setlength{\listparindent}{0in} \setlength{\labelwidth}{0.08in}
  \setlength{\labelsep}{0.1in} \setlength{\itemindent}{0in}}}
 {\end{list}}
\newcommand{\bcar}{\begin{carlist}}
\newcommand{\ecar}{\end{carlist}}
\newcommand{\ftil}{\tilde{f}}
\newcommand{\Qtil}{\tilde{Q}}
\newcommand{\pitil}{\tilde{\pi}}
\long\def\mycomment#1{}
\newcommand{\Rad}{\ensuremath{\rho}}
\newcommand{\regpar}{\ensuremath{\lambda}}
\newcommand{\EmpNorm}[1]{\|#1\|_{\nSamples}}
\newcommand{\munorm}[1]{\|#1\|_\mudist}
\newcommand{\mudist}{\DatasetDistributionStateActions}
\newcommand{\numobs}{\nSamples}
\newcommand{\PolComplex}{\PolComplexGen{\policy}}
\newcommand{\PolMuComplex}{\Expecti{\BellmanError{\Qfnc}{\policy}{}} {
\mudist}}
\newcommand{\PolComplexGen}[1]{\Expecti{\BellmanError{\Qfnc}{#1}{}}{#1}}
\newcommand{\pol}{\policy}
\newcommand{\polcomp}{\ensuremath{\policy}}
\newcommand{\ConcSimple}{K^\policy}
\newcommand{\ZvarPlain}{\ensuremath{Z_\numobs}}
\newcommand{\Zvar}[2]{\ensuremath{\ZvarPlain(#1, #2)}}
\newcommand{\ZvarShort}{\Zvar{\Qfnc}{\policy}}
\newcommand{\Term}{T}
\newcommand{\Fclass}{\TestFunctionClass{}}
\newcommand{\BellError}{\ensuremath{\mathcal{B}}}
\newcommand{\Diff}[1]{\ensuremath{\mathcal{D}^{#1}}}
\newcommand{\Mbar}{\ensuremath{\bar{\Mplain}}}
\newcommand{\Mplain}{\ensuremath{M}}
\newcommand{\Exp}{\ensuremath{\mathbb{E}}}
\newcommand{\SpecFun}{\ensuremath{\Psi_\numobs}}
\newcommand{\testfun}{\ensuremath{f}}
\newcommand{\cspec}{\bar{c}}
\newcommand{\BellmanErr}{\ensuremath{\mathcal{B}}}
\newcommand{\ctil}{\ensuremath{\tilde{c}}}
\newcommand{\DF}{\ensuremath{\mathcal{F}}}
\newcommand{\TestClassPlain}{\ensuremath{\mathscr{F}}}
\newcommand{\TestFunctionClass}[1]{\TestClassPlain^{#1}}
\newcommand{\TestFunctionClassRandom}[1]{\widetilde{\TestClassPlain}^{#1}}
\newcommand{\TestFunctionISClass}[1]{\TestClassPlain^{\text{IS}}_{#1}}
\newcommand{\TestFunctionClassBubnov}[1]{\TestClassPlain^{\mathcal B}_{#1}}
\def\identifier{o} 
\def\SumOverSamples{\sum_{\iSample=1}^{\nSamples}}
\newcommand{\sarsiz}[1]{(z_{#1})}
\newcommand{\sarsizNp}[1]{z_{#1}}
\newcommand{\sarsi}[1]{(\state_{#1},\action_{#1},\reward_{#1},\successorstate_{#1},\identifier_{#1})}
\def\psai{(\state,\action,\identifier)}
\newcommand{\psaindex}{(\state_i, \action_i)}
\newcommand{\testfunc}{\ensuremath{f}}
\newcommand{\Identifier}{\ensuremath{O}}
\def\psai{(\state,\action,\identifier)}
\newcommand{\CoveringNumber}[2]{\mathcal N_{#2}(#1)}
\newcommand{\policyBeh}[1]{\policy^b}
\def\SumOverSars{\sum_{\sars{} \in \Dataset}}
\def\precision{\epsilon}
\def\LinPolicyClass{\PolicyClass_{\text{lin}}}
\def\LinPhi{\phi}
\newcommand{\LinPhiBootstrap}[1]{\LinPhi^{+#1}}
\newcommand{\LinPhiEmpirical}[1]{\phi_{#1}}
\newcommand{\nLinEffSq}[1]{n_{#1}}
\newcommand{\nLinEff}[1]{\sqrt{\nLinEffSq{#1}}}
\newcommand{\LinPhiEmpiricalExpecti}[1]{\widehat\LinPhi_{#1}}
\newcommand{\LinPhiEmpiricalExpectiBootstrap}[2]{\widehat\LinPhi_{#1}^{+#2}}
\newcommand{\LinEmpiricalReward}[1]{\widehat \reward_{#1}}
\def\LinPar{\theta}
\newcommand{\CriticParBest}[1]{\wPar^{#1}_\star}
\newcommand{\CriticParProjectionFull}[2]{\mathscr P^{#1}(#2)}
\newcommand{\CriticParProjection}[1]{\CriticParProjectionFull{#1}{\CriticPar{}}}
\def\horizon{\frac{1}{1-\discount}}
\def\piemp{\widetilde \pi}
\def\RewardLaw{R}
\def\TransitionLaw{\Pro}
\def\Filtration{\mathcal F}
\newcommand{\Dpi}[1]{\mu_{#1}}
\def\DatasetDistribution{d}
\def\Qfnc{Q}
\newcommand{\Qclass}[1]{\Q^{#1}}
\newcommand{\QclassErr}[1]{\mathcal{E}^{#1}}
\def\QfncErr{\epsilon}
\def\QfncErrNC{{\epsilon}}
\newcommand{\QclassErrCentered}[1]{\mathcal{E}^{#1}_\star}
\newcommand{\Frequencies}[1]{p_{#1}} 
\newcommand{\VminEmp}[1]{\widehat{V}_{\text{min}}^{#1}}
\newcommand{\VmaxEmp}[1]{\widehat{V}_{\text{max}}^{#1}}
\newcommand{\QminEmp}[1]{\widehat{\underline Q}_{#1}}
\newcommand{\EmpiricalFeasibleSet}[1]{\widehat{\mathscr{C}}_{\nSamples}^{#1}}
\newcommand{\PopulationFeasibleSet}[1]{\mathscr{C}_{\nSamples}^{#1}}
\newcommand{\PopulationFeasibleSetInfty}[1]{\mathscr C_{\infty}^{#1}}
\newcommand{\nConstraints}[0]{m}
\def\iSample{i}
\def\nSamples{n}
\def\iConstraint{j}
\def\IConstraint{J}
\def\BRQClass{\widetilde {\Qclass{}}}
\def\BRPolicyClass{\widetilde {\PolicyClass}}
\newcommand{\BRleft}[1]{\nu_{#1}}
\newcommand{\BRright}[1]{\xi_{#1}}
\newcommand{\BRDelta}[1]{\Delta_{#1}}
\newcommand{\BRDeltaCoord}[1]{\widetilde \Delta_{#1}}
\newcommand{\BRCovariance}[1]{\Sigma_{\BRleft{}}}
\def\CovarianceStandard{\Sigma}
\def\CovarianceStandardReg{\Sigma_{\lambda}}
\newcommand{\CovarianceBootstrap}[1]{\CovarianceStandard^{+#1}}
\newcommand{\CovarianceWithBootstrapReg}[1]{\CovarianceBootstrap{#1}_{\lambda,\text{Boot}}}
\def\CovarianceStandardExplicit
\def\CovarianceEmpiricalStandard{\widehat \Sigma}
\def\CovarianceEmpiricalStandardExplicit
\SumOverSamples \LinPhiEmpirical{\iSample}\LinPhiEmpirical{\iSample}^\top}
\newcommand{\TDError}[3]{\delta^{#2}_{#3}#1}
\newcommand{\TDErrorDefCompact}[3]{#1(\state_{#3},\action_{#3}) - \reward_{#3} - \discount #1(\successorstate_{#3} ,#2)}
\newcommand{\TestFunctionDefCompact}[2]{\TestFunction{#1}(\state_{#2},\action_{#2}, \identifier_{#2})}
\newcommand{\TrDError}[1]{\delta_{R}}
\newcommand{\BellmanError}[3]{\mathcal B^{#2}_{#3}#1}
\newcommand{\BellmanErrorDefCompact}[3]{#1(\state_{#3},\action_{#3}) - (\BellmanEvaluation{#2}#1)(\state_{#3},\action_{#3}) }
\newcommand{\Expecti}[2]{\E_{#2}{{#1}}}
\def\startdistribution{\nu_{\text{start}}}
\newcommand{\Estart}[2]{\E_{\MyState \sim \startdistribution}(#1)(\MyState #2)}
\def\SumOverConstraints{\sum_{\iConstraint =1}^{\nConstraints}}
\newcommand{\ActorPolicy}[1]{\policy_{#1}}
\def\iter{t}
\def\nIter{T}
\def\LearningRate{\eta}
\def\comparator{\widetilde \policy}
\newcommand{\ActorPar}[1]{\LinPar_{#1}}
\newcommand{\CriticPar}[1]{\wPar_{#1}}
\def\pitil{\pitilde}
\def\MyState{S}
\def\Action{A}
\newcommand{\MirrorRegret}[1]{\mathcal{E}_{\tiny{opt}}(#1)}
\def\ActorParNorm{\nIter}
\def\MDP{\mathcal{M}}
\newcommand{\QpiWeak}[1]{\Qfnc_\star^{#1}}
\newcommand{\QpiProj}[2]{\mathscr P^{#1}(#2)}
\newcommand{\QpiAdv}[2]{\Qfnc_{#1}^{#2}}
\newcommand{\VpiAdv}[2]{V_{#1}^{#2}}
\newcommand{\ApiAdv}[2]{A_{#1}^{#2}}
\newcommand{\MDPadv}[1]{\MDP_{#1}}
\newcommand{\RewardLawAdv}[1]{\RewardLaw_{#1}}
\def\InputFunctionalSpace{
\ensuremath{(\PolicyClass,\TestFunctionClass{},\Qclass{})}}
\newcommand{\InputFunctionClass}{\InputFunctionalSpace}
\newcommand{\scaling}[1]{b_{#1}}
\newcommand{\scalingsq}[1]{b^2_{#1}}
\def\cOne{c_1}
\def\cTwo{c_2}
\def\cThree{c_3}
\def\cFour{c_4}
\newcommand{\intermediate}[6]{
Under the event of \cref{thm:NewPolicyEvaluation},
the statement of \cref{EqnSandwichZvar} holds, and in particular
\begin{align*}
\frac{1}{\cOne(\TestNormaRegularizerPop{})}
\geq 
\frac{1}{\TestNormaRegularizerEmp{} }
\geq 
\frac{1}{\cTwo (\TestNormaRegularizerPop{})}.
\end{align*}
Thus, the $\iConstraint$ constraint reads
\begin{align*}
	\frac{\ConfidenceInterval{\FailureProbability}}{\sqrt{\nSamples}}
	\gtrsim 
	\frac{
	\innerprodweighted{\TestFunction{\iConstraint}}{\BellmanError{\Qfnc}{\policy}{}}
	{\DatasetDistributionStateActions}
	}
	{
	\TestNormaRegularizerEmp{}
	}
	=
	\frac{
	\innerprodweighted{\TestFunction{\iConstraint}}{\BellmanError{\Qfnc}{\policy}{}}
	{\DatasetDistributionStateActions}
	}
	{
	\sqrt{\EigenValueEmpirical{\iConstraint} + \TestFunctionReg}
	}
\end{align*}
where the last step follows from 
\begin{align*}
	\norm{\TestFunction{\iConstraint}}{\DatasetDistributionEmpiricalStateAction}^2 
	= 
	\frac{1}{\nSamples} \SumOverSars (\TestFunction{\iConstraint}\psa)^2 
	=
	\frac{1}{\nSamples} \SumOverSamples
	(\EigenVectorEmpirical{\iConstraint}^\top
	\LinPhiEmpirical{\iSample})^2 
	= 
	\EigenVectorEmpirical{\iConstraint}^\top
	   \CovarianceEmpiricalStandard
	   \EigenVectorEmpirical{\iConstraint} 
	= \EigenValueEmpirical{\iConstraint}.
\end{align*}
Now, squaring and summing over the constraints and using #6 yields
%
\begin{align*}
\dim \frac{\ConfidenceInterval{\FailureProbability}^2}{\nSamples} 
& \gtrsim 
\SumOverConstraints 
\innerprodweighted{ \frac{\EigenVectorEmpirical{\iConstraint}^\top\LinPhi}{\sqrt{\EigenValueEmpirical{\iConstraint} + \TestFunctionReg}} }
{#1 #3}{\DatasetDistributionStateActions}^2 \\
& =
\SumOverConstraints \Big[\frac{\EigenVectorEmpirical{\iConstraint}^\top}{\sqrt{\EigenValueEmpirical{\iConstraint} + \TestFunctionReg}} 
\Expecti{ \LinPhi
{#1 #3}}{\DatasetDistributionStateActions}\Big]^2 \\
& =
\SumOverConstraints \Big[\frac{\EigenVectorEmpirical{\iConstraint}^\top}{\sqrt{\EigenValueEmpirical{\iConstraint} + \TestFunctionReg}} 
\underbrace{#5 #3}_{\defeq y} \Big]^2 \\
& = 
y^\top \Big(\SumOverConstraints 
\frac{\EigenVectorEmpirical{\iConstraint}\EigenVectorEmpirical{\iConstraint}^\top}{\EigenValueEmpirical{\iConstraint} + \TestFunctionReg}\Big) y \\
& = 
y^\top \Big( \CovarianceEmpiricalStandard + \TestFunctionReg\Identity \Big)^{-1} y \\
& \gtrsim 
y^\top \CovarianceStandardReg^{-1} y.
\end{align*}
The last inequality holds via \fullref{lem:CovarianceConcentration}
with probability at least $1-\FailureProbability$ since
$
\TestFunctionReg{}
$
is a large enough regularizer.
Let us complete the quadratic form:
\begin{align*}
	\norm{y + \TestFunctionReg #3 }
	{\CovarianceStandardReg^{-1}}^2
	\leq 
	(
	\norm{y}
	{\CovarianceStandardReg^{-1}}
	+
	\TestFunctionReg \norm{#3 }
	{\CovarianceStandardReg^{-1}} 
	)^2
	& \lesssim
	\norm{y}
	{\CovarianceStandardReg^{-1}}^2
	+
	\TestFunctionReg.
\end{align*}
Therefore, adding $\TestFunctionReg$ to both sides of the prior display
and noticing that 
\hidecom{$\info{\TestFunctionReg \lesssim \frac{\ConfidenceInterval{\policy}^2}{\nSamples}}$}
$\TestFunctionReg \lesssim \frac{\ConfidenceInterval{\policy}^2}{\nSamples}$
gives
\begin{align*}
\dim \frac{\ConfidenceInterval{\FailureProbability}^2}{\nSamples} 
& \gtrsim 
\norm{y + \TestFunctionReg #3 }
	{\CovarianceStandardReg^{-1}}^2 \\
	& =  
	#3 #2 ^\top \Big( \CovarianceStandardReg^{-1} \Big) #2  
	#3 \\
	& =
	#3(#4)#3 \\
	& = 
	\norm{#3}{#4}^2.
\end{align*}
}
\begin{document}

\begin{center}
  {\bf{\LARGE{
        Bellman Residual Orthogonalization \\
for Offline Reinforcement Learning}}}

  \vspace*{0.5in}

\begin{tabular}{lcl}
  Andrea Zanette$^\star$ 
  && Martin J. Wainwright$^{\star,\dagger}$ \\
  \texttt{zanette@berkeley.edu} && \texttt{wainwrig@berkeley.edu}
\end{tabular}

\vspace*{0.10in}

\begin{tabular}{c}
  Department of Electrical Engineering and Computer
  Sciences$^{\star}$ \\
  Department of Statistics$^\dagger$ \\
  UC Berkeley, Berkeley, CA \\
  \\
  Department of Electrical Engineering and Computer
  Sciences$^{\dagger}$ \\
  Department of Mathematics$^\dagger$ \\
  Massachusetts Institute of Technology, Cambridge, MA
\end{tabular}

  \vspace*{0.5in}

  \begin{abstract}
    We propose and analyze a reinforcement learning principle that
approximates the Bellman equations by enforcing their validity only
along an user-defined space of test functions.  Focusing on
applications to model-free offline RL with function approximation, we
exploit this principle to derive confidence intervals for off-policy
evaluation, as well as to optimize over policies within a prescribed
policy class.  We prove an oracle inequality on our policy
optimization procedure in terms of a trade-off between the value and
uncertainty of an arbitrary comparator policy.  Different choices of
test function spaces allow us to tackle different problems within a
common framework.  We characterize the loss of efficiency in moving
from on-policy to off-policy data using our procedures, and establish
connections to concentrability coefficients studied in past work.  We
examine in depth the implementation of our methods with linear
function approximation, and provide theoretical guarantees with
polynomial-time implementations even when Bellman closure does not
hold.

  \end{abstract}
\end{center}

\section{Introduction}

Markov decision processes (MDP) provide a general framework for
optimal decision-making in sequential settings
(e.g.,~\cite{puterman1994markov,Bertsekas_dyn1,Bertsekas_dyn2}).
Reinforcement learning refers to a general class of procedures for
estimating near-optimal policies based on data from an unknown MDP
(e.g.,~\cite{bertsekas1996neuro,sutton2018reinforcement}).  Different
classes of problems can be distinguished depending on our access to
the data-generating mechanism.  Many modern applications of RL involve
learning based on a pre-collected or offline dataset.  Moreover, the
state-action spaces are often sufficiently complex that it becomes
necessary to implement function approximation.  In this paper, we
focus on model-free offline reinforcement learning (RL) with function
approximation, where prior knowledge about the MDP is encoded via the
value function.  In this setting, we focus on two fundamental
problems: (1) offline policy evaluation---namely, the task of
accurately predicting the value of a target policy; and (2) offline
policy optimization, which is the task of finding a high-performance
policy.

There are various broad classes of approaches to off-policy
evaluation, including importance
sampling~\cite{precup2000eligibility,thomas2016data,jiang2016doubly,liu2018breaking},
as well as regression-based
methods~\cite{lagoudakis2003least,munos2008finite,chen2019information}.
Many methods for offline policy optimization build on these
techniques, with a line of recent papers including the addition of
pessimism~\cite{jin2021pessimism,xie2021bellman,zanette2021provable}.
We provide a more detailed summary of the literature
in~\cref{sec:Literature}.

In contrast, this work investigates a different model-free
principle---different from importance sampling or regression-based
methods---to learn from an offline dataset.  It belongs to the class
of weight learning algorithms, which leverage an auxiliary function
class to either encode the marginalized importance weights of the
target policy~\cite{liu2018breaking,xie2020Q}, or estimates of the
Bellman errors~\cite{antos2008learning,chen2019information,xie2020Q}.
Some work has considered kernel classes~\cite{feng2020accountable} or
other weight classes to construct off-policy
estimators~\cite{uehara2020minimax} as well as confidence intervals at
the population level~\cite{jiang2020minimax}.  However, these works do
not examine in depth the statistical aspects of the problem, nor
elaborate upon the design of the weight function classes.\footnote{For
instance, the paper~\cite{feng2020accountable} only shows validity of
ther intervals, not a performance bound; on the other hand, the
paper~\cite{jiang2020minimax} gives analyses at the population level,
and so does not address the alignment of weight functions with respect
to the dataset in the construction of the empirical estimator, which
we do via self-normalization and regularization.  This precludes
obtaining the same type of guarantees that we present here.}  The last
two considerations are essential to obtaining data-dependent
procedures accompanied by rigorous guarantees, and to provide guidance
on the choice of weight class, which are key contributions of this
paper.

For space reasons, we motivate our approach in the idealized case
where the Bellman operator is known in~\cref{sec:appBRO}, and compare
with the weight learning literature at the population level
in~\cref{sec:WeightLearning}.  Let us summarize our main contributions
in the following three paragraphs.


\paragraph{Conceptual contributions} Our paper makes two novel
contributions of conceptual nature:
\begin{enumerate}[leftmargin=*]
\item We propose a method, based on \emph{approximate empirical
orthogonalization} of the Bellman residual along test functions, to
  construct confidence intervals and to perform policy optimization.
\item We propose a sample-based approximation of such principle, based
  on \emph{self-normalization} and \emph{regularization}, and obtain
  general guarantees for parametric as well as non-parametric problems.
\end{enumerate}
The construction of the estimator, its statistical analysis, and the
concrete consequences (described in the next paragraph) are the major
distinctions with respect to past work on weight learning
methods~\cite{uehara2020minimax,jiang2020minimax}.  Our analysis
highlights the statistical trade-offs in the choice of the test
functions.  (See~\cref{sec:WeightLearning} for comparison with past
work at the population level.)

\paragraph{Domain-specific results}
In order to illustrate the broad effectiveness and applicability of
our general method and analysis, we consider several domains of
interest.  We show how to recover various results from past work---and
to obtain novel ones---by making appropriate choices of the test
functions and invoking our main result.  Among these consequences, we
discuss the following:
\begin{enumerate}[leftmargin=*]
\item When marginalized importance weights are available, they
can be used as test class.  In this case we recover a similar results
    	as the paper~\cite{xie2020Q}; however, here we only require
    	concentrability with respect to a comparator policy instead of
    	over all policies in the class.
\item When some knowledge of
    	the Bellman error class is available, it can be used as test
    	class.  Similar results have appeared previously either with
    	stronger concentrability~\cite{chen2019information} or in the
    	special case of Bellman closure~\cite{xie2021bellman}.
\item We provide a test class that projects the Bellman residual
along the error space of the $\Q$ class.  The resulting procedure is
as an extension of the LSTD algorithm \cite{bradtke1996linear} to
non-linear spaces, which makes it a natural approach if no
domain-specific knowledge is available.  A related result is the lower
bound by \cite{foster2021offline}, which proves that without Bellman
closure learning is hard even with small density ratios.  In contrast,
our work shows that learning is still possible even with large density
ratios.
\item Finally, our procedure inherits some form of ``multiple
robustness''.  For example, the two test classes corresponding to
Bellman completeness and marginalized importance weights can be used
together, and guarantees will be obtained if \emph{either} Bellman
completeness holds or the importance weights are correct.  We examine
this issue in \cref{sec:MultipleRobustness}.
\end{enumerate}

\paragraph{Linear setting} We examine in depth an application to the
linear setting, where we propose the first \emph{computationally
tractable} policy optimization procedure \emph{without assuming
Bellman completeness}.  The closest result here is given in the
paper~\cite{zanette2021provable}, which holds under Bellman closure.
Our procedure can be thought of making use of LSTD-type estimates so
as to establish confidence intervals for the projected Bellman
equations, and then using an iterative scheme for policy improvement.


\section{Background and set-up}
\label{sec:background}

We begin with some notation used throughout the paper.  For a given
probability distribution $\Distribution$ over a space $\mathcal{X}$,
we define the $L^2(\Distribution)$-inner product and semi-norm as
$\innerprodweighted{f_1}{f_2}{\Distribution} =
\Expecti{[f_1f_2]}{\Distribution}, $ and $ \norm{f_1}{\Distribution} =
\sqrt{\innerprodweighted{f_1}{f_1}{\Distribution}}{}$. The identity
function that returns one for every input is denoted by $\1$.  We
frequently use notation such as $c, c', \ctil, \cOne,\cTwo$ etc. to
denote constants that can take on different values in different
sections of the paper.

\subsection{Markov decision processes and Bellman errors}
 
We focus on infinite-horizon discounted Markov decision
processes~\cite{puterman1994markov,bertsekas1996neuro,sutton2018reinforcement}
with discount factor $\discount \in [0,1)$, state space $\StateSpace$,
and an action set $\ActionSpace$.  For each state-action pair $\psa$,
there is a reward distribution $\RewardLaw\psa$ supported in $[0,1]$
with mean $\reward\psa$, and a transition
$\TransitionLaw(\cdot \mid \state, \action)$.

A (stationary) stochastic policy $\policy$ maps states
to actions.  For a given policy, its
$Q$-function is the discounted sum of future
rewards based on starting from the pair $\psa$, and then following the
policy $\policy$ in all future time steps
$\Qfnc^{\pi}\psa = \reward\psa + \sum_{\hstep = 0}^{\infty}
\discount^{\hstep} \Exp [ \reward_{\hstep}({\SState}_\hstep,
  {\AAction}_\hstep) \mid (\SState_0, \AAction_0) = \psa],
$
where the expectation is taken over trajectories with
$
\AAction_{\hstep} \sim \pol(\cdot \mid \SState_\hstep), \quad
\mbox{and} \quad \SState_{\hstep+1} \sim \TransitionLaw(\cdot \mid
\SState_\hstep, \AAction_\hstep) \quad \mbox{for $\hstep = 1, 2,
  \ldots$.}
$
We also use $\Qpi{\policy}(\state,
\policy) = \Expecti{\Qpi{\policy}(\state, \AAction)}{\AAction
  \sim \policy(\cdot \mid \state)}$ 
  and define the
\emph{Bellman evaluation operator} as
$
(\BellmanEvaluation{\policy}\Qfnc)\psa = \reward\psa +
  \Expecti{\Qfnc(\SState^+ ,\policy)} {\SState^+ \sim
    \TransitionLaw(\cdot \mid \state, \action)}.
$
The value function satisfies
$
\Vpi{\policy}(\state) = \Qpi{\policy}(\state,\policy).
$
In our analysis, we assume that policies have action-value functions
that satisfy the uniform bound $\sup_{\psa} \abs{\Qpi{\policy}\psa}
\leq 1$.  
We are also interested in approximating optimal policies,  
whose value and action-value functions are defined as $\Vstar(\state) =
V^{\pistar}(\state) = \sup_{\policy} \Vpi{\policy}(\state)$ and
$\Qstar\psa = \Qfnc^{\pistar}\psa = \sup_{\policy}
\Qfnc^{\policy}\psa$.

We assume that the starting state $\SState_0$ is
drawn according to $\startdistribution$ and study $
  \Vpi{\policy} = \Exp_{\SState_0 \sim
    \startdistribution}[\Vpi{\policy}(\SState_0)]
$. We define the \emph{discounted
occupancy measure} associated with a policy $\policy$ as 
the distribution over the state action space
$
\DistributionOfPolicy{\policy}\psa = (1 - \discount )
\sum_{\hstep=0}^\infty \discount^\hstep \Pro_\hstep[
  (\SState_\hstep,\AAction_\hstep) = \psa ].
$
We adopt the shorthand notation $\Exp_\policy$ for
expectations over $\DistributionOfPolicy{\policy}$.  
For any functions
$f, g: \StateSpace \times \ActionSpace \rightarrow \R$, we make
frequent use of the shorthands
$
\Exp_\policy[f] \defeq \Exp_{(\SState, \AAction) \sim
  \DistributionOfPolicy{\policy}}[f(\SState, \AAction)], \quad
\mbox{and} \quad \inprod{f}{g}_\policy \defeq \Exp_{(\SState,
  \AAction) \sim \DistributionOfPolicy{\policy}} \big[f(\SState,
  \AAction) \, g(\SState, \AAction) \big].
$
Note moreover that we have $\inprod{\1}{f}_\policy = \Exp_\policy[f]$
where $\1$ denotes the identity function.

For a given $\Qfnc$-function and
policy $\policy$, let us define the \emph{temporal difference error}
(or TD error) associated with the sample $\sarsizNp{} = \sars{}$ and
the \emph{Bellman error} at $\psa$
\begin{align*}
(\TDError{\Qfnc}{\policy}{})\sarsiz{} \defeq \Qfnc\psa - \reward -
  \gamma \Qfnc(\successorstate,\policy), \qquad
  (\BellmanError{\Qfnc}{\policy}{})\psa \defeq \Qfnc\psa - \reward\psa
  - \gamma
  \E_{\successorstate\sim\TransitionLaw\psa}\Qfnc(\successorstate,\policy).
\end{align*}
The TD error is a random variable function of $\sarsizNp{}$, while the
Bellman error is its conditional expectation with respect to the
immediate reward and successor state at $\psa$.  Many of our bounds
involve the quantity
$\Expecti{\BellmanError{\Qfnc}{\policy}{}}{\policy} =
\Expecti{\big[\BellmanError{\Qfnc}{\policy}{} \PSA \big]} {\PSA \sim
  \DistributionOfPolicy{\policy}}.$
  
Finally, we introduce the data generation mechanism.
A more general sampling model is described in \cref{sec:GeneralGuarantees}.

\begin{assumption}[I.i.d. dataset]
  \label{asm:IIDDataset}
  An i.i.d. dataset is a collection $\Dataset = \{ \sarsi{\iSample}
  \}_{i=1}^\numobs$ such that for each $i = 1, \ldots, \numobs$ we
  have
$(\state_i, \action_i, \identifier_i) \sim
  \DatasetDistributionStateActions$ and conditioned on
  $(\state_\iSample,\action_\iSample,\identifier_\iSample )$, we
  observe a noisy reward $\reward_\iSample =
  \reward(\state_\iSample,\action_\iSample) + \eta_i$ with $\E[\eta_i
    \mid \Filtration_{\iSample} ] = 0, \; |\reward_i| \leq 1$ and the
  next state $\successorstate_\iSample\sim
  \TransitionLaw(\state_\iSample,\action_\iSample)$.
\end{assumption}

\subsection{Function Spaces and Weak Representation}
 
Our methods involve three different types of function spaces,
corresponding to policies, action-value functions, and test
functions. A test function $\TestFunction{}$ is a mapping
$\psai \mapsto \TestFunction{}\psai$ such
that \mbox{$\sup_{\psai}\abs{\TestFunction{}\psai} \leq 1$}, where
$\identifier$ is an optional identifier containing side information.
Our methodology involves the following three function classes:
\bcar
\item a \emph{policy class} $\PolicyClass$ that contains all policies
  $\policy$ of interest (for evaluation or optimization);
\item for each $\policy$, the \emph{predictor class}
  $\Qclass{\policy}$ of action-value functions $Q$ that we permit; and
\item for each $\policy$, the \emph{test function class}
  $\TestFunctionClass{\policy}$ that we use to enforce the Bellman
  residual constraints.
\ecar
We use the shorthands $\Qclass{}
= \cup_{\policy \in \PolicyClass}\Qclass{\policy}$ and
\mbox{$\TestFunctionClass{} = \cup_{\policy \in
    \PolicyClass} \TestFunctionClass{\policy}$}.
We assume \emph{weak realizability}:
\begin{assumption}[Weak Realizability]
\label{asm:WeakRealizability}
For a given policy $\policy$, the predictor class $\Qclass{\policy}$
is weakly realizable with respect to the test space
$\TestFunctionClass{\policy}$ and the measure
$\DatasetDistributionStateActions$ if there exists a predictor
$\QpiWeak{\policy} \in \Qclass{\policy}$ such that
\begin{align}
\label{EqnWeakRealizable}  
\innerprodweighted{\TestFunction{}}
                  {\BellmanError{\QpiWeak{\policy}}{\policy}{}}
                  {\DatasetDistributionStateActions} = 0 \; \text{for
                    all } \TestFunction{} \in
                  \TestFunctionClass{\policy} \qquad \text{and} \qquad
                  \innerprodweighted{\1}{\BellmanError{\QpiWeak{\policy}}{\policy}{}}{\policy}
                  = 0.
	\end{align}
\end{assumption}
The first condition requires the predictor to satisfy the Bellman
equations \emph{on average}.  The second condition amounts
to requiring that the predictor returns the value of $\policy$ at the
start distribution: using \cref{lem:Simulation} stated in the sequel,
we have
\begin{align*}
\Expecti{\QpiWeak{\policy}(\MyState ,\policy)}{\MyState \sim
  \startdistribution} - \Vpi{\policy} =
\Expecti{[\QpiWeak{\policy}-\Qpi{\policy}](\MyState
  ,\policy)}{\MyState \sim \startdistribution} = \horizon
\Expecti{\BellmanError{\QpiWeak{\policy}}{\policy}{}}{\policy} =
\horizon
\innerprodweighted{\1}{\BellmanError{\QpiWeak{\policy}}{\policy}{}}{\policy}
= 0.
\end{align*}
This weak notion should be contrasted with \emph{strong
realizability}, which requires a function 
$\Qpi{\policy} \in \Qclass{\policy}$ that satisfies the Bellman equation in
all state-action pairs.  

A stronger assumption that we sometime use is Bellman closure,
which requires that
$
\BellmanEvaluation{\policy}(\Qfnc) \in \Qclass{\policy} \;
\mbox{for all $\Qfnc \in \Qclass{\policy}$.}
$
The corresponding `weak' version is given in \cref{sec:appWeakClosure}.

\section{Policy Estimates via the Weak Bellman Equations}
 
\label{sec:Algorithms}

In this section, we introduce our high-level approach, first at the
population level and then in terms of regularized/normalized
sample-based approximations.

\subsection{Weak Bellman equations, empirical approximations and
  confidence intervals}

We begin by noting that the predictor $\Qpi{\policy}$ satisfies the
Bellman equations everywhere in the state-action space, i.e.,
$\BellmanError{\Qpi{\policy}}{\policy}{} = 0$.  However, if our
dataset is ``small'' relative to the complexity of (functions) on the
state-action space, then it is unrealistic to enforce such a stringent
condition.  Instead, the idea is to control the Bellman error in a
weighted-average sense, where the weights are given by a set of
\emph{test functions}.  At the idealized population level
(corresponding to an infinite sample size), we consider predictors
that satisfy the conditions
\begin{align}
\label{eqn:PoP}
\innerprodweighted{\TestFunction{}}{\BellmanError{\Qfnc}{\policy}{}}{\DatasetDistributionStateActions}
= 0, \qquad \text{for all} \; \TestFunction{} \in
\TestFunctionClass{\policy}.
\end{align}
where $\TestFunctionClass{\policy}$ is a user-defined set of test
functions.  The two main challenges here are how to use data to
enforce an approximate version of such constraints (along with
rigorous  data-dependent guarantees), and how to
design the test function space.  We begin with the former challenge.
\medskip

\paragraph{Construction of the empirical set} Given a dataset
$\Dataset = \{(\state_i, \action_i, \reward_i, \successorstate_i, \identifier_i)
\}_{i=1}^\numobs$, we can approximate the Bellman errors by a linear
combination of the temporal difference errors:
\begin{align}
\label{eqn:WeightedTemporaResidual}
\int \TestFunction{}\psa \underbrace{[\Qfnc\psa -
    (\BellmanEvaluation{\policy} \Qfnc)\psa]}_{=
  \BellmanError{\Qfnc}{\policy}{}\psa }
d\DatasetDistributionStateActions \approx \frac{1}{\nSamples}
\sum_{i=1}^\numobs \TestFunction{}\psaindex \underbrace{
  [\Qfnc\psaindex - \reward_i - \discount
    \Qfnc(\successorstate_i,\policy) ]}_{=
  \TDError{\Qfnc}{\policy}{}\sarsi{i} }.
\end{align}
Note that the approximation~\eqref{eqn:WeightedTemporaResidual}
corresponds to a weighted linear combination of temporal differences.
Written more compactly in inner product notation,
equation~\eqref{eqn:WeightedTemporaResidual} reads
$\innerprodweighted{\TestFunction{}}{\BellmanError{\Qfnc}{\policy}{}}{\mudist}
\approx
\innerprodweighted{\TestFunction{}}{\TDError{\Qfnc}{\policy}{}}{\numobs}$,
where
$
\innerprodweighted{f}{g}{\numobs} = 
\frac{1}{\nSamples}\sum_{\sarsi{} \in \Dataset}
(fg)\sarsi{}.
$

In general, the action value function $\Qpi{\policy}$ does not satisfy
$\inprod{\TestFunction{}}{ \TDError{\Qpi{\policy}}{\policy}{}}_\numobs
= 0$ because the empirical
approximation~\eqref{eqn:WeightedTemporaResidual} involves sampling
error.  For these reasons, in order to (approximately) identify
$\Qpi{\policy}$, we impose only inequalities.  Given a class of test
functions $\TestFunctionClass{\policy}$, a radius parameter $\Rad \geq
0$ and regularization parameter $\regpar \geq 0$, we define the set
\begin{align}
\label{eqn:EmpiricalBellmanGalerkinEquations}
\SuperEmpiricalFeasible \defeq \left \{ \Qfnc \in \Qclass{\policy}
\quad \text{such that} \quad
\frac{
  \abs{\innerprodweighted{\TestFunction{}}{\TDError{\Qfnc}{\pi}{}}{\nSamples}}
} { \TestNormaRegularizerEmp{} } \leq \sqrt{\frac{\Rad}{\nSamples}}
\quad \mbox{for all $\testfunc \in \TestFunctionClass{\policy}$}\right \}.
\end{align}
When the choices of $(\Rad, \regpar)$ are clear from the context, we
adopt the shorthand $\EmpiricalFeasibleSet{\policy}(
\TestFunctionClass{\policy})$, or $\EmpiricalFeasibleSet{\policy}$
when the function class $\TestFunctionClass{\policy}$ is also clear.
If $\TestFunctionClass{\policy}$ and $\Qclass{\policy}$ have finite
cardinality, $\Rad \approx \ln
|\TestFunctionClass{\policy}||\Qclass{\policy}| + \ln
1/\FailureProbability$, where $\FailureProbability$ is a prescribed
failure probability.

Our definition of the empirical constraint
set~\eqref{eqn:EmpiricalBellmanGalerkinEquations} has two key
components: first, the division by $\TestNormaRegularizerEmp{}$
corresponds to a form of \emph{self-normalization}, whereas the
addition of $\lambda$ corresponds to a form of \emph{regularization}.
Self-normalization is needed so that the constraints remain suitably
scale-invariant.  More importantly---in conjunction with the
regularization---it ensures that test functions that have poor
coverage under the dataset do not have major effects on the solution.
In particular, the empirical norm $\EmpNorm{f}^2$ in the
self-normalization measures how well the given test function is
covered by the dataset.  Any test function with poor coverage (i.e.,
$\EmpNorm{f}^2 \approx 0$) will not yield useful information, and the
regularization counteracts its influence.  In our guarantees, the
choices of $\lambda$ and $\rho$ are critical; as shown in our theory,
we typically have $\lambda = \rho/\numobs$, where $\rho$ scales with
the metric entropy of the predictor, test and policy spaces.
Disregarding $\rho$, the right-hand side of the constraint decays as
$1/\sqrt{\nSamples}$, so that the constraints are
enforced more tightly as the sample size increases.


\paragraph{Confidence bounds and policy optimization:}
First, for any fixed policy $\policy$, we can use the feasibility
set~\eqref{eqn:EmpiricalBellmanGalerkinEquations} to compute the lower
and upper estimates
\begin{align}
\label{eqn:ConfidenceIntervalEmpirical}
\VminEmp{\policy} \defeq \min_{\Qfnc \in \SuperEmpiricalFeasible}
\Expecti{\big[\Qfnc(\MyState ,\policy)\big]}{\MyState \sim
  \startdistribution}, \; \text{and} \quad \VmaxEmp{\policy}
\defeq \max_{\Qfnc \in \SuperEmpiricalFeasible}
\Expecti{\big[\Qfnc(\MyState ,\policy)\big]}{\MyState \sim
  \startdistribution},
\end{align} 
corresponding to estimates of the minimum and maximum value that the
policy $\policy$ can take at the initial distribution.
The family of lower estimates can be used to perform
policy optimization over the class $\PolicyClass$, in particular by
solving the \emph{max-min} problem
\begin{align}
\label{eqn:MaxMinEmpirical}
  \max_{\pInP} \Big [\min_{\Qfnc \in \EmpiricalFeasibleSet{\policy}}
    \Expecti{\Qfnc(\MyState ,\policy)}{\MyState \sim
      \startdistribution} \Big], \qquad \text{or equivalently} \qquad
  \max_{\pInP}\VminEmp{\policy}.
\end{align}

\paragraph{Form of guarantees}

Let us now specify and discuss the types of guarantees that we
establish for our estimators~\eqref{eqn:ConfidenceIntervalEmpirical}
and~\eqref{eqn:MaxMinEmpirical}.  All of our theoretical guarantees
involve a $\mudist$-based counterpart
$\PopulationFeasibleSet{\policy}$ of the data-dependent set
$\EmpiricalFeasibleSet{\policy}$.  
More precisely, we define the population set
\begin{align}
\label{eqn:ApproximateBellmanGalerkingEquations}
\SuperPopulationFeasible \defeq \biggr\{ \Qfnc \in \Qclass{\policy}
\quad \text{such that} \quad \frac{
  \abs{\innerprodweighted{\TestFunction{}}{\BellmanError{\Qfnc}{\pi}{}}{\DatasetDistributionStateActions}}
} { \TestNormaRegularizerPop{} } \leq \sqrt{\frac{4 \Rad}{\nSamples}}
\qquad \mbox{for all $\testfunc \in \TestFunctionClass{}$} \biggr\},
\end{align}
where $\inprod{f}{g}_\DatasetDistributionStateActions \defeq \int f
\psa g \psa d \mu$ is the inner product induced by a
distribution\footnote{See Section~\ref{SecDataGen} for a precise
definition of the relevant $\mu$ for a fairly general sampling model.}
$\mu$ over $\psa$.  As before, we use the shorthand notation
$\PopulationFeasibleSet{\policy}$ when the underlying arguments are
clear from context.  Moreover, in the sequel, we generally ignore the
constant $4$ in the
definition~\eqref{eqn:ApproximateBellmanGalerkingEquations} by
assuming that $\Rad$ is rescaled appropriately---e.g., that we use a
factor of $\frac{1}{4}$ in defining the empirical set.

It should be noted that in contrast to the set
$\EmpiricalFeasibleSet{\policy}$, the set
$\PopulationFeasibleSet{\policy}$ is \emph{non-random}
and it is defined in terms of the
distribution $\DatasetDistributionStateActions$ and the input space
$\InputFunctionalSpace$.  It relaxes the orthogonality constraints in
the weak Bellman formulation~\eqref{eqn:PoP}.
Our guarantees for off-policy confidence intervals take the following form:
\begin{subequations}
\label{EqnPolEval}  
\begin{align}
\label{EqnCoverage}
\mbox{\underline{Coverage guarantee:}} & \qquad \big[
  \VminEmp{\policy}, \VmaxEmp{\policy} \big] \ni \Vpi{\policy}. \\
  \label{EqnWidthBound}
\mbox{\underline{Width bound:}} & \qquad \max \Big \{
|\VminEmp{\policy} - \Vpi{\policy} |, \; |\VmaxEmp{\policy} -
\Vpi{\policy} | \Big \} \leq \horizon \max_{\Qfnc \in
  \PopulationFeasibleSet{\policy}(\TestFunctionClass{\policy})}
|\PolComplex|.
  \end{align}
\end{subequations}
Turning to policy optimization, let $\piemp$ be a solution to the
max-min criterion~\eqref{eqn:MaxMinEmpirical}.  Then we prove a result
of the following type:
\begin{align}
\label{EqnOracle}
\mbox{\underline{Oracle inequality:}} \qquad \Vpi{\piemp} \geq
\max_{\policy \in \PolicyClass} \Big \{ \underbrace{ \vphantom{
    \frac{1}{1 - \discount} \max_{\Qfnc \in
      \PopulationFeasibleSet{\policy}(\TestFunctionClass{})}
    |\PolComplex| } \Vpi{\policy}}_{\mbox{\tiny{Value}}} -
\underbrace{\frac{1}{1 - \discount} \max_{\Qfnc \in
    \PopulationFeasibleSet{\policy}(\TestFunctionClass{})}
  |\PolComplex|}_{\mbox{\tiny{Evaluation uncertainty}}} \Big \}.
\end{align}
Note that this result guarantees that the estimator competes
against an oracle that can search over all policies, and
select one based on the optimal trade-off between its value
and evaluation uncertainty.



\subsection{High-probability guarantees}

In this section, we present some high-probability guarantees.  So as
to facilitate understanding under space constraints, we state here
results under simplifying assumptions: (a) the dataset originates from
a fixed distribution, and (b) the classes $\PolicyClass{}, \TestFunctionClass{}$ and
$\Qclass{}$ have finite cardinality.  We emphasize
that~\cref{sec:GeneralGuarantees} provides a far more general version
of this result, with an extremely flexible sampling model, and
involving metric entropies of parametric or non-parametric function
classes.

\begin{theorem}[Guarantees for finite classes]
\label{thm:NewPolicyEvaluationFiniteClasses}
Consider 
a triple $\InputFunctionClass$
that is weakly Bellman realizable (\cref{asm:WeakRealizability}); an
i.i.d. dataset (\Cref{asm:IIDDataset}); and the choices $\Rad = c
\big \{ \log (|\TestFunctionClass{}| |\PolicyClass | |\Qclass{}|) +
\log (1/\FailureProbability) \big \}$ and $\regpar = c'
\Rad/\nSamples$ for some constants $c, c'$.  Then w.p. at least $1 -
\delta$:
\bcar
\item \underline{Policy evaluation:} For any $\pi \in \PolicyClass$,
  the estimates $(\VminEmp{\policy}, \VmaxEmp{\policy})$ specify a
  confidence interval satisfying the coverage~\eqref{EqnCoverage} and
  width bounds~\eqref{EqnWidthBound}
  \item \underline{Policy optimization:} Any max-min
    policy~\eqref{eqn:MaxMinEmpirical} $\piemp$ satisfies the oracle
    inequality~\eqref{EqnOracle}.  \ecar
\end{theorem}


\section{Concentrability Coefficients and Test Spaces}
\label{sec:Applications}

In this section, we develop some connections to concentrability
coefficients that have been used in past work, and discuss various
choices of the test class.  Like the predictor class
$\Qclass{\policy}$, the test class $\TestFunctionClass{\policy}$
encodes domain knowledge, and thus its choice is delicate.  Different
from the predictor class, the test class does not require a
`realizability' condition.  As a general principle, the test functions
should be chosen as orthogonal as possible with respect to the Bellman
residual, so as to enable rapid progress towards the solution; at the
same time, they should be sufficiently ``aligned'' with the dataset,
meaning that
$\norm{\TestFunction{}}{\DatasetDistributionStateActions}$ or its
empirical counterpart $\norm{\TestFunction{}}{\numobs}$ should be
large.  Given a test class, each additional test function posits a new
constraint which helps identify the correct predictor; at the same
time, it increases the metric entropy (parameter $\Rad$), which makes
each individual constraints more loose.  In summary, there are
trade-offs to be made in the selection of the test class
$\TestFunctionClass{}$, much like $\Qclass{}$.

In order to assess the statistical cost that we pay
for off-policy data, it is natural to define the \emph{\goodname}
(OPC) as
\begin{align}
\label{EqnConcSimple}  
  \ConcSimple(\PopulationFeasibleSet{\policy}, \Rad, \regpar) & \defeq
  \max_{\Qfnc \in \PopulationFeasibleSet{\policy}}
  \frac{|\PolComplex|^2}{(1 + \regpar) \frac{\Rad}{\numobs}} 
  =   
  \max_{\Qfnc \in \PopulationFeasibleSet{\policy}}
  \frac{\innerprodweighted{\1}{\BellmanError{\Qfnc}{\policy}{} }{\policy} ^2}{(1 + \regpar) \frac{\Rad}{\numobs}},
\end{align}

With this notation, our off-policy width bound~\eqref{EqnWidthBound}
can be re-expressed as
\begin{subequations}
\begin{align}
\label{eqn:ConcreteCI}
\abs{\VminEmp{\policy} - \VmaxEmp{\policy}} \leq 2 \frac{\sqrt{1 +
    \regpar}}{1-\discount} \sqrt{\ConcSimple \frac{ \Rad}{\numobs}},
\end{align}
while the oracle inequality~\eqref{EqnOracle} for policy optimization
can be re-expressed in the form
\begin{align}
\label{EqnConcSimpleBound}  
\Vpi{\piemp} \geq \max_{\policy \in \PolicyClass} \Big\{ \Vpi{\policy}
- \frac{\sqrt{1 + \regpar}}{1-\discount} \sqrt{\ConcSimple \frac{
    \Rad}{\numobs}} \Big \},
\end{align}
\end{subequations}
Since $\regpar \sim \Rad/\nSamples$, the factor $\sqrt{1 + \regpar}$
can be bounded by a constant in the typical case $\nSamples \geq \Rad$.
We now offer concrete examples of the \shortgoodname,
while deferring further examples to \cref{sec:appConc}.

\subsection{Likelihood ratios}

Our broader goal is to obtain
small Bellman error along the distribution induced by $\policy$.  
Assume that one constructs a test function class
$\TestFunctionClass{\policy}$ of possible likelihood ratios.  

\begin{proposition}[Likelihood ratio bounds]
\label{prop:LikeRatio}
Assume that for some constant $\scaling{\policy}$, the test function
defined as $\TestFunction{}^*\psa = \frac{1}{\scaling{\policy}}
\frac{\DistributionOfPolicy{\policy}\psa}{\DatasetDistributionStateActions\psa}$
belongs to $\TestFunctionClass{\policy}$ and satisfies
$\norm{\TestFunction{}^*}{\infty} \leq 1$.  
Then the {\shortgoodname}
coefficient satisfies
\begin{align}
\label{EqnLikeRatioBound}  
  \ConcentrabilityGeneric{\policy} \stackrel{(i)}{\leq} \frac{ \Expecti{\Big[\frac{\DistributionOfPolicy{\policy}\PSA}{\DatasetDistributionStateActions\PSA}\Big]
    }{\policy} + \scalingsq{\policy} \regpar} {1
    + \regpar} \; \stackrel{(ii)}{\leq} \l \frac{\scaling{\policy} \big(1
    + \scaling{\policy} \regpar \big)}{1 + \regpar}.
\end{align}
\end{proposition}
Here $\scaling{\policy}$ is a scaling parameter that ensures 
$\norm{\TestFunction{}^*}{\infty} \leq 1$.
Concretely one can take 
$\scaling{\policy} = \sup_{(\state,\action)} 
\frac{\DistributionOfPolicy{\policy}(\state,\action)}{\DatasetDistributionStateActions(\state,\action)}$.

The proof is in \cref{sec:LikeRatio}.
Since $\regpar = \regpar_\numobs
\rightarrow 0$ as $\nSamples$ increases,
the {\shortgoodname} coefficient is
bounded by a multiple of the expected ratio
$\Expecti{\Big[\frac{\DistributionOfPolicy{\policy}\PSA}{\DatasetDistributionStateActions\PSA}\Big]}{\policy}{}$.
Up to an additive offset, this expectation is equivalent to the
$\chi^2$-distribution between the policy-induced occupation measure
$\DistributionOfPolicy{\policy}$ and data-generating distribution
$\DatasetDistributionStateActions$.
The concentrability coefficient can be plugged back into
\cref{eqn:ConcreteCI,EqnConcSimpleBound}
 to obtain a concrete policy optimization bound.  In this case, we
recover a result similar to \cite{xie2020Q}, but with a much milder
concentrability coefficient that involves only the chosen comparator
policy.



\subsection{The error test space}
 
\label{sec:ErrorTestSpace}

We now turn to the discussion of a choice for the test space that
extends the LSTD algorithm to non-linear spaces.  A simplification to
the linear setting is presented later in \cref{sec:Linear}.

As is well known, the LSTD algorithm \cite{bradtke1996linear} can be
seen as minimizing the Bellman error projected onto the linear
prediction space $\Qfnc$. Define the transition operator $
(\TransitionOperator{\policy}\Qfnc)\psa =
\Expecti{\Qfnc(\successorstate,\policy ) } {\successorstate \sim
  \TransitionLaw\psa} $, and the prediction error $\QfncErr = \Qfnc -
\QpiWeak{\policy}$, where $\QpiWeak{\policy}$ is a $Q$-function from
the definition of weak realizability.  The Bellman error can be
re-written as $ \BellmanError{\Qfnc}{\policy}{} =
\BellmanError{\Qfnc}{\policy}{} -
\BellmanError{\QpiWeak{\policy}}{\policy}{} = (\IdentityOperator -
\discount\TransitionOperator{\policy})\QfncErr $.  When realizability
holds, in the linear setting and at the population level, the LSTD
solution seeks to satisfy the projected Bellman equations
\begin{align}
   \innerprodweighted{\TestFunction{}}
                     {\BellmanError{\Qfnc}{\policy}{}}
                     {\DatasetDistributionStateActions}
   = 0,
   \quad
   \text{for all $\TestFunction{} \in \QclassErrCentered{\policy}$}.
\end{align}
In the linear case, $\QclassErrCentered{\policy} $ is the class of
linear functions $\Qclass{\policy}$ used as predictors; when
$\Qclass{\policy}$ is non-linear, we can extend the LSTD method by
using the (nonlinear) error test space
$\TestFunctionClass{\policy} 
= \QclassErrCentered{\policy}
= \{\Qfnc - \QpiWeak{\policy} \}$.  
Since
$\QclassErrCentered{\policy}$ is unknown (as it depends on the weak
solution $\QpiWeak{\policy}$), we choose instead the larger class
\begin{align*}
  \QclassErr{\policy} = \{ \Qfnc - \Qfnc' \mid \Qfnc,\Qfnc' \in
  \Qclass{\policy} \},
\end{align*}
which contains $\QclassErrCentered{\policy}$.  
The resulting approach can be seen as performing a projection of the Bellman operator
$\BellmanError{\Qfnc}{\policy}{}$ into the
error space $\QclassErrCentered{\policy}$, 
much like LSTD does in the linear setting.
However, different from LSTD, our procedure returns confidence intervals 
as opposed to a point estimator.  
This choice of the test space is related to
the Bubnov-Galerkin method~\cite{repin2017one} for linear spaces; it
selects the test space $\TestFunctionClass{\policy}$ to be identical
to the trial space $\QclassErrCentered{\policy}$ that contains all
possible solution errors.
\begin{lemma}[\shortgoodname coefficient from prediction error]
  \label{lem:PredictionError}
For any test function class $\TestFunctionClass{\policy} \supseteq
\QclassErr{\policy}$, we have
\begin{align}
 \label{EqnPredErrorBound}
  \ConcSimple & \leq \max_{\Qfnc \in \Qclass{\policy}} \big \{ \frac{
    \norm{\QfncErr}{\DatasetDistributionStateActions}^2 +
    \TestFunctionReg } { \norm{\1}{\policy}^2 + \TestFunctionReg } \;
  \frac{ \innerprodweighted{\1} {\BellmanError{\Qfnc}{\policy}{}}
    {\policy}^2 } { \innerprodweighted{\QfncErr}
    {\BellmanError{\Qfnc}{\policy}{}}
    {\DatasetDistributionStateActions}^2 } \big \} = \max_{\QfncErr
    \in \QclassErrCentered{\policy}} \big \{ \frac{
    \norm{\QfncErr}{\DatasetDistributionStateActions}^2 +
    \TestFunctionReg } { \norm{\1}{\policy}^2 + \TestFunctionReg } \;
  \frac{ \innerprodweighted{\1} {(\IdentityOperator -
      \discount\TransitionOperator{\policy})\QfncErr} {\policy}^2 } {
    \innerprodweighted{\QfncErr} {(\IdentityOperator -
      \discount\TransitionOperator{\policy})\QfncErr}
                      {\DatasetDistributionStateActions}^2 } \big \}.
\end{align}
\end{lemma}
 
The above coefficient measures the ratio between the Bellman error 
along the distribution of the target policy $\policy$
and that projected 
onto the error space $\QclassErrCentered{\policy}$ defined by $\Qclass{\policy}$.
It is a concentrability coefficient that \emph{always} applies,
as the choice of the test space does not require domain knowledge. 
See~\cref{sec:PredictionError} for the proof,
and \cref{sec:appBubnov} for further comments and insights,
as well as a simplification in the special case of Bellman closure.

 
\subsection{The Bellman test space}
 
\label{sec:DomainKnowledge}
In the prior section we controlled the projected Bellman error.
Another longstanding approach in reinforcement learning 
is to control the Bellman error itself,
for example by minimizing the squared Bellman residual.
In general, this cannot be done if only an offline dataset
is available due to the well known \emph{double sampling} issue.
However, in some cases we can use an helper 
class to try to capture the Bellman error.
Such class needs to be a superset of the class of \emph{Bellman test functions} 
given by
\begin{align}
\label{EqnBellmanTest}
\TestFunctionClassBubnov{\policy} & \defeq \{
\BellmanError{\Qfnc}{\policy}{} \mid \Qfnc \in \Qclass{\policy} \}.
\end{align}
%
Any test class that contains the above
allows us to control the Bellman residual, as we show next.

\begin{lemma}[Bellman Test Functions]
\label{lem:BellmanTestFunctions}
For any test function class $ \TestFunctionClass{\policy}$ that
contains $\TestFunctionClassBubnov{\policy}$, we have
\begin{subequations}
  \begin{align}
    \label{EqnBellBound}
\norm{\BellmanError{\Qfnc}{\policy}{}}{\DatasetDistributionStateActions}
\leq \cOne \sqrt{\frac{\Rad}{\numobs}} \qquad \mbox{for any $\Qfnc
  \in \PopulationFeasibleSet{\policy}(\TestFunctionClass{\policy})$.}
  \end{align}
  Moreover, the {\goodname} is upper bounded as
  \begin{align}
\label{EqnBellBoundConc}    
\ConcSimple & \stackrel{(i)}{\leq} \cOne \sup_{\Qfnc \in
  \Qclass{\policy}} \frac{
  \innerprodweighted{\1}{\BellmanError{\Qfnc}{\policy}{}}{\policy}^2
}{
  \norm{\BellmanError{\Qfnc}{\policy}{}}{\DatasetDistributionStateActions}
  ^2} \stackrel{(ii)}{\leq} \cOne \sup_{\Qfnc \in \Qclass{\policy}}
\frac{ \norm{\BellmanError{\Qfnc}{\policy}{}}{\policy}^2 }{
  \norm{\BellmanError{\Qfnc}{\policy}{}}{\DatasetDistributionStateActions}
  ^2} \stackrel{(iii)}{\leq} \cOne \sup_{\psa}
\frac{\DistributionOfPolicy{\policy}\psa}
      {\DatasetDistributionStateActions\psa}.
  \end{align}
\end{subequations}
\end{lemma}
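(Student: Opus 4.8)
The plan is to exploit that, since $\TestFunctionClassBubnov{\policy} \subseteq \TestFunctionClass{\policy}$, every Bellman residual $\BellmanError{\Qfnc}{\policy}{}$ is itself an admissible test function; informally, we may ``test each $Q$-function against its own residual.'' The bound~\eqref{EqnBellBound} is the crux, and once it is in hand the concentrability bound~\eqref{EqnBellBoundConc} follows by elementary manipulations.

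To prove~\eqref{EqnBellBound}, I would fix $\Qfnc \in \PopulationFeasibleSet{\policy}(\TestFunctionClass{\policy})$ and apply the defining constraint of the population feasible set with the particular choice $\TestFunction{} = \BellmanError{\Qfnc}{\policy}{}$, which is legal because $\BellmanError{\Qfnc}{\policy}{} \in \TestFunctionClassBubnov{\policy} \subseteq \TestFunctionClass{\policy}$. Writing $a \defeq \norm{\BellmanError{\Qfnc}{\policy}{}}{\DatasetDistributionStateActions}$, the numerator of the constraint is $\innerprodweighted{\BellmanError{\Qfnc}{\policy}{}}{\BellmanError{\Qfnc}{\policy}{}}{\DatasetDistributionStateActions} = a^2$ and the self-normalizing denominator is $\sqrt{a^2 + \regpar}$, so the constraint reads $a^2 / \sqrt{a^2 + \regpar} \le \sqrt{4\Rad/\numobs}$. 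I would then solve this self-normalized inequality for $a$ by a two-case argument: if $a^2 \le \regpar$ then $a \le \sqrt{\regpar} \asymp \sqrt{\Rad/\numobs}$ by the choice $\regpar \asymp \Rad/\numobs$ in~\eqref{EqnRadChoice}; otherwise $a^2 > \regpar$ gives $\sqrt{a^2 + \regpar} \le \sqrt{2}\, a$, hence $a/\sqrt{2} \le a^2/\sqrt{a^2+\regpar} \le \sqrt{4\Rad/\numobs}$. In either case $a \le \cOne \sqrt{\Rad/\numobs}$, with the factor $4$ and the regularizer absorbed into the universal constant $\cOne$.

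For inequality~(i) of~\eqref{EqnBellBoundConc}, I would feed~\eqref{EqnBellBound} into the definition~\eqref{EqnConcSimple} of $\ConcSimple$. Concretely, for every $\Qfnc \in \PopulationFeasibleSet{\policy}$ the just-proved bound gives $\frac{\Rad}{\numobs} \ge \cOne^{-2}\norm{\BellmanError{\Qfnc}{\policy}{}}{\DatasetDistributionStateActions}^2$; substituting this lower bound into the denominator, using $1 + \regpar \ge 1$, and recalling $|\PolComplex| = \innerprodweighted{\1}{\BellmanError{\Qfnc}{\policy}{}}{\policy}$ shows that the maximum defining $\ConcSimple$ is at most $\cOne$ times the ratio $\innerprodweighted{\1}{\BellmanError{\Qfnc}{\policy}{}}{\policy}^2 / \norm{\BellmanError{\Qfnc}{\policy}{}}{\DatasetDistributionStateActions}^2$; enlarging the feasible set $\PopulationFeasibleSet{\policy}$ to $\Qclass{\policy}$ in the supremum then yields~(i).

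Finally, inequalities~(ii) and~(iii) are dispatched pointwise in $\Qfnc$ and require no probabilistic input. For~(ii), Cauchy--Schwarz (equivalently Jensen, since $\norm{\1}{\policy}^2 = 1$) gives $\innerprodweighted{\1}{\BellmanError{\Qfnc}{\policy}{}}{\policy}^2 \le \norm{\BellmanError{\Qfnc}{\policy}{}}{\policy}^2$. For~(iii), a change of measure from $\DistributionOfPolicy{\policy}$ to $\DatasetDistributionStateActions$ gives, for $g = \BellmanError{\Qfnc}{\policy}{}$, the inequality $\norm{g}{\policy}^2 = \int g^2\, \frac{\DistributionOfPolicy{\policy}}{\DatasetDistributionStateActions}\, d\DatasetDistributionStateActions \le \big(\sup_{\psa} \tfrac{\DistributionOfPolicy{\policy}\psa}{\DatasetDistributionStateActions\psa}\big)\, \norm{g}{\DatasetDistributionStateActions}^2$, so dividing by $\norm{g}{\DatasetDistributionStateActions}^2$ and taking the supremum over $\Qfnc$ gives~(iii). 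The only genuinely nontrivial step is the self-normalized manipulation behind~\eqref{EqnBellBound}; the main care there is bookkeeping the regularizer $\regpar$ and radius $\Rad$ into the single universal constant $\cOne$.
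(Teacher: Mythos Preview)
Your proposal is correct and follows essentially the same approach as the paper: you test $\Qfnc$ against its own Bellman residual, resolve the self-normalized inequality $a^2/\sqrt{a^2+\regpar}\le\sqrt{\Rad/\numobs}$ by the same two-case split on $a^2$ versus $\regpar$, and then derive~(i)--(iii) via the definition of $\ConcSimple$, Jensen, and a change of measure, exactly as the paper does. The only cosmetic differences are your explicit tracking of the factor~$4$ in the radius and your explicit mention of enlarging the supremum from $\PopulationFeasibleSet{\policy}$ to $\Qclass{\policy}$.
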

\noindent See~\cref{sec:BellmanTestFunctions} for the proof of this
claim.

Consequently, whenever the test class includes the Bellman test
functions, the {\goodname} is at most the ratio between the squared
Bellman residuals along the data generating distribution and the
target distribution.  
%
%
If Bellman closure holds, then the
prediction error space $\QclassErr{\policy}$ introduced in
\cref{sec:ErrorTestSpace} contains the Bellman test functions:
for $\Qfnc \in \Qclass{\policy}$, we can write
$
  \BellmanError{\Qfnc}{\policy}{} = \Qfnc -
  \BellmanEvaluation{\policy}\Qfnc 
  \in \QclassErr{\policy}
$.
This fact allows us to recover a result in the
recent paper~\cite{xie2021bellman} in the special case of Bellman closure,
although the approach presented here is more general.

\subsection{Combining test spaces}
\label{sec:MultipleRobustness}

Often, it is natural to construct a test space that is a union of
several simpler classes. 
A simple but valuable observation is that the resulting procedure inherits
the best of the \shortgoodname coefficients.  
Suppose that we are given a collection 
$\{ \TestFunctionClass{\policy}_m
\}_{m=1}^M$ of $M$ different test function classes, and define the
union $\TestFunctionClass{\policy} = \bigcup_{m=1}^M
\TestFunctionClass{\policy}_m$.  For each $m = 1, \ldots, M$, let
$\ConcSimple_m$ be the \shortgoodname coefficient defined by the
function class $\TestFunctionClass{\policy}_m$ and radius $\Rad$, and
let $\ConcSimple(\TestFunctionClass{})$ be the \shortgoodname
coefficient associated with the full class.  Then we have the
following guarantee:
\begin{lemma}[Multiple test classes]
  \label{prop:MultipleRobustness}
$
  \label{EqnMinBound}
\ConcSimple(\TestFunctionClass{}) \leq \min_{m = 1, \ldots, M}
\ConcSimple_m.
$
\end{lemma}
\noindent This guarantee is a straightforward consequence of our
construction of the feasibility sets: in particular, we have
$\PopulationFeasibleSet{\policy}(\TestFunctionClass{}) = \cap_{m =1}^M
\PopulationFeasibleSet{\policy}(\TestFunctionClass{}_m)$, and
consequently, by the variational definition of the {\goodname}
$\ConcSimple(\TestFunctionClass{})$ as optimization over
$\PopulationFeasibleSet{\policy}(\TestFunctionClass{})$, the
bound~\eqref{EqnMinBound} follows.  In words, when multiple test
spaces are combined, then our algorithms inherit the best (smallest)
\shortgoodname coefficient over all individual test spaces.  While
this behavior is attractive, one must note that there is a statistical
cost to using a union of test spaces: the choice of $\Rad$ scales as a
function of $\TestFunctionClass{}$ via its metric entropy.  This
increase in $\Rad$ must be balanced with the benefits of using
multiple test spaces.\footnote{For space reasons, we defer to
\cref{sec:IS2BC} an application in which we construct a test function
space as a union of subclasses, and thereby obtain a method that
automatically leverages Bellman closure when it holds, falls back to
importance sampling if closure fails, and falls back to a worst-case
bound in general.}


\section{Linear Setting}
\label{sec:Linear}

In this section, we turn to a detailed analysis of our estimators
using function classes that are linear in a feature map.  
Let $\LinPhi: \StateSpace \times \ActionSpace \rightarrow \R^\dim$ be
a given feature map, and consider linear expansions $g_{\CriticPar{}}
\psa \defeq \inprod{\CriticPar{}}{\LinPhi \psa} \; = \; \sum_{j=1}^d
\CriticPar{j} \LinPhi_j \psa$.  The class of \emph{linear functions}
takes the form
\begin{align}
\label{EqnLinClass}  
  \LinSpace & \defeq \{ \psa \mapsto g_{\CriticPar{}} \psa \mid
  \CriticPar{} \in \R^{\dim}, \; \norm{\CriticPar{}}{2} \leq 1 \}.
\end{align}
Throughout our analysis, we assume that $\norm{\LinPhi(\state,
  \action)}{2} \leq 1$ for all state-action pairs.

Following the approach in \cref{sec:ErrorTestSpace},
which is based on the LSTD method,
we should choose the test function class $\TestFunctionClass{\policy} = \LinClass$,
as in the linear case the prediction error is linear.

In order to obtain a computationally efficient implementation, we need
to use a test class that is a ``simpler'' subset of $\LinClass$.  In
particular, for linear functions, it is not hard to show that the
estimates $\VminEmp{\policy}$ and $\VmaxEmp{\policy}$ from
equation~\eqref{eqn:ConfidenceIntervalEmpirical} can be computed by
solving a quadratic program, with two linear constraints for each test
function.  (See~\cref{sec:LinearConfidenceIntervals} for the details.)
Consequently, the computational complexity scales linearly with the
number of test functions.  Thus, if we restrict ourselves to a finite
test class contained within $\LinClass$, we will obtain a
computationally efficient approach.  

\subsection{A computationally friendly test class and \shortgoodname coefficients}

Define the empirical covariance matrix $\CovarianceEmpiricalStandard =
\frac{1}{\nSamples} \SumOverSamples \LinPhiEmpirical{\iSample}
\LinPhiEmpirical{\iSample}^T$ \mbox{where $\LinPhiEmpirical{\iSample}
  \defeq \LinPhi(\state_\iSample,\action_\iSample)$.}  Let
$\{\EigenVectorEmpirical{j} \}_{j=1}^\dim$ be the eigenvectors of
empirical covariance matrix $\CovarianceEmpiricalStandard$, and
suppose that they are normalized to have unit $\ell_2$-norm.  We use
these normalized eigenvectors to define the finite test class
\begin{align}
\label{eqn:LinTestFunction}
\TestFunctionClassRandom{\policy} \defeq \{ f_j, j = 1, \ldots, \dim
\} \quad \mbox{where $f_j \psa \defeq
  \inprod{\EigenVectorEmpirical{j}}{\LinPhi \psa}$}
\end{align}
A few observations are in order:
\bcar
\item This test class has only $d$ functions, so that our QP
  implementation has $2 \dim$ constraints, and can be solved in
  polynomial time. (Again, see \cref{sec:LinearConfidenceIntervals}
  for details.)
\item Since $\TestFunctionClassRandom{\policy}$ is a subset of
  $\LinClass$ the choice of radius $\Rad = c(
  \frac{\dim}{\numobs} + \log 1/\delta)$ is valid for some constant $c$.
\ecar

\newcommand{\ActConc}{\ConcSimple(\TestFunctionClassRandom{\policy})}


\paragraph{Concentrability} When weak Bellman closure does not hold,
then our analysis needs to take into account how errors propagate via
the dynamics.  In particular, we define the \emph{next-state feature
extractor} $\LinPhiBootstrap{\policy} \psa \defeq
\Expecti{\LinPhi(\successorstate,\policy)}{\successorstate \sim
  \TransitionLaw\psa}$, along with the population covariance matrix
$\CovarianceStandard \defeq \E_\mudist \big[\LinPhi \psa \LinPhi^\top
  \psa \big]$, and its $\regpar$-regularized version
$\CovarianceStandardReg \defeq \CovarianceStandard + \TestFunctionReg
\Identity$.  We also define the matrices
\begin{align*}
\CovarianceBootstrap{\policy} \defeq
\Expecti{[\LinPhi(\LinPhiBootstrap{\policy})^\top]}{
  \DatasetDistributionStateActions}, \quad
\CovarianceWithBootstrapReg{\policy} \defeq
(\CovarianceStandardReg^{\frac{1}{2}} - \discount
\CovarianceStandardReg^{-\frac{1}{2}} \CovarianceBootstrap{\policy}
)^\top (\CovarianceStandardReg^{\frac{1}{2}} - \discount
\CovarianceStandardReg^{-\frac{1}{2}} \CovarianceBootstrap{\policy}).
\end{align*}
The matrix $\CovarianceBootstrap{\policy}$ is the cross-covariance
between successive states, whereas the matrix
$\CovarianceWithBootstrapReg{\policy}$ is a suitably renormalized and
symmetrized version of the matrix $\CovarianceStandard^{\frac{1}{2}} -
\discount \CovarianceStandard^{-\frac{1}{2}}
\CovarianceBootstrap{\policy}$, which arises naturally from the policy
evaluation equation.
We refer to quantities that contain evaluations at the next-state
(e.g., $\LinPhiBootstrap{\policy}$) as bootstrapping terms, and now
bound the \shortgoodname coefficient in the presence of such terms:
\begin{proposition}[\shortgoodname bounds with bootstrapping]
\label{prop:LinearConcentrability}
Under weak realizability, we have
\begin{align}
\label{EqnOPCBootstrap}  
  \ActConc & \leq c \; \dim \norm{\Expecti{[\LinPhi - \discount
        \LinPhiBootstrap{\policy}]}{\policy}}
           {(\CovarianceWithBootstrapReg{\policy})^{-1}}^2 \qquad
           \mbox{with probability at least $1-\FailureProbability$.}
\end{align}
\end{proposition}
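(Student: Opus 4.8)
The plan is to run the same quadratic-form argument used for the closure case in \cref{prop:LinearConcentrabilityBellmanClosure}, but to carry the bootstrapping operator $\IdentityOperator - \discount\TransitionOperator{\policy}$ through every step instead of the identity. Write $\QfncErr = \Qfnc - \QpiWeak{\policy}$ for the prediction error of a feasible predictor $\Qfnc$, where $\QpiWeak{\policy}$ is a weak solution from \cref{asm:WeakRealizability}. Since $\Qclass{\policy} = \LinSpace$ is linear, $\QfncErr$ is linear as well, so $\QfncErr\psa = \inprod{\Delta}{\LinPhi\psa}$ for some $\Delta \in \R^{\dim}$, and hence $((\IdentityOperator - \discount\TransitionOperator{\policy})\QfncErr)\psa = \inprod{\Delta}{\LinPhi\psa - \discount\LinPhiBootstrap{\policy}\psa}$. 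The two weak-realizability conditions let me subtract $\QpiWeak{\policy}$ at no cost: testing against $\1$ along $\policy$ gives $\PolComplex = \Expecti{(\IdentityOperator - \discount\TransitionOperator{\policy})\QfncErr}{\policy} = \inprod{\Delta}{v}$ with $v \defeq \Expecti{\LinPhi - \discount\LinPhiBootstrap{\policy}}{\policy}$, while testing against the eigenvector test function $\TestFunction{\iConstraint}\psa = \inprod{\EigenVectorEmpirical{\iConstraint}}{\LinPhi\psa}$ along $\DatasetDistributionStateActions$ gives $\innerprodweighted{\TestFunction{\iConstraint}}{\BellmanError{\Qfnc}{\policy}{}}{\DatasetDistributionStateActions} = \EigenVectorEmpirical{\iConstraint}^\top(\CovarianceStandard - \discount\CovarianceBootstrap{\policy})\Delta$.

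Next I would feed these identities into the constraint-summation computation (structurally identical to the one for \cref{prop:LinearConcentrabilityBellmanClosure}, but with $\CovarianceStandard - \discount\CovarianceBootstrap{\policy}$ in place of $\CovarianceStandard$). On the event of \cref{thm:NewPolicyEvaluation} the empirical and population normalizers are equivalent, so each of the $\dim$ feasibility constraints reads $|\EigenVectorEmpirical{\iConstraint}^\top(\CovarianceStandard - \discount\CovarianceBootstrap{\policy})\Delta| \lesssim \frac{\ConfidenceInterval{\FailureProbability}}{\sqrt{\nSamples}}\sqrt{\EigenValueEmpirical{\iConstraint} + \TestFunctionReg{}}$. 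Squaring, dividing by $\EigenValueEmpirical{\iConstraint} + \TestFunctionReg{}$, and summing over $\iConstraint = 1, \dots, \dim$ collapses the eigenbasis into $y^\top(\CovarianceEmpiricalStandard + \TestFunctionReg{}\Identity)^{-1} y$ with $y \defeq (\CovarianceStandard - \discount\CovarianceBootstrap{\policy})\Delta$; invoking \cref{lem:CovarianceConcentration} to replace $\CovarianceEmpiricalStandard + \TestFunctionReg{}\Identity$ by $\CovarianceStandardReg$ and completing the square (the cross term is absorbed using $\TestFunctionReg{} \lesssim \ConfidenceInterval{\FailureProbability}^2/\nSamples$) produces $\dim\frac{\ConfidenceInterval{\FailureProbability}^2}{\nSamples} \gtrsim \norm{y + \TestFunctionReg{}\Delta}{\CovarianceStandardReg^{-1}}^2$.

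The one step genuinely specific to the bootstrapping setting is the algebraic identity that I would verify at this point. Since $y + \TestFunctionReg{}\Delta = (\CovarianceStandardReg - \discount\CovarianceBootstrap{\policy})\Delta$, expanding $(\CovarianceStandardReg - \discount\CovarianceBootstrap{\policy})^\top \CovarianceStandardReg^{-1}(\CovarianceStandardReg - \discount\CovarianceBootstrap{\policy})$ and simplifying with $\CovarianceStandardReg^{1/2}\CovarianceStandardReg^{-1/2} = \Identity$ returns exactly $\CovarianceWithBootstrapReg{\policy}$, so that $\norm{y + \TestFunctionReg{}\Delta}{\CovarianceStandardReg^{-1}}^2 = \norm{\Delta}{\CovarianceWithBootstrapReg{\policy}}^2$ and therefore $\dim\frac{\ConfidenceInterval{\FailureProbability}^2}{\nSamples} \gtrsim \norm{\Delta}{\CovarianceWithBootstrapReg{\policy}}^2$. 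Finally, Cauchy--Schwarz in the $\CovarianceWithBootstrapReg{\policy}$ geometry bounds the numerator of the {\shortgoodname}: $|\PolComplex|^2 = |\inprod{\Delta}{v}|^2 \leq \norm{\Delta}{\CovarianceWithBootstrapReg{\policy}}^2\,\norm{v}{(\CovarianceWithBootstrapReg{\policy})^{-1}}^2 \lesssim \dim\frac{\ConfidenceInterval{\FailureProbability}^2}{\nSamples}\,\norm{v}{(\CovarianceWithBootstrapReg{\policy})^{-1}}^2$. Dividing by the normalization $(1 + \regpar)\frac{\Rad}{\numobs}$ from the definition of $\ConcSimple$, using $\ConfidenceInterval{\FailureProbability}^2 \asymp \Rad$ and $(1 + \regpar)^{-1} \leq 1$, and taking the maximum over feasible $\Qfnc$ yields $\ActConc \lesssim \dim\,\norm{v}{(\CovarianceWithBootstrapReg{\policy})^{-1}}^2$, which is the claim.

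I expect the real difficulty to lie not in this linear algebra but in justifying the two empirical-to-population transitions with the advertised probability $1 - \FailureProbability$: the normalizer equivalence $\sqrt{\norm{\TestFunction{\iConstraint}}{\DatasetDistributionStateActions}^2 + \regpar} \asymp \sqrt{\EigenValueEmpirical{\iConstraint} + \TestFunctionReg{}}$ (from the event of \cref{thm:NewPolicyEvaluation}) and the spectral domination $\CovarianceEmpiricalStandard + \TestFunctionReg{}\Identity \succeq c\,\CovarianceStandardReg$ of \cref{lem:CovarianceConcentration}. Both rely on the regularizer $\TestFunctionReg{} = 4\Rad/\numobs$ being large enough to dominate the fluctuations of $\CovarianceEmpiricalStandard$ about $\CovarianceStandard$; confirming that this choice suffices, and that $\CovarianceWithBootstrapReg{\policy}$ is invertible so the final norm is well defined, is the main bookkeeping burden of the argument.
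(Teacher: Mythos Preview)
Your proposal is correct and follows essentially the same route as the paper: the paper packages the constraint-summation/eigenbasis-collapse/square-completion computation into a reusable macro that is instantiated once for the closure case and once here with $(\CovarianceStandard - \discount\CovarianceBootstrap{\policy})$ in place of $\CovarianceStandard$, and then applies Cauchy--Schwarz in the $\CovarianceWithBootstrapReg{\policy}$ geometry exactly as you do. The algebraic identity $(\CovarianceStandardReg - \discount\CovarianceBootstrap{\policy})^\top\CovarianceStandardReg^{-1}(\CovarianceStandardReg - \discount\CovarianceBootstrap{\policy}) = \CovarianceWithBootstrapReg{\policy}$ is indeed the only new ingredient beyond the closure proof, and you have it right via the factorization $\CovarianceStandardReg - \discount\CovarianceBootstrap{\policy} = \CovarianceStandardReg^{1/2}(\CovarianceStandardReg^{1/2} - \discount\CovarianceStandardReg^{-1/2}\CovarianceBootstrap{\policy})$.
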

\noindent See~\cref{sec:LinearConcentrability} for the proof. 
The bound~\eqref{EqnOPCBootstrap} takes a familiar form,
as it involves the same matrices used to define the LSTD
solution. This is expected, as our approach here is essentially
equivalent to the LSTD method; the difference is that
LSTD only gives a point estimate as
opposed to the confidence intervals that we present here;
however, they are both derived from the same principle, 
namely from the Bellman equations projected along the predictor (error) space.

The bound quantifies how the feature extractor
$\LinPhi$ together with the bootstrapping term
$\LinPhiBootstrap{\policy}$, averaged along the target policy
$\policy$, interact with the covariance matrix with bootstrapping
$\CovarianceWithBootstrapReg{\policy}$.  It is an approximation to the
\shortgoodname coefficient bound derived
in~\cref{lem:PredictionError}.  The bootstrapping terms capture the
temporal difference correlations that can arise in reinforcement
learning when strong assumptions like Bellman closure do not hold.  As
a consequence, such an OPC coefficient being small is a
\emph{sufficient} condition for reliable off-policy prediction.  This
bound on the OPC coefficient always applies, and it reduces to the
simpler one~\eqref{EqnOPCClosure} when
weak Bellman closure holds, with no need to inform the algorithm
of the simplified setting; 
see \cref{sec:LinearConcentrabilityBellmanClosure} for the proof. 

\begin{proposition}[\shortgoodname bounds under weak Bellman Closure]
\label{prop:LinearConcentrabilityBellmanClosure}
Under Bellman closure, we have
\begin{align}
\label{EqnOPCClosure}  
\ActConc & \leq c \; \dim \norm{\Expecti{
    \LinPhi}{\policy}}{\CovarianceStandardReg^{-1}}^2 \qquad
\mbox{with probability at least $1 - \delta$.}
\end{align}
\end{proposition}


\subsection{Actor-critic scheme for policy optimization}
\label{sec:LinearApproximateOptimization}

Having described a practical procedure to compute $\VminEmp{\policy}$,
we now turn to the computation of the max-min estimator for policy
optimization. We define the \emph{soft-max policy class}
\begin{align}
\label{eqn:SoftMax}
	\LinPolicyClass \defeq \Big\{ \psa \mapsto
        \frac{e^{\innerprod{\LinPhi\psa}{\ActorPar{}}}}{\sum_{\successoraction
            \in \ActionSpace}
          e^{\innerprod{\LinPhi(\state,\successoraction)}{\ActorPar{}}}}
        \mid \norm{\ActorPar{}}{2} \leq \ActorParNorm, \; \ActorPar{}
        \in \R^{\dim} \Big\}.
\end{align}
In order to compute the max-min solution~\eqref{eqn:MaxMinEmpirical}
over this policy class, we implement an actor-critic method, in which
the actor performs a variant of mirror descent.\footnote{Strictly
speaking, it is mirror ascent, but we use the conventional
terminology.}
\bcar
\item At each iteration $\iter = 1, \ldots, \ActorParNorm$, the policy
  $\ActorPolicy{\iter} \in \LinPolicyClass$ can be identified with a
  parameter $\ActorPar{\iter} \in \R^\dim$.  The sequence is
  initialized with $\ActorPar{1} = 0$.  
\item Using the finite test function class~\eqref{eqn:LinTestFunction}
  based on normalized eigenvectors, the pessimistic value estimate
  $\VminEmp{\ActorPolicy{\iter}}$ is computed by solving a quadratic
  program, as previously described.  This computation returns the
  weight vector $\CriticPar{\iter}$ of the associated optimal
  action-value function.
\item Using the action-value vector $\CriticPar{\iter}$, we update the
  actor's parameter as
\begin{align}
\label{eqn:LinearActorUpdate}
	\ActorPar{\iter+1} = \ActorPar{\iter} + \LearningRate
        \CriticPar{\iter} \qquad \mbox{where $\LearningRate =
          \sqrt{\frac{\log\abs{\ActionSpace}}{2\nIter}}$ is a stepsize
          parameter. }
\end{align}
\ecar
We now state a guarantee on the behavior of this procedure,
based on two \shortgoodname coefficients:
\begin{align}
\label{EqnNewConc}  
\ConcentrabilityGenericSub{\comparator}{1} = \dim \l
\norm{\Expecti{\LinPhi}{\comparator}}{\CovarianceStandardReg^{-1}}^2,
\quad \mbox{and} \quad \ConcentrabilityGenericSub{\comparator}{2} =
\dim \; \sup_{\policy \in \PolicyClass} \Big \{
\norm{\Expecti{[\LinPhi - \discount
      \LinPhiBootstrap{\policy}]}{\comparator}}
     {(\CovarianceWithBootstrapReg{\policy})^{-1}}^2 \Big \}.
\end{align}
Moreover, in making the following assertion, we assume that every weak
solution $\QpiWeak{\policy}$ can be evaluated against the distribution
of a comparator policy $\comparator \in \PolicyClass$, i.e.,
$\innerprodweighted{\1}
{\BellmanError{\QpiWeak{\policy}}{\policy}{}}{\comparator} = 0$ for
all $\policy \in \PolicyClass $.  (This assumption is still weaker
than strong realizability).
\begin{theorem}[Approximate Guarantees for Linear Soft-Max Optimization]
  \label{thm:LinearApproximation}
  Under the above conditions, running the procedure for $\nIter$
  rounds returns a policy sequence
  $\{\ActorPolicy{\iter}\}_{\iter=1}^\nIter$ such that, for any
  comparator policy $\comparator\in \PolicyClass$,
  \begin{align}
    \label{EqnMirrorBound}
\frac{1}{\nIter} \sumiter \big \{ \Vpi{\comparator} -
\Vpi{\ActorPolicy{\iter}} \big \} & \leq \frac{\cOne}{1-\discount}
\biggr \{ \underbrace{\sqrt{\frac{\log \abs{\ActionSpace}}{\nIter}}
  \vphantom{\sqrt{\ConcentrabilityGenericSub{\comparator}{\cdot}\frac{\dim
        \log(\nSamples\nIter) + \log
        \frac{\nSamples}{\FailureProbability} }{\nSamples}}}
}_{\text{Optimization error}} +
\underbrace{\sqrt{\ConcentrabilityGenericSub{\comparator}{\cdot}
    \frac{\dim \log(\nSamples\nIter) + \log \big(\frac{\nSamples
      }{\FailureProbability}\big) }{\nSamples}}}_{\text{Statistical
    error}} \biggr \},
\end{align}
with probability at least $1 - \FailureProbability$.  This bound
always holds with
\mbox{$\ConcentrabilityGenericSub{\comparator}{\cdot} =
  \ConcentrabilityGenericSub{\comparator}{2}$,} and moreover, it holds
with \mbox{$\ConcentrabilityGenericSub{\comparator}{\cdot} =
  \ConcentrabilityGenericSub{\comparator}{1}$} when weak Bellman
closure is in force.
\end{theorem}
\noindent See~\cref{sec:LinearApproximation} for the proof.  Whenever
Bellman closure holds, the result automatically inherits the more
favorable concentrability coefficient
$\ConcentrabilityGenericSub{\comparator}{2}$, as originally derived in
\cref{prop:LinearConcentrabilityBellmanClosure}. The resulting bound
is only $\sqrt{\dim}$ worse than the lower bound recently established
in the paper~\cite{zanette2021provable}. However, the method proposed
here is robust, in that it provides guarantees even when Bellman
closure does not hold.  In this case, we have a guarantee in terms of
the \shortgoodname coefficient
$\ConcentrabilityGenericSub{\comparator}{1}$.  Note that it is a
uniform version of the one derived previously
in~\cref{prop:LinearConcentrability}, in that there is an additional
supremum over the policy class.  This supremum arises due to the use
of gradient-based method, which implicitly searches over policies in
bootstrapping terms; see \cref{sec:LinearDiscussion} for a more
detailed discussion of this issue.

\section*{Acknowledgment}
AZ was partially supported by NSF-FODSI grant 2023505. In addition, this work was partially supported by NSF-DMS grant 2015454, NSF-IIS grant 1909365, as well as Office of Naval Research grant DOD-ONR-N00014-18-1-2640 to MJW.
The authors are grateful to Nan Jiang and Alekh Agarwal 
for pointing out further connections
with the existing literature,
as well as to the reviewers for pointing out clarity issues.


{\small{
\bibliographystyle{alpha}
\bibliography{rl}
}}

\newpage
\tableofcontents

\newpage
\section{Additional Discussion and Results}

\subsection{Bellman Residual Orthogonalization}
\label{sec:appBRO}
Suppose that our goal is to estimate the action-value function
$\Qpi{\policy}$ of a given policy $\policy$.  This function is known
to be a fixed point of the Bellman evaluation operator
$\BellmanEvaluation{\policy}$ associated with the policy $\policy$.
Thus, when the MDP is known, one option is to (approximately) solve
the Bellman evaluation equations \mbox{$\Qfnc(\state, \action) =
  (\BellmanEvaluation{\policy}\Qfnc)(\state, \action)$} for all
state-action pairs. However, even if function approximation for
$\Qfnc$ is implemented, it is still difficult to directly solve these
equations if the state-action space is sufficiently complex.

This observation motivates the strategy taken in this paper: instead
of enforcing the Bellman equations for all state-action pairs, suppose
that we do so only in an average sense, and with respect to a certain
set of functions.  More formally, a \emph{test function} is a mapping
from the state-action space to the real line; any such function serves
to enforce the Bellman equations in an average sense in the following
way.  Let $\TestFunctionClass{\policy}$ denote some user-prescribed
class of test functions, which we refer to as the \emph{test space}.
Then for a given measure $\DatasetDistributionStateActions$, we
require only that the action-value function $\Qpi{\policy}$ satisfy
the integral constraints
\begin{align}
\label{eqn:WeakFormulation}
\smlinprod{\TestFunction{}}{\Qfnc - \BellmanEvaluation{\policy}
  (\Qfnc)}_{\DatasetDistributionStateActions} \defeq \int
\TestFunction{}\psa [\Qfnc\psa - (\BellmanEvaluation{\policy}
  \Qfnc)\psa] d\DatasetDistributionStateActions = 0, \qquad \mbox{for
  all $\TestFunction{} \in \TestFunctionClass{\policy}$.}
\end{align}
We refer to this design principle as \emph{Bellman residual
orthogonalization}, because it requires the Bellman error function to
be orthogonal to a set of test functions, as measured under the
$L^2(\DatasetDistributionStateActions)$ inner product.  Of course, by
enlarging the test space $\TestFunctionClass{\policy}$, the Bellman
error is required to be orthogonal to more test functions, and it will
ultimately be zero if enough test functions are added as constraints.
But at the same time, as shown by our analysis, any such enlargement
has both computational and statistical costs, so there are tradeoffs
to be understood.

In numerical analysis, especially in solving partial differential
equations, the design principle~\eqref{eqn:WeakFormulation} is called
the weak or variational formulation (e.g.,~\cite{evans2010partial}),
and its solutions are referred to as weak solutions.  Here we are
advocating a \emph{weak formulation} of the Bellman equations.  Of
course, the constraints~\eqref{eqn:WeakFormulation} are necessary but
not sufficient: the \emph{weak (Bellman) solutions} need not solve the
Bellman equations.  However, whenever we need to learn based on a
limited dataset, it is unreasonable to satisfy the Bellman equations
everywhere; instead, by choosing the test space appropriately, we can
seek to satisfy the Bellman equations over regions of the state-action
space that are most important.  In some cases, the
formulation~\eqref{eqn:WeakFormulation} can be fruitfully viewed as a
type of Galerkin approximation
(e.g.,~\cite{galerkin1915series,fletcher1984computational}) to the
Bellman equations.  For example, when both the test functions and
$Q$-value functions belong to some linear space (and the empirical
constraints are enforced exactly), then the weak formulation and
Galerkin approximation lead to the least-squares temporal difference
(LSTD) estimator; this connection between Galerkin methods and LSTD
has been noted in past work by Yu and Bertsekas~\cite{YuBer10}.  In
this paper, our goal is to understand the weak
formulation~\eqref{eqn:WeakFormulation} in a broader sense for general
test and predictor classes.

\subsection{Comparison with Weight Learning Methods}
\label{sec:WeightLearning}
The work closest to ours is \cite{jiang2020minimax}.
They also use an auxiliary weight function class, 
which is comparable to our test class. 
However, the test class is used in different ways;
we compare them in this section at the population level.\footnote{
The empirical estimator in \cite{jiang2020minimax} 
does not take into account the `alignment' of each 
weight function with respect to the dataset,
which we do through self-normalization and regularization
in the construction of the empirical estimator.
This precludes obtaining the same type of strong 
finite time guarantees that we are able to derive here.}
Let us assume that weak realizability holds and that
$\TestFunctionClass{}$ is symmetric, i.e., 
if $\TestFunction{} \in \TestFunctionClass{}$ 
then $-\TestFunction{} \in \TestFunctionClass{}$ as well.                                
At the population level,
our program seeks to solve      
\begin{align}
\label{eqn:PopProgComparison}
      	\sup_{\Qfnc \in \Qclass{\policy}} & \Expecti{\Qfnc(\state,\policy)}{\state \sim \startdistribution} 
      	\quad \text{s.t. } \quad 
      	\sup_{\TestFunction{} \in \TestFunctionClass{}}
      	\innerprodweighted{\TestFunction{}}{\BellmanError{\Qfnc}{\policy}{}}{\DatasetDistributionStateActions} = 0, 
      \end{align}
      \def\weight{w}
      which is equivalent for any $\weight \in \TestFunctionClass{}$ to
       \begin{align*}
      	\sup_{\Qfnc \in \Qclass{\policy}} & \Expecti{\Qfnc(\state,\policy)}{\state \sim \startdistribution} 
      	- \horizon \innerprodweighted{\weight}{\BellmanError{\Qfnc}{\policy}{}}{\DatasetDistributionStateActions}
      	\quad \text{s.t. } \quad 
      	\sup_{\TestFunction{} \in \TestFunctionClass{}}
      	\innerprodweighted{\TestFunction{}}{\BellmanError{\Qfnc}{\policy}{}}{\DatasetDistributionStateActions} = 0.
      \end{align*}
      Removing the constraints leads to the upper bound
            \begin{align*}
      	\sup_{\Qfnc \in \Qclass{\policy}} & \Expecti{\Qfnc(\state,\policy)}{\state \sim \startdistribution} 
      	- \horizon \innerprodweighted{\weight}{\BellmanError{\Qfnc}{\policy}{}}{\DatasetDistributionStateActions}.
      \end{align*}
      Since this is a valid upper bound for any $\weight{} \in \TestFunctionClass{}$, 
      minimizing over $\weight$ must still yield an upper bound,
      which reads
      \begin{align*}
      	\inf_{\weight \in \TestFunctionClass{}}\sup_{\Qfnc \in \Qclass{\policy}} & 
      	\Expecti{\Qfnc(\state,\policy)}{\state \sim \startdistribution}  
      	- \horizon \innerprodweighted{\weight}{\BellmanError{\Qfnc}{\policy}{}}{\DatasetDistributionStateActions}.
      \end{align*}
      This is the population program for ``weight learning'',
      as described in \cite{jiang2020minimax}.
      It follows that Bellman residual orthogonalization always produces tighter confidence intervals than ``weight learning''
      at the population level.
      
      Another interesting comparison is with ``value learning'', 
      also described in \cite{jiang2020minimax}.
      In this case, assuming symmetric $\TestFunctionClass{}$, 
	  we can equivalently express the population program \eqref{eqn:PopProgComparison}
	  using a Lagrange multiplier as follows
      \begin{align}
      \label{eqn:LagProgComparison}
      		\sup_{\Qfnc \in \Qclass{\policy}} & \Expecti{\Qfnc(\state,\policy)}{\state \sim \startdistribution} 
      	- 
      	\sup_{\lambda \geq 0 ,\TestFunction{} \in \TestFunctionClass{}}
      	\lambda\innerprodweighted{\TestFunction{}}{\BellmanError{\Qfnc}{\policy}{}}{\DatasetDistributionStateActions}. 
      \end{align}
      Rearranging we obtain
         \begin{align*}
      		\sup_{\Qfnc \in \Qclass{\policy}} 
      		\inf_{\lambda \geq 0 ,\TestFunction{} \in \TestFunctionClass{}}& \Expecti{\Qfnc(\state,\policy)}{\state \sim \startdistribution} 
      	- 
      	\lambda\innerprodweighted{\TestFunction{}}{\BellmanError{\Qfnc}{\policy}{}}{\DatasetDistributionStateActions}. 
      \end{align*}
      The ``value learning'' program proposed in \cite{jiang2020minimax} has a similar formulation to ours but differs in two key aspects.
      The first---and most important---is that \cite{jiang2020minimax} ignores the Lagrange multiplier;
      this means ``value learning'' is not longer associated to 
      a constrained program. 
      While the Lagrange multiplier could be ``incorporated''
      into the test class $\TestFunctionClass{}$, doing so
      would cause the entropy of 
      $\TestFunctionClass{}$ to be unbounded.
      Another point of difference is that ``value learning'' uses such expression with $\lambda = 1$
      to derive the confidence interval \emph{lower bound}, while we use it to construct 
      the confidence interval \emph{upper bound}. 
      While this may seem like a contradiction, 
      we notice that the expression is derived using different assumptions: 
      we assume weak realizability of $\Qfnc$,
      while \cite{jiang2020minimax} 
      assumes realizability of the density ratios between $\DatasetDistributionStateActions$
      and the discounted occupancy measure $\policy$.
           
\subsection{Additional Literature}
\label{sec:Literature}

Here we summarize some additional literature. The efficiency of
off-policy tabular RL has been investigated in the
papers~\cite{yin2020near,yin2020asymptotically,yin2021towards}.  For
empirical studies on offline RL, see the
papers~\cite{laroche2019safe,jaques2019way,wu2019behavior,agarwal2020optimistic,wang2020critic,siegel2020keep,nair2020accelerating,yang2021pessimistic,kumar2021should,buckman2020importance,kumar2019stabilizing,kidambi2020morel,yu2020mopo}.

Some of the classical RL algorithm are presented in the
papers~\cite{munos2003error,munos2005error,antos2007fitted,antos2008learning,farahmand2010error,farahmand2016regularized}.
For a more modern analysis, see~\cite{chen2019information}.  These
works generally make additionally assumptions on top of realizability.
Alternatively, one can use importance sampling
\cite{precup2000eligibility,thomas2016data,jiang2016doubly,farajtabar2018more}.
A more recent idea is to look at the distributions
themselves~\cite{liu2018breaking,nachum2019algaedice,xie2019towards,zhang2020gendice,zhang2020gradientdice,yang2020off,kallus2019efficiently}.

Offline policy optimization with pessimism has been studied in the
papers~\cite{liu2020provably,rashidinejad2021bridging,jin2021pessimism,xie2021bellman,zanette2021provable,yinnear,uehara2021pessimistic}.
There exists a fairly extensive literature on lower bounds with linear
representations, including the two
papers~\cite{zanette2020exponential,wang2020statistical} that
concurrently derived the first exponential lower bounds for the
offline setting, and \cite{foster2021offline} proves that
realizability and coverage alone are insufficient.

In the context of off-policy optimization several works have
investigated methods that assume only realizability of the optimal
policy~\cite{xie2020batch,xie2020Q}.  Related work includes the
papers~\cite{duan2020minimax,duan2021risk,jiang2020minimax,uehara2020minimax,tang2019doubly,nachum2020reinforcement,voloshin2021minimax,hao2021bootstrapping,zhang2022efficient,uehara2021finite,chen2022well,lee2021model}.
Among concurrent works, we note \cite{zhan2022offline}.

\subsection{Definition of Weak Bellman Closure}
\label{sec:appWeakClosure}
\begin{definition}[Weak Bellman Closure]
The Bellman operator $\BellmanEvaluation{\policy}$ is \emph{weakly
closed} with respect to the triple $\big( \Qclass{\policy},
\TestFunctionClass{\policy}, \DatasetDistributionStateActions \big)$
if for any $\Qfnc \in \Qclass{\policy}$, there exists a predictor
$\QpiProj{\policy}{\Qfnc} \in \Qclass{\policy}$ such that
\begin{align}  
\innerprodweighted{\TestFunction{}}{\QpiProj{\policy}{\Qfnc}}{\DatasetDistributionStateActions}
= \innerprodweighted{\TestFunction{}}
{\BellmanEvaluation{\policy}(\Qfnc)}
{\DatasetDistributionStateActions}.
\end{align}
\end{definition}

\newpage
\subsection{Additional results on the concentrability coefficients}
\label{sec:appConc}
\subsubsection{Testing with the identity function}

Suppose that the identity function $\1$ belongs to the test class.
Doing so amounts to requiring that the Bellman error is controlled in
an average sense over all the data.  When this choice is made, we can
derive some generic upper bounds on $\ConcSimple$, which we state and
prove here:
\begin{lemma}
If $\1 \in \TestFunctionClass{\policy}$, then we have the upper bounds
\begin{align}
\label{eqn:InEq}
\ConcSimple & \stackrel{(i)}{\leq} \frac{\max_{\Qfnc \in
    \PopulationFeasibleSet{\policy}} |\PolComplex|^2}{ \max_{\Qfnc \in
    \PopulationFeasibleSet{\policy}} |\PolMuComplex|^2} \;
\stackrel{(ii)}{\leq} \ConcSimple_* \defeq \max_{\Qfnc \in
  \PopulationFeasibleSet{\policy}}
\frac{|\PolComplex|^2}{|\PolMuComplex|^2}.
\end{align}
\end{lemma}
\begin{proof}
Since $\1 \in \TestFunctionClass{}$, the definition of
$\PopulationFeasibleSet{\policy}$ implies that
\begin{align*}
\max_{\Qfnc \in \PopulationFeasibleSet{\policy}} |\PolMuComplex|^2 &
\leq \big( \munorm{\1}^2 + \regpar \big) \frac{\Rad}{\numobs} 
= \big( 1 +
\regpar \big) \frac{\Rad}{\numobs}.
\end{align*}
The upper bound (i) then follows from the definition of $\ConcSimple$.
The upper bound (ii) follows since the right hand side is the maximum ratio.
\end{proof}

Note that large values of $\ConcSimple_*$ (defined in \cref{eqn:InEq}) 
can arise when there exist
$Q$-functions in the set $\PopulationFeasibleSet{\policy}$ that have
low average Bellman error under the data-generating distribution
$\mudist$, but relatively large values under $\policy$.  Of course,
the likelihood of such unfavorable choices of $Q$ is reduced when
we use a larger test function class, which then reduces the size of
$\PopulationFeasibleSet{\policy}$.  However, we pay a price in
choosing a larger test function class, since the
choice~\eqref{EqnRadChoice} of the radius $\Rad$ needed for
\cref{thm:NewPolicyEvaluation} depends on its complexity.



\subsubsection{Mixture distributions}

Now suppose that the dataset consists of a collection of trajectories
collected by different protocols.  More precisely, for each
$\iConstraint = 1, \ldots, \nConstraints$, let $\Dpi{\iConstraint}$ be
a particular protocol for generating a trajectory.  Suppose that we
generate data by first sampling a random index $\IConstraint \in
[\nConstraints]$ according to a probability distribution
$\{\Frequencies{\iConstraint} \}_{\iConstraint=1}^{\nConstraints}$,
and conditioned $\IConstraint = \iConstraint$, we sample $\psai$
according to $\Dpi{\iConstraint}$.  The resulting data follows a
mixture distribution, where we set $\identifier = \iConstraint$ to tag
the protocol used to generate the data. 
To be clear, for each sample $i = 1, \ldots, \nSamples$, 
we sample $\IConstraint$ as described, 
and then draw a single sample $\psai \sim \Dpi{\iConstraint}$ .

Following the intuition given in the previous section, it is natural
to include test functions that code for the protocol---that is, the
binary-indicator functions
\begin{align}
f_\iConstraint \psai & = \begin{cases} 1 & \mbox{if
    $\identifier=\iConstraint$} \\ 0 & \mbox{otherwise.}
\end{cases}
\end{align}
This test function, when included in the weak formulation, enforces
the Bellman evaluation equations for the policy $\policy \in
\PolicyClass$ under consideration along the distribution induced by
each data-generating policy $\Dpi{\iConstraint}$.

\begin{lemma}[Mixture Policy Concentrability]
  \label{lem:MixturePolicyConcentrability}  
Suppose that $\mudist$ is an $\nConstraints$-component mixture, and
that the indicator functions $\{f_\iConstraint
\}_{\iConstraint=1}^\nConstraints$ are included in the test class.
Then we have the upper bounds
\begin{align}
\label{EqnMixturePolicyUpper}  
  \ConcSimple & \stackrel{(i)}{\leq} \frac{1 + \nConstraints
    \regpar}{1 + \regpar} \; \frac{\max \limits_{\Qfnc\in
      \PopulationFeasibleSet{\policy}}
    [\Expecti{\BellmanError{\Qfnc}{\policy}{}}{\policy}]^2} {\max
    \limits_{\Qfnc \in \PopulationFeasibleSet{\policy}}
    \SumOverConstraints \Frequencies{\iConstraint}^2
                        [\Expecti{\BellmanError{\Qfnc}{\policy}{}}{\Dpi{\iConstraint}}]^2}
  \; \stackrel{(ii)}{\leq} \; \frac{1 + \nConstraints \regpar}{1 +
    \regpar} \; \max_{\Qfnc\in \PopulationFeasibleSet{\policy}} \left
  \{ \frac{ [\Expecti{\BellmanError{\Qfnc}{\policy}{}}{\policy}]^2} {
    \SumOverConstraints \Frequencies{\iConstraint}^2
                        [\Expecti{\BellmanError{\Qfnc}{\policy}{}}{\Dpi{\iConstraint}}]^2}
  \right \}.
\end{align}
\end{lemma}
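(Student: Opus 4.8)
The plan is to extract, from membership in $\PopulationFeasibleSet{\policy}$, a lower bound on the normalizing quantity $\tfrac{\Rad}{\nSamples}$ that sits in the denominator of the definition~\eqref{EqnConcSimple} of $\ConcSimple$, and then substitute it back in. First I would evaluate the two ingredients of the $\iConstraint$-th self-normalized constraint for the indicator test function $\TestFunction{\iConstraint}$. Under the mixture model the reference measure $\mudist$ assigns mass $\Frequencies{\iConstraint}$ to the event $\{\identifier=\iConstraint\}$, so $\norm{\TestFunction{\iConstraint}}{\mudist}^2 = \Frequencies{\iConstraint}$; and since conditioning on $\identifier=\iConstraint$ is exactly conditioning on the pair being drawn from $\Dpi{\iConstraint}$, the masked inner product factors as $\innerprodweighted{\TestFunction{\iConstraint}}{\BellmanError{\Qfnc}{\policy}{}}{\mudist} = \Frequencies{\iConstraint}\,\Expecti{\BellmanError{\Qfnc}{\policy}{}}{\Dpi{\iConstraint}}$. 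Absorbing the factor $4$ into $\Rad$ per the convention following~\eqref{eqn:ApproximateBellmanGalerkingEquations}, the $\iConstraint$-th constraint defining $\PopulationFeasibleSet{\policy}$ therefore reads, after squaring,
\begin{align*}
\frac{\Frequencies{\iConstraint}^2\,[\Expecti{\BellmanError{\Qfnc}{\policy}{}}{\Dpi{\iConstraint}}]^2}{\Frequencies{\iConstraint} + \regpar} \;\leq\; \frac{\Rad}{\nSamples} \qquad \text{for every } \QinCpop .
\end{align*}

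Next I would clear denominators and sum this over $\iConstraint \in [\nConstraints]$. Using $\SumOverConstraints \Frequencies{\iConstraint} = 1$ and $\SumOverConstraints \regpar = \nConstraints\regpar$ gives, for every $\QinCpop$,
\begin{align*}
\SumOverConstraints \Frequencies{\iConstraint}^2\,[\Expecti{\BellmanError{\Qfnc}{\policy}{}}{\Dpi{\iConstraint}}]^2 \;\leq\; (1 + \nConstraints\regpar)\,\frac{\Rad}{\nSamples}.
\end{align*}
Because the right-hand side does not depend on $\Qfnc$, I can maximize the left-hand side over $\QinCpop$ and rearrange to obtain $\tfrac{\Rad}{\nSamples} \geq (1+\nConstraints\regpar)^{-1}\max_{\QinCpop}\SumOverConstraints \Frequencies{\iConstraint}^2[\Expecti{\BellmanError{\Qfnc}{\policy}{}}{\Dpi{\iConstraint}}]^2$. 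Substituting this into the definition~\eqref{EqnConcSimple}, whose numerator is $\max_{\QinCpop}|\PolComplex|^2 = \max_{\QinCpop}[\Expecti{\BellmanError{\Qfnc}{\policy}{}}{\policy}]^2$ and whose denominator is $(1+\regpar)\tfrac{\Rad}{\nSamples}$, immediately yields bound~(i), with the prefactor $\tfrac{1+\nConstraints\regpar}{1+\regpar}$ arising exactly from the ratio of the two regularized constants. Bound~(ii) then follows from the elementary fact that a ratio of maxima is dominated by the maximum of the ratio: writing $\Qfnc^\star$ for the maximizer of the numerator and noting that the denominator's maximum is at least its value at $\Qfnc^\star$ gives $\tfrac{\max_\Qfnc A(\Qfnc)}{\max_\Qfnc B(\Qfnc)} \leq \tfrac{A(\Qfnc^\star)}{B(\Qfnc^\star)} \leq \max_\Qfnc \tfrac{A(\Qfnc)}{B(\Qfnc)}$.

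The argument is short, so there is no deep obstacle; the only genuinely delicate point is the bookkeeping around the identifier variable. I would need to verify that $\TestFunction{\iConstraint}$ is an admissible test function (it depends only on $\identifier$, hence is allowed), and that the event $\{\identifier=\iConstraint\}$ recovers precisely $\Dpi{\iConstraint}$, so that both $\norm{\TestFunction{\iConstraint}}{\mudist}^2 = \Frequencies{\iConstraint}$ and the inner-product factorization hold cleanly. A secondary item to track is the role of regularization: it is the $\Frequencies{\iConstraint}+\regpar$ appearing in each denominator that produces $\nConstraints\regpar$ (rather than $\regpar$) after summing, and this replacement of $\regpar$ by $\nConstraints\regpar$ is precisely what distinguishes the mixture bound from the single identity-function bound obtained in the previous subsection.
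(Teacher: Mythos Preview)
Your proposal is correct and follows essentially the same route as the paper: compute $\innerprodweighted{\TestFunction{\iConstraint}}{\BellmanError{\Qfnc}{\policy}{}}{\DatasetDistributionStateActions} = \Frequencies{\iConstraint}\,\Expecti{\BellmanError{\Qfnc}{\policy}{}}{\Dpi{\iConstraint}}$ and $\norm{\TestFunction{\iConstraint}}{\mudist}^2 = \Frequencies{\iConstraint}$, square the constraint, sum over $\iConstraint$ using $\sum_\iConstraint \Frequencies{\iConstraint}=1$ to get the $(1+\nConstraints\regpar)$ factor, and then plug into the definition of $\ConcSimple$; bound~(ii) is in both cases the elementary ratio-of-maxima inequality. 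Your additional commentary on the identifier bookkeeping and the origin of the $\nConstraints\regpar$ term is accurate and adds no extra assumptions.
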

\begin{proof}
From the definition of $\ConcSimple$, it suffices to show that
  \begin{align*}
\max \limits_{\Qfnc \in \PopulationFeasibleSet{\policy}}
\SumOverConstraints
\Frequencies{\iConstraint}^2[\Expecti{\BellmanError{\Qfnc}{\policy}{}}{\Dpi{\iConstraint}}]^2
\leq \frac{\Rad}{\numobs} \; \big(1 + \nConstraints \regpar \big).
\end{align*}
A direct calculation yields
$\innerprodweighted{\TestFunction{\iConstraint}}
{\BellmanError{\Qfnc}{\policy}{}}{\DatasetDistributionStateActions} =
\Expecti{\Indicator \{\identifier = \iConstraint \}
  \BellmanError{\Qfnc}{\policy}{}}{\DatasetDistributionStateActions} =
\Frequencies{\iConstraint}
\Expecti{\BellmanError{\Qfnc}{\policy}{}}{\Dpi{\iConstraint}}$.
Moreover, since each $\TestFunction{\iConstraint}$ belongs to the test
class by assumption, we have the upper bound
$\Big|\Frequencies{\iConstraint}
\Expecti{\BellmanError{\Qfnc}{\policy}{}}{\Dpi{\iConstraint}} \Big|
\leq \sqrt{\frac{\Rad}{\numobs}} \; \sqrt{
  \munorm{\TestFunction{\iConstraint}}^2 + \regpar}$.  Squaring each
term and summing over the constraints yields
\begin{align*}
\SumOverConstraints
\Frequencies{\iConstraint}^2[\Expecti{\BellmanError{\Qfnc}{\policy}{}}{\Dpi{\iConstraint}}]^2
\leq \frac{\Rad}{\numobs} \sum_{\iConstraint=1}^\nConstraints \big(
\munorm{\TestFunction{\iConstraint}}^2 + \regpar \big) =
\frac{\Rad}{\numobs} \big(1 + \nConstraints \regpar \big),
\end{align*}
where the final equality follows since
$\sum_{\iConstraint=1}^{\nConstraints}
\munorm{\TestFunction{\iConstraint}}^2 = 1$.
\end{proof}

As shown by the upper bound, the off-policy coefficient $\ConcSimple$
provides a measure of how the squared-averaged Bellman errors along
the policies $\{ \Dpi{\iConstraint}
\}_{\iConstraint=1}^\nConstraints$, weighted by their probabilities
$\{ \Frequencies{\iConstraint} \}_{\iConstraint=1}^{\nConstraints}$,
transfers to the evaluation policy $\policy$.  Note that the
regularization parameter $\regpar$ decays as a function of the sample
size---e.g., as $1/\nSamples$ in \cref{thm:NewPolicyEvaluation}---the
factor $(1 + \nConstraints \TestFunctionReg)/(1 + \TestFunctionReg)$
approaches one as $\numobs$ increases (for a fixed number
$\nConstraints$ of mixture components).


\subsubsection{Bellman Rank for off-policy evaluation}

In this section, we show how more refined bounds can be obtained
when---in addition to a mixture condition---additional structure is
imposed on the problem.  In particular, we consider a notion similar
to that of Bellman rank~\cite{jiang17contextual}, but suitably
adapted\footnote{ The original definition essentially takes
$\BRPolicyClass$ as the set of all greedy policies with respect to
$\BRQClass$.  Since a dataset need not originate from greedy policies,
the definition of Bellman rank is adapted in a natural way.}  to the
off-policy setting.

Given a policy class $\BRPolicyClass$ and a predictor class
$\BRQClass$, we say that it has Bellman rank is $\dim$ if there exist
two maps $\BRleft{} : \BRPolicyClass \rightarrow \R^\dim$ and
$\BRright{}: \BRQClass \rightarrow \R^\dim$ such that
\begin{align}
\label{eqn:BellmanRank}
\Expecti{\BellmanError{\Qfnc}{\policy}{}}{\policy} =
\smlinprod{\BRleft{\policy}}{\BRright{\Qfnc}}_{\R^d}, \qquad \text{for
  all } \; \policy \in \BRPolicyClass \; \text{and} \; \Qfnc \in
\BRQClass.
\end{align}
In words, the average Bellman error of any predictor $\Qfnc$ along any
given policy $\policy$ can be expressed as the Euclidean inner product
between two $\dim$-dimensional vectors, one for the policy and one for
the predictor.  As in the previous section, we assume that the data is
generated by a mixture of $\nConstraints$ different distributions (or
equivalently policies) $\{\Dpi{\iConstraint}
\}_{\iConstraint=1}^\nConstraints$.  In the off-policy setting, we
require that the policy class $\BRPolicyClass$ contains all of these
policies as well as the target policy---viz.  $\{\Dpi{\iConstraint} \}
\cup \{ \policy \} \subseteq \BRPolicyClass$.  Moreover, the predictor
class $\BRQClass$ should contain the predictor class for the target
policy, i.e., $\Qclass{\policy} \subseteq \BRQClass$.  We also assume
weak realizability for this discussion.

Our result depends on a positive semidefinite matrix determined by the
mixture weights $\{\Frequencies{\iConstraint}
\}_{\iConstraint=1}^\nConstraints$ along with the embeddings
$\{\BRleft{\Dpi{\iConstraint}}\}_{\iConstraint=1}^\nConstraints$ of the
associated policies that generated the data.  In particular, we define
\begin{align*}
  \BRCovariance{} = \SumOverConstraints \Frequencies{\iConstraint}^2
  \BRleft{\Dpi{\iConstraint}} \BRleft{\Dpi{\iConstraint}}^\top.
\end{align*}
Assuming that this is matrix is positive definite,\footnote{If not,
one can prove a result for a suitably regularized version.} we define
the norm $\|u\|_{\BRCovariance{}^{-1}} = \sqrt{u^T
  (\BRCovariance{})^{-1} u}$.  With this notation, we have the
following bound.
\begin{lemma}[Concentrability with Bellman Rank]
\label{lem:ConcentrabilityBellmanRank}
For a mixture data-generation process and under the Bellman rank
condition~\eqref{eqn:BellmanRank}, we have the upper bound
\begin{align}
\ConcSimple & \leq \; \frac{1 + \nConstraints \TestFunctionReg}{1 +
  \TestFunctionReg} \;
\norm{\BRleft{\policy}}{\BRCovariance{}^{-1}}^2,
\end{align}

\end{lemma}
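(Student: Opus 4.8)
The plan is to build directly on the Mixture Policy Concentrability bound (\cref{lem:MixturePolicyConcentrability}), since its part~(ii) already reduces $\ConcSimple$ to a ratio of averaged Bellman errors. That bound gives
\begin{align*}
\ConcSimple \;\leq\; \frac{1 + \nConstraints \TestFunctionReg}{1 + \TestFunctionReg}\; \max_{\Qfnc \in \PopulationFeasibleSet{\policy}} \frac{[\Expecti{\BellmanError{\Qfnc}{\policy}{}}{\policy}]^2}{\SumOverConstraints \Frequencies{\iConstraint}^2 [\Expecti{\BellmanError{\Qfnc}{\policy}{}}{\Dpi{\iConstraint}}]^2}.
\end{align*}
Since the prefactor $\tfrac{1+\nConstraints\TestFunctionReg}{1+\TestFunctionReg}$ already coincides with the one in the claim, it suffices to show that the ratio inside the maximum is at most $\norm{\BRleft{\policy}}{\BRCovariance{}^{-1}}^2$ for every $\Qfnc \in \PopulationFeasibleSet{\policy}$.

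First I would linearize every averaged Bellman-error term appearing above using the Bellman rank factorization~\eqref{eqn:BellmanRank}. Because $\BRPolicyClass$ is assumed to contain the target policy $\policy$ together with all data-generating policies $\{\Dpi{\iConstraint}\}$, each such term collapses to a Euclidean inner product: the numerator becomes $\smlinprod{\BRleft{\policy}}{\BRright{\Qfnc}}_{\R^\dim}^2$, and the $\iConstraint$-th denominator term becomes $\smlinprod{\BRleft{\Dpi{\iConstraint}}}{\BRright{\Qfnc}}_{\R^\dim}^2$. Writing $\xi \defeq \BRright{\Qfnc}$ and pulling the sum inside the quadratic form, the denominator is exactly
\begin{align*}
\SumOverConstraints \Frequencies{\iConstraint}^2 \smlinprod{\BRleft{\Dpi{\iConstraint}}}{\xi}_{\R^\dim}^2 = \xi^\top \Big( \SumOverConstraints \Frequencies{\iConstraint}^2 \BRleft{\Dpi{\iConstraint}} \BRleft{\Dpi{\iConstraint}}^\top \Big) \xi = \xi^\top \BRCovariance{}\, \xi,
\end{align*}
which reproduces the matrix $\BRCovariance{}$ that defines the norm in the statement.

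The heart of the argument is then a single application of the Cauchy--Schwarz inequality in the inner product induced by $\BRCovariance{}$. Using positive definiteness of $\BRCovariance{}$ to factor $\smlinprod{\BRleft{\policy}}{\xi}_{\R^\dim} = (\BRCovariance{}^{-1/2}\BRleft{\policy})^\top (\BRCovariance{}^{1/2}\xi)$ and applying Cauchy--Schwarz gives
\begin{align*}
\smlinprod{\BRleft{\policy}}{\xi}_{\R^\dim}^2 \;\leq\; \norm{\BRleft{\policy}}{\BRCovariance{}^{-1}}^2 \;\; \xi^\top \BRCovariance{}\, \xi.
\end{align*}
Dividing by the denominator $\xi^\top \BRCovariance{}\,\xi$ cancels the $\xi$-dependence and leaves the $\Qfnc$-independent quantity $\norm{\BRleft{\policy}}{\BRCovariance{}^{-1}}^2$; taking the maximum over $\PopulationFeasibleSet{\policy}$ and reinstating the prefactor then yields the claim.

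I do not expect a substantive obstacle here, since the only nontrivial ingredient is the generalized Cauchy--Schwarz step, and the bound it produces is already uniform in $\Qfnc$. The single point meriting care is the degenerate predictor $\xi = \BRright{\Qfnc} = 0$: positive definiteness of $\BRCovariance{}$ guarantees that the denominator vanishes only in this case, and then the numerator $\smlinprod{\BRleft{\policy}}{\xi}_{\R^\dim}^2$ vanishes as well, so such $\Qfnc$ contribute nothing to the maximum and can be excluded, legitimizing the division above.
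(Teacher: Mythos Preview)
Your proof is correct and follows essentially the same line as the paper's: invoke bound~(ii) of \cref{lem:MixturePolicyConcentrability}, linearize both numerator and denominator via the Bellman rank factorization, and finish with Cauchy--Schwarz in the $\BRCovariance{}$-inner product. The only difference is cosmetic: the paper first uses weak realizability to recenter $\BRright{\Qfnc}$ as $\BRDelta{\Qfnc} = \BRright{\Qfnc} - \BRright{\QpiWeak{\policy}}$ before applying Cauchy--Schwarz, but since $\BRright{\QpiWeak{\policy}}$ is orthogonal to $\BRleft{\policy}$ and to each $\BRleft{\Dpi{\iConstraint}}$, this leaves the ratio unchanged and your more direct route is equally valid.
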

\begin{proof}
Our proof exploits the upper bound (ii) from the
claim~\eqref{EqnMixturePolicyUpper}
in~\cref{lem:MixturePolicyConcentrability}.  We first evaluate and
redefine the ratio in this upper bound.  Weak realizability coupled
with the Bellman rank condition~\eqref{eqn:BellmanRank} implies that
there exists some $\QpiWeak{\policy}$ such that
\begin{align*}
0 & = \innerprodweighted{\TestFunction{\iConstraint}}
{\BellmanError{\QpiWeak{\policy}}{\policy}{}}{\DatasetDistributionStateActions}
= \Frequencies{\iConstraint} \Expecti
{\BellmanError{\QpiWeak{\policy}}{\policy}{}}{\Dpi{\iConstraint}} =
\Frequencies{\iConstraint}
\inprod{\BRleft{\Dpi{\iConstraint}}}{\BRright{\QpiWeak{\policy}}},
\qquad \mbox{for all $\iConstraint = 1, \ldots, \nConstraints$, and}
\\
0 & = \innerprodweighted{\1}
{\BellmanError{\QpiWeak{\policy}}{\policy}{}}{\policy} = \Expecti
{\BellmanError{\QpiWeak{\policy}}{\policy}{}}{\policy} =
\inprod{\BRleft{\policy}}{\BRright{\QpiWeak{\policy}}}.
\end{align*}
Therefore, we have the equivalences
$\Expecti{\BellmanError{\Qfnc}{\policy}{}}{\Dpi{\iConstraint}} =
\inprod{\BRleft{\Dpi{\iConstraint}}}{ (\BRright{\Qfnc} -
  \BRright{\QpiWeak{\policy}})}$ for all $\iConstraint = 1, \ldots,
\nConstraints$, as well as
$\Expecti{\BellmanError{\Qfnc}{\policy}{}}{\policy} =
\inprod{\BRleft{\policy}}{(\BRright{\Qfnc} -
  \BRright{\QpiWeak{\policy}})}$.  Introducing the shorthand
$\BRDelta{\Qfnc} = \BRright{\Qfnc} - \BRright{\QpiWeak{\policy}}$, we
can bound the ratio as follows
\begin{align*}
\sup_{\Qfnc \in \PopulationFeasibleSet{\policy}} \Big \{ \frac{
  (\inprod{\BRleft{\policy}}{\BRDelta{\Qfnc}})^2 }{
  \SumOverConstraints \Frequencies{\iConstraint}^2
  (\inprod{\BRleft{\Dpi{\iConstraint}}}{ \BRDelta{\Qfnc}})^2 } \Big
\}& = \sup_{\Qfnc \in \PopulationFeasibleSet{\policy}} \Big \{ \frac{
  (\inprod{\BRleft{\policy}}{ \BRDelta{\Qfnc}})^2
}{\BRDelta{\Qfnc}^\top \Big( \SumOverConstraints
  \Frequencies{\iConstraint}^2 \BRleft{\Dpi{\iConstraint}}
  \BRleft{\Dpi{\iConstraint}}^\top \Big) \BRDelta{\Qfnc} } \Big \}
\\
  & = \sup_{\Qfnc \in \PopulationFeasibleSet{\policy}} \Big \{ \frac{
  ( \smlinprod{\BRleft{\policy}}{ \BRCovariance{}^{-\frac{1}{2}}
    \BRDeltaCoord{\Qfnc}})^2 }{ \|\BRDeltaCoord{\Qfnc}\|_2^2} \Big
\} \qquad \mbox{where $\BRDeltaCoord{\Qfnc} =
  \BRCovariance{}^{\frac{1}{2}}\BRDelta{\Qfnc}$}\\
  & \leq \norm{\BRleft{\policy}}{\BRCovariance{}^{-1}}^2,
\end{align*}
where the final step follows from the Cauchy--Schwarz inequality.
\end{proof}
Thus, when performing off-policy evaluation with a mixture
distribution under the Bellman rank condition, the coefficient
$\ConcSimple$ is bounded by the alignment between the target policy
$\policy$ and the data-generating distribution
$\DatasetDistributionStateActions$, as measured in the the embedded
space guaranteed by the Bellman rank condition.  The structure of this
upper bound is similar to a result that we derive in the sequel for
linear approximation under Bellman closure
(see~\cref{prop:LinearConcentrabilityBellmanClosure}).

\subsection{Further comments on the prediction error test space}
\label{sec:appBubnov}
A few comments on the bound in \cref{lem:PredictionError}: 
as in our previous results, the
pre-factor $\frac{\norm{\QfncErr}{\DatasetDistributionStateActions}^2
  + \TestFunctionReg } { \norm{\1}{\policy}^2 + \TestFunctionReg }$
serves as a normalization factor.  Disregarding this leading term, the
second ratio measures how the prediction error
$\QfncErr = \Qfnc -
\QpiWeak{\policy} $ along $\DatasetDistributionStateActions$ transfers
to $\policy$, as measured via the operator $\IdentityOperator -
\discount\TransitionOperator{\policy}$.  This interaction is complex,
since it includes the \emph{bootstrapping term}
$-\discount\TransitionOperator{\policy}$.  (Notably, such a term is
not present for standard prediction or bandit problems, in which case
$\discount = 0$.)  This term reflects the dynamics intrinsic to
reinforcement learning, and plays a key role in proving ``hard'' lower
bounds for offline RL (e.g., see the
work~\cite{zanette2020exponential}).

Observe that the bound in \cref{lem:PredictionError} requires only
weak realizability, and thus it always applies. 
This fact is significant in light of a recent
lower bound~\cite{foster2021offline}, showing that without Bellman
closure, off-policy learning is challenging even under strong
concentrability assumption (such as bounds on density ratios).
\cref{lem:PredictionError} gives a sufficient condition without
Bellman closure, but with a different measure that accounts for
bootstrapping. \\

\noindent If, in fact, (weak) Bellman closure holds,
then~\cref{lem:PredictionError} takes the following simplified form:
\begin{lemma}[\shortgoodname coefficient under Bellman closure]
\label{lem:PredictionErrorBellmanClosure}
If $\QclassErr{\policy} \subseteq \TestFunctionClass{\policy}$ and
weak Bellman closure holds, then
\begin{align*}
\ConcSimple \leq \max_{\QfncErrNC \in \QclassErr{\policy}} \Big \{
\frac{ \norm{\QfncErrNC}{\DatasetDistributionStateActions}^2 +
  \TestFunctionReg } { 1 + \TestFunctionReg } \, \cdot \, \frac{
  \innerprodweighted{\1} {\QfncErrNC} {\policy}^2 } {
  \innerprodweighted{\QfncErrNC} {\QfncErrNC}
                    {\DatasetDistributionStateActions}^2 } \Big \}
\leq \max_{\QfncErrNC \in \QclassErr{\policy}} \Big \{ \frac{ \norm
  {\QfncErrNC} {\policy}^2 } { \norm{\QfncErrNC}
  {\DatasetDistributionStateActions}^2 } \Big \}.
\end{align*}
\end{lemma}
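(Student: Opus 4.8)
The plan is to bound the contribution of each feasible $\Qfnc$ to the {\shortgoodname} coefficient by a ratio indexed by a single prediction-error function, using weak closure to eliminate the bootstrapping operator $\IdentityOperator - \discount\TransitionOperator{\policy}$ that clutters the denominator in \cref{lem:PredictionError}. Fix $\Qfnc \in \PopulationFeasibleSet{\policy}$. By weak Bellman closure there is a predictor $\QpiProj{\policy}{\Qfnc} \in \Qclass{\policy}$ with $\innerprodweighted{\TestFunction{}}{\QpiProj{\policy}{\Qfnc}}{\DatasetDistributionStateActions} = \innerprodweighted{\TestFunction{}}{\BellmanEvaluation{\policy}(\Qfnc)}{\DatasetDistributionStateActions}$ for every test function. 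Setting $\QfncErrNC \defeq \Qfnc - \QpiProj{\policy}{\Qfnc} \in \QclassErr{\policy}$ and subtracting this identity from $\innerprodweighted{\TestFunction{}}{\Qfnc}{\DatasetDistributionStateActions}$ yields the key representation $\innerprodweighted{\TestFunction{}}{\QfncErrNC}{\DatasetDistributionStateActions} = \innerprodweighted{\TestFunction{}}{\BellmanError{\Qfnc}{\policy}{}}{\DatasetDistributionStateActions}$ for all $\TestFunction{} \in \TestFunctionClass{\policy}$: against the test class, under $\DatasetDistributionStateActions$, the residual $\BellmanError{\Qfnc}{\policy}{}$ is indistinguishable from the pure prediction error $\QfncErrNC$, with the bootstrapping term folded into $\QpiProj{\policy}{\Qfnc}$.

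The denominator (constraint) step is where this representation pays off. Because $\QclassErr{\policy} \subseteq \TestFunctionClass{\policy}$, the function $\QfncErrNC$ is itself an admissible test function, so I would apply the population constraint defining $\PopulationFeasibleSet{\policy}$ with the self-test $\TestFunction{} = \QfncErrNC$. The representation then collapses the numerator of that constraint to a squared norm, $\innerprodweighted{\QfncErrNC}{\BellmanError{\Qfnc}{\policy}{}}{\DatasetDistributionStateActions} = \norm{\QfncErrNC}{\DatasetDistributionStateActions}^2$, so (after absorbing the constant $4$ into $\Rad$) the constraint reads $\norm{\QfncErrNC}{\DatasetDistributionStateActions}^4 \le \tfrac{\Rad}{\numobs}\big(\norm{\QfncErrNC}{\DatasetDistributionStateActions}^2 + \regpar\big)$, equivalently $\tfrac{\Rad}{\numobs} \ge \norm{\QfncErrNC}{\DatasetDistributionStateActions}^4/\big(\norm{\QfncErrNC}{\DatasetDistributionStateActions}^2 + \regpar\big)$. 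This is precisely the maneuver that turns the bootstrapped quadratic form of \cref{lem:PredictionError} into a plain $L^2(\DatasetDistributionStateActions)$ norm.

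For the numerator I would identify $\PolComplex = \innerprodweighted{\1}{\BellmanError{\Qfnc}{\policy}{}}{\policy}$ with $\innerprodweighted{\1}{\QfncErrNC}{\policy}$, the representative $\QfncErrNC$ standing in for the residual; when closure holds at the function level one simply takes $\QpiProj{\policy}{\Qfnc} = \BellmanEvaluation{\policy}(\Qfnc)$, so that $\QfncErrNC = \BellmanError{\Qfnc}{\policy}{}$ pointwise and the identity is exact. Substituting the numerator and the denominator lower bound into $\ConcSimple = \max_{\Qfnc \in \PopulationFeasibleSet{\policy}} \PolComplex^2 / \big[(1+\regpar)\tfrac{\Rad}{\numobs}\big]$ and maximizing over $\QfncErrNC \in \QclassErr{\policy}$ gives $\ConcSimple \le \max_{\QfncErrNC} \tfrac{\norm{\QfncErrNC}{\DatasetDistributionStateActions}^2 + \regpar}{1+\regpar}\cdot\tfrac{\innerprodweighted{\1}{\QfncErrNC}{\policy}^2}{\norm{\QfncErrNC}{\DatasetDistributionStateActions}^4}$, which is the first claimed bound. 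The second bound then follows from Cauchy--Schwarz, $\innerprodweighted{\1}{\QfncErrNC}{\policy}^2 \le \norm{\1}{\policy}^2\,\norm{\QfncErrNC}{\policy}^2 = \norm{\QfncErrNC}{\policy}^2$ (using $\norm{\1}{\policy}^2 = 1$), upon discarding the vanishing regularization $\regpar$ in the prefactor.

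The main obstacle is the numerator identification. Weak closure as defined controls only $\DatasetDistributionStateActions$-inner products against test functions, whereas $\PolComplex$ is an average under the occupancy $\DistributionOfPolicy{\policy}$, so $\innerprodweighted{\1}{\QfncErrNC}{\policy}$ and $\innerprodweighted{\1}{\BellmanError{\Qfnc}{\policy}{}}{\policy}$ need not coincide unless closure holds at the function level (i.e.\ $\BellmanEvaluation{\policy}(\Qfnc) \in \Qclass{\policy}$) or $\1$ is itself matched under $\policy$. I expect to resolve this by working with the pointwise representative $\QfncErrNC = \BellmanError{\Qfnc}{\policy}{}$, legitimate exactly in the closed case, and to treat the regularization terms as lower order so that the clean second bound emerges in the $\regpar \to 0$ regime dictated by the choices in \cref{thm:NewPolicyEvaluation}.
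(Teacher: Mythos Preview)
Your approach is essentially the paper's: set $\QfncErrNC = \Qfnc - \QpiProj{\policy}{\Qfnc}$, self-test with $\TestFunction{} = \QfncErrNC \in \QclassErr{\policy} \subseteq \TestFunctionClass{\policy}$ in the population constraint, and substitute into the definition of $\ConcSimple$; the paper in fact writes the pointwise identity $\BellmanError{\Qfnc}{\policy}{} = \Qfnc - \QpiProj{\policy}{\Qfnc}$ from the outset, so the numerator identification you flag is resolved exactly as you propose (take $\QpiProj{\policy}{\Qfnc} = \BellmanEvaluation{\policy}\Qfnc$). One small correction on the second inequality: ``discarding $\regpar$'' in the prefactor is not a monotone step; instead use the boundedness $\norm{\QfncErrNC}{\DatasetDistributionStateActions} \leq 1$ (from $\norm{\Qfnc}{\infty} \leq 1$) to bound the prefactor $\tfrac{\norm{\QfncErrNC}{\DatasetDistributionStateActions}^2+\regpar}{1+\regpar}$ by $1$, and then---after your Cauchy--Schwarz step in the numerator---use the same boundedness once more to pass from $\norm{\QfncErrNC}{\DatasetDistributionStateActions}^4$ to $\norm{\QfncErrNC}{\DatasetDistributionStateActions}^2$ in the denominator, which is how the paper closes the argument.
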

\noindent See \cref{sec:PredictionErrorBellmanClosure} for the
proof. \\

In such case, the concentrability measures the increase in the
discrepancy $\Qfnc - \Qfnc'$ of the feasible predictors when moving
from the dataset distribution $\DatasetDistributionStateActions$ to
the distribution of the target policy $\policy$. 
In \cref{sec:DomainKnowledge}, we give another bound under weak
Bellman closure, and thereby recover a recent result due to Xie et
al.~\cite{xie2021bellman}.  Finally, in~\cref{sec:Linear}, we provide
some applications of this concentrability factor to the linear
setting.

\subsection{From Importance Sampling to Bellman Closure}
\label{sec:IS2BC}
Let us show an application of \cref{prop:MultipleRobustness} on an
example with just two test spaces.  Suppose that we suspect that
Bellman closure holds, but rather than committing to such assumption,
we wish to fall back to an importance sampling estimator if Bellman
closure does not hold.

In order to streamline the presentation of the idea, let us introduce
the following setup.  Let $\policyBeh{}$ be a behavioral policy that
generates the dataset, i.e., such that each state-action $\psa$ in the
dataset is sampled from its discounted state distribution
$\DistributionOfPolicy{\policyBeh{}}$.  Next, let the identifier
$\identifier$ contain the trajectory from $\startdistribution$ up to
the state-action pair $\psa$ recorded in the dataset.  That is, each
tuple $\sarsi{}$ in the dataset $\Dataset$ is such that $\psa \sim
\DistributionOfPolicy{\policyBeh{}}$ and $\identifier$ contains the
trajectory up to $\psa$.

We now define the test spaces.  The first one is denoted with
$\TestFunctionISClass{\policy}$ and leverages importance sampling.  It
contains a single test function defined as the importance sampling
estimator
\begin{align}
\label{eqn:ImportanceSampling}
\TestFunctionISClass{\policy} = \{ \TestFunction{\policy}\}, \qquad
\text{where} \; \TestFunction{\policy}\psai =
\frac{1}{\scaling{\policy}} \prod_{(\state_\hstep,\action_\hstep) \in
  \identifier} \frac{ \policy(\action_{\hstep} \mid \state_{\hstep})
}{ \policyBeh{}(\action_{\hstep} \mid \state_{\hstep}) }.
\end{align}
The above product is over the random trajectory contained in the
identifier $\identifier$.  The normalization factor $\scaling{\policy}
\in \R$ is connected to the maximum range of the importance sampling
estimator, and ensures that $\sup_{\psai} \TestFunction{\policy}\psai
\leq 1$.  The second test space is the prediction error test space
$\QclassErr{\policy}$ defined in \cref{sec:ErrorTestSpace}.

With this choice, let us define three concentrability coefficients.
$\ConcentrabilityGenericSub{\policy}{1}$ arises from importance
sampling, $\ConcentrabilityGenericSub{\policy}{2}$ from the prediction
error test space when Bellman closure holds and
$\ConcentrabilityGenericSub{\policy}{3}$ from the prediction error
test space when just weak realizability holds. They are defined as
\begin{align*}
\ConcentrabilityGenericSub{\policy}{1} \leq \sqrt{ \scaling{\policy}
  \frac{(1 + \TestFunctionReg\scaling{\policy})}{1+\TestFunctionReg} }
\qquad \ConcentrabilityGenericSub{\policy}{2} \leq \max_{\QfncErr \in
  \QclassErrCentered{\policy}} \frac{ \innerprodweighted{\1}
  {(\IdentityOperator - \gamma\TransitionOperator{\policy})\QfncErr}
  {\policy}^2 } { \innerprodweighted{\QfncErr} {(\IdentityOperator -
    \gamma\TransitionOperator{\policy})\QfncErr}
  {\DatasetDistributionStateActions}^2 } \times \frac{
  \norm{\QfncErr}{\DatasetDistributionStateActions}^2 +
  \TestFunctionReg } { \norm{\1}{\policy}^2 + \TestFunctionReg },
\qquad \ConcentrabilityGenericSub{\policy}{3} \leq \cOne \frac{
  \norm{\BellmanError{\Qfnc}{\policy}{}}{\policy}^2 }{
  \norm{\BellmanError{\Qfnc}{\policy}{}}{\DatasetDistributionStateActions}
  ^2}.
\end{align*}

\begin{lemma}[From Importance Sampling to Bellman Closure]
\label{lem:IS}
The choice $\TestFunctionClass{\policy} =
\TestFunctionISClass{\policy} \cup \QclassErr{\policy} \; \text{for
  all } \policy\in\PolicyClass$ ensures that with probability at least
$1-\FailureProbability$, the oracle inequality~\eqref{EqnOracle} holds
with $\ConcentrabilityGeneric{\policy} \leq \min\{
\ConcentrabilityGenericSub{\policy}{1},
\ConcentrabilityGenericSub{\policy}{2},
\ConcentrabilityGenericSub{\policy}{3} \} $ if weak Bellman closure
holds and $\ConcentrabilityGeneric{\policy} \leq \min\{
\ConcentrabilityGenericSub{\policy}{1},
\ConcentrabilityGenericSub{\policy}{2} \} $ otherwise.
\end{lemma}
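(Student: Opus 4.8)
The plan is to reduce the claim to the two building blocks already established for individual test spaces and then glue them together with the union bound on off-policy cost coefficients from \cref{prop:MultipleRobustness}. Since the chosen test space is the union $\TestFunctionClass{\policy} = \TestFunctionISClass{\policy} \cup \QclassErr{\policy}$, that proposition immediately gives
\begin{align*}
\ConcentrabilityGeneric{\policy}(\TestFunctionClass{\policy}) \leq \min\Big\{ \ConcentrabilityGeneric{\policy}(\TestFunctionISClass{\policy}),\; \ConcentrabilityGeneric{\policy}(\QclassErr{\policy}) \Big\},
\end{align*}
because $\PopulationFeasibleSet{\policy}(\TestFunctionClass{\policy}) = \PopulationFeasibleSet{\policy}(\TestFunctionISClass{\policy}) \cap \PopulationFeasibleSet{\policy}(\QclassErr{\policy})$. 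It then remains to bound each of the two coefficients on the right, after which I substitute the resulting bound into the oracle guarantee of \cref{thm:NewPolicyEvaluation}, using that the radius $\Rad$ is set (as in that theorem) from the metric entropy of the union, which is controlled since $\TestFunctionISClass{\policy}$ is a singleton and inflates the entropy of $\QclassErr{\policy}$ only negligibly.

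The first and most delicate step is to show $\ConcentrabilityGeneric{\policy}(\TestFunctionISClass{\policy}) \leq \ConcentrabilityGenericSub{\policy}{1}$. The key observation is that the trajectory-based importance-sampling test function $\TestFunction{\policy}$ from \eqref{eqn:ImportanceSampling} acts, in expectation, exactly like the state--action likelihood ratio $\tfrac{1}{\scaling{\policy}}\tfrac{\DistributionOfPolicy{\policy}}{\DatasetDistributionStateActions}$ used in the Likelihood ratio proposition. Concretely, because the identifier $\identifier$ records the full trajectory from $\startdistribution$ to the sampled pair $\psa \sim \DistributionOfPolicy{\policyBeh{}}$, the product of per-step policy ratios reweights the behavioral discounted occupancy to the target occupancy, so that for any integrand $h$ one has $\Expecti{\TestFunction{\policy}\, h}{\DatasetDistributionStateActions} = \tfrac{1}{\scaling{\policy}}\Expecti{h}{\policy}$; taking $h = \BellmanError{\Qfnc}{\policy}{}$ reproduces the single defining constraint exploited in the proof of that proposition. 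The normalization $\scaling{\policy}$ enforces $\norm{\TestFunction{\policy}}{\infty}\le 1$, hence $w(\identifier)\le\scaling{\policy}$, which together with $\Expecti{w}{\mu}=1$ controls the second moment of the weight; feeding this into the likelihood-ratio computation yields $\ConcentrabilityGenericSub{\policy}{1}$. Verifying this reweighting identity, while tracking the discount-weighted trajectory structure carefully, is the main obstacle in the argument.

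Bounding the prediction-error component is then a direct appeal to earlier results. Under weak realizability alone, applying \cref{lem:PredictionError} to $\TestFunctionClass{\policy} \supseteq \QclassErr{\policy}$ gives $\ConcentrabilityGeneric{\policy}(\QclassErr{\policy}) \leq \ConcentrabilityGenericSub{\policy}{2}$, which therefore always holds. If in addition weak Bellman closure is in force, then as noted in \cref{sec:DomainKnowledge} every Bellman error satisfies $\BellmanError{\Qfnc}{\policy}{} = \Qfnc - \QpiProj{\policy}{\Qfnc} \in \QclassErr{\policy}$, so $\QclassErr{\policy}$ contains the Bellman test class $\TestFunctionClassBubnov{\policy}$ and \cref{lem:BellmanTestFunctions} additionally yields $\ConcentrabilityGeneric{\policy}(\QclassErr{\policy}) \leq \ConcentrabilityGenericSub{\policy}{3}$. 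Combining with the union bound of the first paragraph, the prediction-error space contributes $\ConcentrabilityGenericSub{\policy}{2}$ in general and $\min\{\ConcentrabilityGenericSub{\policy}{2}, \ConcentrabilityGenericSub{\policy}{3}\}$ under closure; taking the overall minimum with the importance-sampling bound gives $\ConcentrabilityGeneric{\policy} \le \min\{\ConcentrabilityGenericSub{\policy}{1}, \ConcentrabilityGenericSub{\policy}{2}\}$ without closure and $\ConcentrabilityGeneric{\policy} \le \min\{\ConcentrabilityGenericSub{\policy}{1}, \ConcentrabilityGenericSub{\policy}{2}, \ConcentrabilityGenericSub{\policy}{3}\}$ with closure. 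Finally, substituting this coefficient into \cref{thm:NewPolicyEvaluation} — whose hypotheses (weak realizability, adapted dataset, and the prescribed $(\Rad,\regpar)$) are in force — yields the oracle inequality \eqref{EqnOracle} with probability at least $1-\FailureProbability$, which is exactly the assertion of the lemma.
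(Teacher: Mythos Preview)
Your proposal is correct and follows essentially the same route as the paper: invoke \cref{prop:MultipleRobustness} for the union, compute the importance-sampling OPC bound via the trajectory-reweighting identity $\Expecti{\TestFunction{\policy}\,h}{\DatasetDistributionStateActions}=\tfrac{1}{\scaling{\policy}}\Expecti{h}{\policy}$ and the second-moment bound $\norm{\TestFunction{\policy}}{\DatasetDistributionStateActions}^2\le 1/\scaling{\policy}$, cite \cref{lem:PredictionError} and \cref{lem:BellmanTestFunctions} for the $\QclassErr{\policy}$ component, and substitute into \cref{thm:NewPolicyEvaluation}. The paper carries out the importance-sampling calculation directly rather than by analogy with the likelihood-ratio proposition, but the computation is identical.
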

\begin{proof}
Let us calculate the {\goodname} associated with
$\TestFunctionISClass{\policy}$.  The unbiasedness of the importance
sampling estimator gives us the following population constraint (here
$\DatasetDistributionStateActions =
\DistributionOfPolicy{\policyBeh{}}$)
\begin{align*}
\abs{ \innerprodweighted{\TestFunction{\policy}}
  {\BellmanError{\Qfnc}{\policy}{}} {\DatasetDistributionStateActions}
} = \abs{
  \Expecti{\TestFunction{\policy}\BellmanError{\Qfnc}{\policy}{}}
          {\DatasetDistributionStateActions} } =
\frac{1}{\scaling{\policy}} \abs{
  \Expecti{\BellmanError{\Qfnc}{\policy}{}} {\policy} } =
\frac{1}{\scaling{\policy}} \abs{
  \innerprodweighted{\1}{\BellmanError{\Qfnc}{\policy}{}} {\policy} }
\leq \frac{\ConfidenceInterval{\FailureProbability}}{\sqrt{\nSamples}}
\sqrt{\norm{\TestFunction{\policy}}{2}^2 + \TestFunctionReg}
\end{align*}
The norm of the test function reads (notice that
$\DatasetDistributionStateActions$ generates $\psai$ here)
\begin{align*}
\norm{\TestFunction{\policy}}{\DatasetDistributionStateActions}^2 =
\Expecti{\TestFunction{\policy}^2}{\DatasetDistributionStateActions} =
\frac{1}{\scaling{\policy}^2} \Expecti{ \Bigg[
    \prod_{(\state_\hstep,\action_\hstep) \in \identifier} \frac{
      \policy(\action_{\hstep} \mid \state_{\hstep}) }{
      \policyBeh{}(\action_{\hstep} \mid \state_{\hstep}) }
  }{\DatasetDistributionStateActions}\Bigg]^2 =
\frac{1}{\scaling{\policy}^2} \Expecti{ \Bigg[
    \prod_{(\state_\hstep,\action_\hstep) \in \identifier} \frac{
      \policy(\action_{\hstep} \mid \state_{\hstep}) }{
      \policyBeh{}(\action_{\hstep} \mid \state_{\hstep}) }
  }{\policy}\Bigg] \leq \frac{1}{\scaling{\policy}}.  \\
\end{align*}
Together with the prior display, we obtain
\begin{align*}
  \frac{\innerprodweighted{\1}{\BellmanError{\Qfnc}{\policy}{}}
    {\policy}^2}
       {\scaling{\policy}^2(\norm{\TestFunction{\policy}}{2}^2 +
         \TestFunctionReg)} \leq
       \frac{\Rad}{\nSamples}.
\end{align*}
The resulting concentrability coefficient is therefore
\begin{align*}
\ConcentrabilityGeneric{\policy} \leq \max_{\Qfnc\in
  \PopulationFeasibleSet{\policy}} \frac{
  \innerprodweighted{\1}{\BellmanError{\Qfnc}{\policy}{}}{\policy}^2}
                       {1 + \TestFunctionReg} \times
                       \frac{\nSamples}{\Rad} \leq \max_{\Qfnc\in
                         \PopulationFeasibleSet{\policy}} \frac{
                         \innerprodweighted{\1}{\BellmanError{\Qfnc}{\policy}{}}{\policy}^2}
                            {1 + \TestFunctionReg} \times
                            \frac{\scaling{\policy}^2(\norm{\TestFunction{\policy}}{2}^2
                              + \TestFunctionReg)}
                                 {\innerprodweighted{\1}{\BellmanError{\Qfnc}{\policy}{}}
                                   {\policy}^2} \leq \scaling{\policy}
                                 \frac{(1 +
                                   \TestFunctionReg\scaling{\policy})}{1+\TestFunctionReg}.
\end{align*}
Chaining the above result with
\cref{lem:BellmanTestFunctions,lem:PredictionError}, using
\cref{prop:MultipleRobustness} and plugging back into
\cref{thm:NewPolicyEvaluation} yields the thesis.
\end{proof}


\subsection{Implementation for Off-Policy Predictions}
\label{sec:LinearConfidenceIntervals}

In this section, we describe a computationally efficient way in which
to compute the upper/lower
estimates~\eqref{eqn:ConfidenceIntervalEmpirical}.  Given a finite set
of $\nTestFunctions$ test functions, it involves solving a quadratic
program with $2 \nTestFunctions + 1$ constraints.

Let us first work out a concise description of the constraints
defining membership in $\EmpiricalFeasibleSet{\policy}$.  Introduce
the shorthand $\nLinEffSq{\TestFunction{}} \defeq
\norm{\TestFunction{\iConstraint}}{\numobs}^2 + \TestFunctionReg$.  We
then define the empirical average feature vector
$\LinPhiEmpiricalExpecti{\TestFunction{}}$, the empirical average
reward $\LinEmpiricalReward{\TestFunction{}}$, and the average
next-state feature vector
$\LinPhiEmpiricalExpectiBootstrap{\TestFunction{}}{\policy}$ as
\begin{align*}
\quad \LinPhiEmpiricalExpecti{\TestFunction{}} =
\frac{1}{\nLinEff{\TestFunction{}}} \SumOverSars \TestFunction{}\psa
\LinPhi\psa, \qquad \LinEmpiricalReward{\TestFunction{}} =
\frac{1}{\nLinEff{\TestFunction{}}} \SumOverSars
\TestFunction{}\psa\reward, \\ \qquad
\LinPhiEmpiricalExpectiBootstrap{\TestFunction{}}{\policy} =
\frac{1}{\nLinEff{\TestFunction{}}} \SumOverSars \TestFunction{}\psa
\LinPhi(\successorstate,\policy).
\end{align*} 

In terms of this notation, each empirical constraint defining
$\EmpiricalFeasibleSet{\policy}$ can be written in the more compact
form
\begin{align*}
\frac{\abs{\innerprodweighted{\TestFunction{}}
    {\TDError{\Qfnc}{\policy}{}}{\numobs}} }
     {\nLinEff{\TestFunction{}}} & = \Big |
     \smlinprod{\LinPhiEmpiricalExpecti{\TestFunction{}} - \discount
       \LinPhiEmpiricalExpectiBootstrap{\TestFunction{}}{\policy}}{\CriticPar{}}
     - \LinEmpiricalReward{\TestFunction{}} \Big| \leq
     \sqrt{\frac{\Rad}{\numobs}}.
\end{align*}

Then the set of empirical constraints can be written as a set of
constraints linear in the critic parameter $\CriticPar{}$ coupled with
the assumed regularity bound on $\CriticPar{}$
\begin{align}
\label{eqn:LinearConstraints}
	\EmpiricalFeasibleSet{\policy} = \Big\{ \CriticPar{} \in \R^d
        \mid \norm{\CriticPar{}}{2} \leq 1, \quad \mbox{and} \quad -
        \sqrt{\frac{\Rad}{\numobs}} \leq
        \smlinprod{\LinPhiEmpiricalExpecti{\TestFunction{}} -
          \discount
          \LinPhiEmpiricalExpectiBootstrap{\TestFunction{}}{\policy}}{\CriticPar{}}
        - \LinEmpiricalReward{\TestFunction{}} \leq
        \sqrt{\frac{\Rad}{\numobs}} \quad \mbox{for all
          $\TestFunction{} \in \TestFunctionClass{\policy}$} \Big\}.
\end{align}
Thus, the estimates $\VminEmp{\pol}$ (respectively $\VmaxEmp{\pol}$)
acan be computed by minimizing (respectively maximizing) the linear
objective function $w \mapsto
\inprod{[\Expecti{\Expecti{\LinPhi(\state,\action)} {\action \sim
        \policy}}{\state \sim \startdistribution}]}{\CriticPar{}}$
subject to the $2 \nTestFunctions + 1$ constraints in
equation~\eqref{eqn:LinearConstraints}.  Therefore, the estimates can
be computed in polynomial time for any test function with a
cardinality that grows polynomially in the problem parameters.


\subsection{Discussion of Linear Approximate Optimization}
\label{sec:LinearDiscussion}

Here we discuss the presence of the supremum over policies in the
coefficient $\ConcentrabilityGenericSub{\comparator}{1}$ from
equation~\eqref{EqnNewConc}.  In particular, it arises because our
actor-critic method iteratively approximates the maximum in the
max-min estimate~\eqref{eqn:MaxMinEmpirical} using a gradient-based
scheme.  The ability of a gradient-based method to make progress is
related to the estimation accuracy of the gradient, which is the
$\Qfnc$ estimates of the actor's current policy $\ActorPolicy{\iter}$;
more specifically, the gradient is the $\Qfnc$ function parameter
$\CriticPar{\iter}$.  In the general case, the estimation error of the
gradient $\CriticPar{\iter}$ depends on the policy under consideration
through the matrix $\CovarianceWithBootstrapReg{\ActorPolicy{\iter}}$,
while it is independent in the special case of Bellman closure (as it
depends on just $\CovarianceStandard$).  As the actor's policies are
random, this yields the introduction of a
$\sup_{\policy\in\PolicyClass}$ in the general bound.  Notice the
method still competes with the best comparator $\comparator$ by
measuring the errors along the distribution of the comparator (through
the operator $\Expecti{}{\comparator}$).  To be clear,
$\sup_{\policy\in\PolicyClass}$ may not arise with approximate
solution methods that do not rely only on the gradient to make
progress (such as second-order methods); we leave this for future
research.  Reassuringly, when Bellman closure, the approximate
solution method recovers the standard guarantees established in the
paper~\cite{zanette2021provable}.

\newpage
\section{General Guarantees}
\label{sec:GeneralGuarantees}

\subsection{A deterministic guarantee}

We begin our analysis stating a deterministic set of sufficient
conditions for our estimators to satisfy the
guarantees~\eqref{EqnPolEval} and~\eqref{EqnOracle}.  This formulation
is useful, because it reveals the structural conditions that underlie
success of our estimators, and in particular the connection to weak
realizability.  In Section~\ref{SecHighProb}, we exploit this
deterministic result to show that, under a fairly general sampling
model, our estimators enjoy these guarantees with high probability.

In the previous section, we introduced the population level set
$\PopulationFeasibleSet{\policy}$ that arises in the statement of our
guarantees.  Also central in our analysis is the infinite data limit
of this set.  More specifically, for any fixed $(\Rad, \regpar)$, if
we take the limit $\nSamples \rightarrow \infty$, then
$\PopulationFeasibleSet{\policy}$ reduces to the set of all solutions
to the weak formulation~\eqref{eqn:WeakFormulation}---that is
\begin{align}
\PopulationFeasibleSetInfty{\policy}(\TestFunctionClass{\policy}) = \{
\Qfnc \in \Qclass{\policy} \mid
\innerprodweighted{\TestFunction{}}{\BellmanError{\Qfnc}{\pi}{}}
                  {\DatasetDistributionStateActions} = 0 \quad
                  \mbox{for all $\TestFunction{} \in
                    \TestFunctionClass{\policy}$} \}.
\end{align}
As before, we omit the dependence on the test function class
$\TestFunctionClass{\policy}$ when it is clear from context. By
construction, we have the inclusion
$\PopulationFeasibleSetInfty{\policy}(\TestFunctionClass{\policy})
\subseteq \SuperPopulationFeasible$ for any non-negative pair $(\Rad,
\regpar)$.

Our first set of guarantees hold when the random set
$\EmpiricalFeasibleSet{\policy}$ satisfies the \emph{sandwich
relation}
\begin{align}
\label{EqnSandwich}  
\PopulationFeasibleSetInfty{\policy}(\TestFunctionClass{\policy})
\subseteq \EmpiricalFeasibleSet{\policy}(\Rad, \regpar;
\TestFunctionClass{\policy}) \subseteq
\PopulationFeasibleSet{\policy}(4 \Rad, \regpar;
\TestFunctionClass{\policy})
\end{align}
To provide intuition as to why this sandwich condition is natural,
observe that it has two important implications:
\begin{enumerate}
\item[(a)] Recalling the definition of weak
  realizability~\eqref{EqnWeakRealizable}, the weak solution
  $\QpiWeak{\policy}$ belongs to the empirical constraint set
  $\EmpiricalFeasibleSet{\policy}$ for any choice of test function
  space.  This important property follows because $\QpiWeak{\policy}$
  must satisfy the constraints~\eqref{eqn:WeakFormulation}, and thus
  it belongs to $\PopulationFeasibleSetInfty{\policy} \subseteq
  \EmpiricalFeasibleSet{\policy}$.
\item[(b)] All solutions in $\EmpiricalFeasibleSet{\policy}$ also
  belong to $\PopulationFeasibleSet{\policy}$, which means they
  approximately satisfy the weak Bellman equations in a way quantified
  by $\PopulationFeasibleSet{\policy}$.
\end{enumerate}
By leveraging these facts in the appropriate way, we can establish
the following guarantee: \\

\begin{proposition}
\label{prop:Deterministic}
The following two statements hold. 
\begin{enumerate}
\item[(a)] \underline{Policy evaluation:} If the set
  $\EmpiricalFeasibleSet{\policy}$ satisfies the sandwich
  relation~\eqref{EqnSandwich}, then the estimates
  $(\VminEmp{\policy}, \VmaxEmp{\policy})$ satisfy the width
  bound~\eqref{EqnWidthBound}.  If, in addition, weak Bellman
  realizability for $\policy$ is assumed, then the
  coverage~\eqref{EqnCoverage} condition holds.
\item[(b)] \underline{Policy optimization:} If the sandwich
  relation~\eqref{EqnSandwich} and weak Bellman realizability hold for
  all $\policy \in \PolicyClass$, then any
  max-min~\eqref{eqn:MaxMinEmpirical} optimal policy $\piemp$
  satisfies the oracle inequality~\eqref{EqnOracle}.
\end{enumerate}
\end{proposition}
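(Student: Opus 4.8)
The plan is to reduce both parts to a single exact identity supplied by the simulation lemma (\cref{lem:Simulation}), which states that for any predictor $\Qfnc \in \Qclass{\policy}$ the value estimate it induces at the start distribution differs from the true value by exactly a multiple of the on-policy Bellman error:
\begin{align*}
\Expecti{\Qfnc(\MyState, \policy)}{\MyState \sim \startdistribution} - \Vpi{\policy} = \horizon \, \PolComplex.
\end{align*}
Taking the extremes over $\Qfnc \in \EmpiricalFeasibleSet{\policy}$ then yields the exact characterization
\begin{align*}
\VminEmp{\policy} = \Vpi{\policy} + \horizon \min_{\Qfnc \in \EmpiricalFeasibleSet{\policy}} \PolComplex, \qquad \VmaxEmp{\policy} = \Vpi{\policy} + \horizon \max_{\Qfnc \in \EmpiricalFeasibleSet{\policy}} \PolComplex,
\end{align*}
so that all of the required guarantees become statements about the range of the scalar $\PolComplex$ as $\Qfnc$ ranges over the feasible set.

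For the width bound in part (a) I would invoke only the upper inclusion of the sandwich relation~\eqref{EqnSandwich}, namely $\EmpiricalFeasibleSet{\policy} \subseteq \PopulationFeasibleSet{\policy}$. From the characterization above, both $|\VminEmp{\policy} - \Vpi{\policy}|$ and $|\VmaxEmp{\policy} - \Vpi{\policy}|$ are at most $\horizon \max_{\Qfnc \in \EmpiricalFeasibleSet{\policy}} |\PolComplex|$, and the inclusion lets me enlarge the empirical set to $\PopulationFeasibleSet{\policy}$, yielding exactly~\eqref{EqnWidthBound}; note that weak realizability is \emph{not} consumed here. For the coverage claim I would instead use the lower inclusion $\PopulationFeasibleSetInfty{\policy} \subseteq \EmpiricalFeasibleSet{\policy}$ together with weak realizability: the weak solution $\QpiWeak{\policy}$ satisfies the exact weak Bellman equations, hence lies in $\PopulationFeasibleSetInfty{\policy}$ and therefore in $\EmpiricalFeasibleSet{\policy}$, while its second defining property $\innerprodweighted{\1}{\BellmanError{\QpiWeak{\policy}}{\policy}{}}{\policy} = 0$ forces its induced value estimate to equal $\Vpi{\policy}$ exactly. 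Since this value is attained by a feasible predictor, it is trapped between the minimum and maximum, which is~\eqref{EqnCoverage}.

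For the oracle inequality in part (b) I would first combine the two guarantees of part (a), which now hold for every $\pInP$ by assumption. Coverage gives $\VminEmp{\policy} \leq \Vpi{\policy}$, and the width bound gives $\Vpi{\policy} - \VminEmp{\policy} \leq \horizon \max_{\Qfnc \in \PopulationFeasibleSet{\policy}} |\PolComplex|$, so that
\begin{align*}
\VminEmp{\policy} \geq \Vpi{\policy} - \horizon \max_{\Qfnc \in \PopulationFeasibleSet{\policy}} |\PolComplex| \qquad \text{for every } \pInP.
\end{align*}
Using the definition of $\piemp$ as a maximizer of $\policy \mapsto \VminEmp{\policy}$ and then coverage once more at $\piemp$, I obtain, for each comparator $\policy$, the chain
\begin{align*}
\Vpi{\piemp} \geq \VminEmp{\piemp} \geq \VminEmp{\policy} \geq \Vpi{\policy} - \horizon \max_{\Qfnc \in \PopulationFeasibleSet{\policy}} |\PolComplex|,
\end{align*}
and maximizing the right-hand side over $\pInP$ produces~\eqref{EqnOracle}.

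The argument is essentially bookkeeping once the simulation lemma is in hand; the one point requiring care is the bidirectional use of the sandwich relation, since the upper inclusion controls the width while the lower inclusion, paired with the second weak-realizability condition, is what certifies coverage. Thus the main thing to track correctly is which inclusion and which hypothesis each guarantee actually consumes. No genuine analytic obstacle arises at this deterministic level; the substantive work is deferred to verifying the sandwich relation~\eqref{EqnSandwich} with high probability in \cref{thm:NewPolicyEvaluation}.
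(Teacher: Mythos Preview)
Your proposal is correct and follows essentially the same approach as the paper: both invoke the simulation lemma to convert value differences into on-policy Bellman errors, use the upper inclusion $\EmpiricalFeasibleSet{\policy} \subseteq \PopulationFeasibleSet{\policy}$ for the width bound, the lower inclusion plus weak realizability for coverage, and then chain coverage at $\piemp$, max-min optimality, and the width bound at the comparator to obtain the oracle inequality. Your explicit tracking of which inclusion and which hypothesis each guarantee consumes is a nice touch that the paper leaves implicit.
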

\noindent See Section~\ref{SecProofPropDeterministic} for the proof of
this claim. \\

In summary, \Cref{prop:Deterministic} ensures that when weak
realizability is in force, then the sandwich
relation~\eqref{EqnSandwich} is a sufficient condition for both the
policy evaluation~\eqref{EqnPolEval} and
optimization~\eqref{EqnOracle} guarantees to hold.  Accordingly, the
next phase of our analysis focuses on deriving sufficient conditions
for the sandwich relation to hold with high probabability.

\subsection{Some high-probability guarantees}
\label{SecHighProb}

As stated, \Cref{prop:Deterministic} is a ``meta-result'', in that it
applies to any choice of set $\EmpiricalFeasibleSet{\policy} \equiv
\SuperEmpiricalFeasible$ for which the sandwich
relation~\eqref{EqnSandwich} holds.  In order to obtain a more
concrete guarantee, we need to impose assumptions on the way in which
the dataset was generated, and concrete choices of $(\Rad, \regpar)$
that suffice to ensure that the associated sandwich
relation~\eqref{EqnSandwich} holds with high probability.  These tasks
are the focus of this section.

\subsubsection{A model for data generation}
\label{SecDataGen}
Let us begin by describing a fairly general model for data-generation.
Any sample takes the form $\sarsizNp{} \defeq \sarsi{}$, where the
five components are defined as follows:

\bcar
\item the pair $(\state, \action)$ index the current state and action.
\item the random variable $\reward$ is a noisy observation of the mean
  reward.
\item the random state $\successorstate$ is the next-state sample,
  drawn according to the transition $\TransitionLaw\psa$.
\item the variable $\identifier$ is an optional identifier.
  \ecar

\noindent As one example of the use of an identifier variable, if
samples might be generated by one of two possible policies---say
$\policy_1$ and $\policy_2$---the identifier can take values in the
set $\{1, 2 \}$ to indicate which policy was used for a particular
sample. \\

Overall, we observe a dataset $\Dataset = \{\sarsizNp{i}
\}_{i=1}^\numobs$ of $\numobs$ such quintuples.  In the simplest of
possible settings, each triple $\psai$ is drawn i.i.d.  from some
fixed distribution $\DatasetDistributionStateActions$, and the noisy
reward $\reward_i$ is an unbiased estimate of the mean reward function
$\Reward(\state_i, \action_i)$.  In this case, our dataset consists of
$\numobs$ i.i.d.  quintuples.  More generally, we would like to
accommodate richer sampling models in which the sample $z_i =
(\state_i, \action_i, \identifier_i, \reward_i, \successorstate_i)$ at
a given time $i$ is allowed to depend on past samples.  In order to
specify such dependence in a precise way, define the nested sequence
of sigma-fields
\begin{align}
\Filtration_1 = \emptyset, \quad \mbox{and} \quad \Filtration_i \defeq
\sigma \Big(\{\sarsizNp{j}\}_{j=1}^{i-1} \Big) \qquad \mbox{for $i =
  2, \ldots, \numobs$.}
\end{align}
In terms of this filtration, we make the following definition:

\begin{assumption}[Adapted dataset]
  \label{asm:Dataset}
  An adapted dataset is a collection $\Dataset = \{
  \sarsizNp{\iSample} \}_{i=1}^\numobs$ such that for each $i = 1,
  \ldots, \numobs$:
\bcar
\item There is a conditional distribution
  $\DatasetDistributionStateActions_i$ such that $(\state_i,
  \action_i, \identifier_i) \sim \DatasetDistributionStateActions_i
  (\cdot \mid \Filtration_i)$.
\item Conditioned on
  $(\state_\iSample,\action_\iSample,\identifier_\iSample )$, we
  observe a noisy reward $\reward_\iSample =
  \reward(\state_\iSample,\action_\iSample) + \eta_i$ with $\E[\eta_i
    \mid \Filtration_{\iSample} ] = 0$, and $|\reward_i| \leq 1$.
\item Conditioned on
  $(\state_\iSample,\action_\iSample,\identifier_\iSample )$, 
  the next state $\successorstate_\iSample$ is generated
  according to
  $\TransitionLaw(\state_\iSample,\action_\iSample)$.
\ecar
\end{assumption}

Under this assumption, we can define the (possibly) random reference
measure
\begin{align}
\DatasetDistributionStateActions(\state, \action, \identifier) &
\defeq \frac{1}{\numobs} \sum_{i=1}^\numobs
\DatasetDistributionStateActions_i \big(\state, \action, \identifier
\mid \Filtration_i \big).
\end{align}
In words, it corresponds to the distribution induced by first drawing
a time index $i \in \{1, \ldots, \numobs \}$ uniformly at random, and
then sampling a triple $\psai$ from the conditional distribution
$\DatasetDistributionStateActions_i \big(\cdot \mid \Filtration_i
\big)$.

\subsubsection{A general guarantee}
\newcommand{\Mfun}{\ensuremath{\phi}}
\label{SecGeneral}

Recall that there are three function classes that underlie our method:
the test function class $\TestFunctionClass{}$, the policy class
$\PolicyClass$, and the $Q$-function class $\Qclass{}$.  In this
section, we state a general guarantee (\cref{thm:NewPolicyEvaluation})
that involves the metric entropies of these sets.  In
Section~\ref{SecCorollaries}, we provide corollaries of this guarantee
for specific function classes.

In more detail, we equip the test function class and the $Q$-function
class with the usual sup-norm
\begin{align*}
\|f - \ftil\|_\infty \defeq \sup_{\psai} |f\psai - \ftil \psai|, \quad
\mbox{and} \quad \|Q - \Qtil\|_\infty \defeq \sup_{\psa} |Q \psa -
\Qtil \psa|,
\end{align*}
and the policy class with the sup-TV norm
\begin{align*}
  \|\pi - \pitil\|_{\infty, 1} & \defeq \sup_{\state} \|\pi(\cdot \mid
  \state) - \pitil(\cdot \mid \state) \|_1 = \sup_{\state}
  \sum_{\action} |\pi(\action \mid \state) - \pitil(\action \mid
  \state)|.
\end{align*}
For a given $\precision > 0$, we let
$\CoveringNumber{\TestFunctionClass{}}{\precision}$,
$\CoveringNumber{\Qclass{}}{\precision}$, and
$\CoveringNumber{\PolicyClass}{\precision}$ denote the
$\precision$-covering numbers of each of these function classes in the
given norms.  Given these covering numbers, a tolerance parameter
$\delta \in (0,1)$ and the shorthand $\Mfun(t) = \max \{t, \sqrt{t} \}$, define the radius function
\begin{subequations}
\label{EqnTheoremChoice}
\begin{align}
\label{EqnDefnR}
\Rad(\epsilon, \delta) & \defeq \nSamples \Big\{
\int_{\epsilon^2}^\epsilon \Mfun \big( \frac{\log
  N_u(\TestFunctionClass{})}{\numobs} \big) du + \frac{\log
  N_\epsilon(\Qclass{})}{\numobs} + \frac{ \log
  N_\epsilon(\Pi)}{\numobs} + \frac{\log(\numobs/\delta)}{\numobs}
\Big\}.
\end{align}
In our theorem, we implement the estimator using a radius $\Rad =
\Rad(\epsilon, \delta)$, where $\epsilon > 0$ is any parameter that
satisfies the bound
\begin{align}
\label{EqnRadChoice}
\epsilon^2 & \stackrel{(i)}{\leq} \cspec \: \frac{\Rad(\epsilon,
  \delta)}{\numobs}, \quad \mbox{and} \quad \regpar \stackrel{(i)}{=}
4 \frac{\Rad(\epsilon, \delta)}{\numobs}.
\end{align}
\end{subequations}
Here $\cspec > 0$ is a suitably chosen but universal constant (whose
value is determined in the proof), and we adopt the shorthand $\Rad =
\Rad(\epsilon, \delta)$ in our statement below.

\renewcommand{\descriptionlabel}[1]{%
  \hspace{\labelsep}\normalfont\underline{#1}%
}

\begin{theorem}[High-probability guarantees]
\label{thm:NewPolicyEvaluation}
Consider the estimates implemented using triple $\InputFunctionClass$
that is weakly Bellman realizable (\cref{asm:WeakRealizability}); an
adapted dataset (\Cref{asm:Dataset}); and with the
choices~\eqref{EqnTheoremChoice} for $(\epsilon, \Rad, \regpar)$.
Then with probability at least $1 - \delta$:
\begin{description}
\item[Policy evaluation:] For any $\pi \in \PolicyClass$, the
  estimates $(\VminEmp{\policy}, \VmaxEmp{\policy})$ specify a
  confidence interval satisfying the coverage~\eqref{EqnCoverage} and
  width bounds~\eqref{EqnWidthBound}.
\item[Policy optimization:] Any max-min
  policy~\eqref{eqn:MaxMinEmpirical} $\piemp$ satisfies the oracle
  inequality~\eqref{EqnOracle}.
\end{description}
\end{theorem}
\noindent See \cref{SecProofNewPolicyEvaluation} for the proof of the
claim. \\

\paragraph{Choices of $(\Rad, \epsilon, \regpar)$:}  Let us provide
a few comments about the choices of $(\Rad, \epsilon, \regpar)$ from
equations~\eqref{EqnDefnR} and~\eqref{EqnRadChoice}.  The quality of
our bounds depends on the size of the constraint set
$\PopulationFeasibleSet{\policy}$, which is controlled by the
constraint level $\sqrt{\frac{\Rad}{\numobs}}$.  Consequently, our
results are tightest when $\Rad = \Rad(\epsilon, \delta)$ is as small
as possible.  Note that $\Rad$ is an decreasing function of
$\epsilon$, so that in order to minimize it, we would like to choose
$\epsilon$ as large as possible subject to the
constraint~\eqref{EqnRadChoice}(i).  Ignoring the entropy integral
term in equation~\eqref{EqnRadChoice} for the moment---see below for
some comments on it---these considerations lead to
\begin{align}
\numobs \epsilon^2 \asymp \log N_\epsilon(\TestFunctionClass{}) + \log
N_\epsilon(\Qclass{}) + \log N_\epsilon(\Pi).
\end{align}
This type of relation for the choice of $\epsilon$ in non-parametric
statistics is well-known (e.g., see Chapters 13--15 in the
book~\cite{wainwright2019high} and references therein).  Moreover,
setting $\regpar \asymp \epsilon^2$ as in
equation~\eqref{EqnRadChoice}(ii) is often the correct scale of
regularization.

\paragraph{Key technical steps in proof:}   It is worthwhile
making a few comments about the structure of the proof so as to
clarify the connections to \Cref{prop:Deterministic} along with the
weak formulation that underlies our methods.  Recall that
\Cref{prop:Deterministic} requires the empirical
$\EmpiricalFeasibleSet{\policy}$ and population sets
$\PopulationFeasibleSet{\policy}$ to satisfy the sandwich
relation~\eqref{EqnSandwich}.  In order to prove that this condition
holds with high probability, we need to establish uniform control over
the family of random variables
\begin{align}
\label{EqnKeyFamily}  
  \frac{ \big| \inprod{f}{\delta^\policy(\Qfnc)}_\numobs -
    \inprod{f}{\BellError^\policy(\Qfnc)}_\mudist
    \big|}{\TestNormaRegularizerEmp{}}, \qquad \mbox{as indexed by the
    triple $(f, \Qfnc, \policy)$.}
\end{align}
Note that the differences in the numerator of these variables
correspond to moving from the empirical constraints on $Q$-functions
that are enforced using the TD errors, to the population constraints
that involve the Bellman error function.

Uniform control of the family~\eqref{EqnKeyFamily}, along with the
differences $\|f\|_\numobs - \|f\|_\mudist$ uniformly over $f$, allows
us to relate the empirical and population sets, since the associated
constraints are obtained by shifting between the empirical inner
products $\inprod{\cdot}{\cdot}_\numobs$ to the reference inner
products $\inprod{\cdot}{\cdot}_\mudist$.  A simple discretization
argument allows us to control the differences uniformly in $(\Qfnc,
\policy)$, as reflected by the metric entropies appearing in our
definition~\eqref{EqnTheoremChoice}.  Deriving uniform bounds over
test functions $f$---due to the self-normalizing nature of the
constraints---requires a more delicate argument.  More precisely, in
order to obtain optimal results for non-parametric problems (see
Corollary~\ref{cor:alpha} to follow), we need to localize the
empirical process at a scale $\epsilon$, and derive bounds on the
localized increments.  This portion of the argument leads to the
entropy integral---which is localized to the interval $[\epsilon^2,
  \epsilon]$---in our definition~\eqref{EqnDefnR} of the radius
function.

\paragraph{Intuition from the on-policy setting:}

In order to gain intuition for the statistical meaning of the
guarantees in~\cref{thm:NewPolicyEvaluation}, it is worthwhile
understanding the implications in a rather special case---namely, the
simpler on-policy setting, where the discounted occupation measure
induced by the target policy $\policy$ coincides with the dataset
distribution $\DatasetDistributionStateActions$.  Let us consider the
case in which the identity function $\1$ belongs to the test class
$\TestFunctionClass{\policy}$.  Under these conditions, for any $\Qfnc
\in \PopulationFeasibleSet{\policy}$, we can write
\begin{align*}
\max_{\Qfnc \in \PopulationFeasibleSet{\policy}} |\PolComplex| &
\stackrel{(i)}{=} \max_{\Qfnc \in \PopulationFeasibleSet{\policy}}|
\Expecti{\BellmanError{\Qfnc}{\policy}{}}{\mudist}|
\; \stackrel{(ii)}{\leq} \sqrt{1 + \regpar} \;
\sqrt{\frac{\Rad}{\numobs}},
\end{align*}
where equality (i) follows from the on-policy assumption, and step
(ii) follows from the definition of the set
$\PopulationFeasibleSet{\policy}$, along with the condition that $\1
\in \TestFunctionClass{\policy}$.  Consequently, in the on-policy
setting, the width bound~\eqref{EqnWidthBound} ensures that
\begin{align}
\label{EqnOnPolicy}
\abs{\VminEmp{\policy} - \VmaxEmp{\policy}} \leq 2 \frac{\sqrt{1 +
    \regpar}}{1-\discount} \sqrt{\frac{ \Rad}{\numobs}}.
\end{align}
In this simple case, we see that the confidence interval scales as
$\sqrt{\Rad/\numobs}$, where the quantity $\Rad$ is related to the
metric entropy via equation~\eqref{EqnRadChoice}.  In the more general
off-policy setting, the bound involves this term, along with
additional terms that reflect the cost of off-policy data.  We discuss
these issues in more detail in \cref{sec:Applications}.  Before doing
so, however, it is useful derive some specific corollaries that show
the form of $\Rad$ under particular assumptions on the underlying
function classes, which we now do.

\subsubsection{Some corollaries}
\label{SecCorollaries}

Theorem~\ref{thm:NewPolicyEvaluation} applies generally to triples of
function classes $\InputFunctionalSpace$, and the statistical error
$\sqrt{\frac{\Rad(\epsilon, \delta)}{\numobs}}$ depends on the metric
entropies of these function classes via the
definition~\eqref{EqnDefnR} of $\Rad(\epsilon, \delta)$, and the
choices~\eqref{EqnRadChoice}.  As shown in this section, if we make
particular assumptions about the metric entropies, then we can derive
more concrete guarantees.

\paragraph{Parametric and finite VC classes:}  One form of metric
entropy, typical for a relatively simple function class $\Gclass$
(such as those with finite VC dimension) scales as
\begin{align}
  \label{EqnPolyMetric}
  \log N_\epsilon(\Gclass) & \asymp d \; \log \big(\frac{1}{\epsilon}
  \big),
\end{align}
for some dimensionality parameter $d$.  For instance, bounds of this
type hold for linear function classes with $d$ parameters, and for
finite VC classes (with $d$ proportional to the VC dimension); see
Chapter 5 of the book~\cite{wainwright2019high} for more details.

\begin{corollary}
\label{cor:poly}  
Suppose each class of the triple $\InputFunctionClass$ has metric
entropy that is at most polynomial~\eqref{EqnPolyMetric} of order $d$.
Then for a sample size $\numobs \geq 2 d$, the claims of
Theorem~\ref{thm:NewPolicyEvaluation} hold with $\epsilon^2 =
d/\numobs$ and
\begin{align}
    \label{EqnPolyRchoice}
    \tilde{\Rad}\big( \sqrt{\frac{d}{\numobs}}, \delta \big) & \defeq
    c \; \Big \{ d \; \log \big( \frac{\numobs}{d} \big) + \log
    \big(\frac{\nSamples}{\delta} \big) \Big \},
  \end{align}
where $c$ is a universal constant.  
\end{corollary}
\begin{proof}
Our strategy is to upper bound the radius $\Rad$ from
equation~\eqref{EqnDefnR}, and then show that this upper bound
$\tilde{\Rad}$ satisfies the conditions~\eqref{EqnRadChoice} for the
specified choice of $\epsilon^2$.  We first control the term $\log
N_\epsilon(\TestFunctionClass{})$.  We have
\begin{align*}
\frac{1}{\sqrt{\numobs}} \int_{\epsilon^2}^\epsilon \sqrt{\log
  N_u(\TestFunctionClass{})} du & \leq \sqrt{\frac{d}{\numobs}} \;
\int_{0}^{\epsilon} \sqrt{\log(1/u)} du \; = \; \epsilon
\sqrt{\frac{d}{\numobs}} \; \int_0^1 \sqrt{\log(1/(\epsilon t))} dt \;
= \; c \epsilon \log(1/\epsilon) \sqrt{\frac{d}{\numobs}}.
\end{align*}
Similarly, we have
\begin{align*}
\frac{1}{\numobs} \int_{\epsilon^2}^\epsilon \log
N_u(\TestFunctionClass{}) du & \leq \epsilon \frac{d}{\numobs} \Big \{
\int_{\epsilon}^{1} \log(1/t) dt + \log(1/\epsilon) \Big \} \; \leq \;
c \, \epsilon \log(1/\epsilon) \frac{d}{\numobs}.
\end{align*}
Finally, for terms not involving entropy integrals, we have
\begin{align*}
  \max \Big \{ \frac{\log N_\epsilon(\Qclass{})}{\numobs}, \frac{\log
    N_\epsilon(\Pi)}{\numobs} \Big\} & \leq c \frac{d}{\numobs}
  \log(1/\epsilon).
\end{align*}
Setting $\epsilon^2 = d/\numobs$, we see that the required
conditions~\eqref{EqnRadChoice} hold with the specified
choice~\eqref{EqnPolyRchoice} of $\tilde{\Rad}$.
\end{proof}

\paragraph{Richer function classes:}
In the previous section, the metric entropy scaled logarithmically in
the inverse precision $1/\epsilon$.  For other (richer) function
classes, the metric entropy exhibits a polynomial scaling in the
inverse precision, with an exponent $\alpha > 0$ that controls the
complexity.  More precisely, we consider classes of the form
\begin{align}
  \label{EqnAlphaMetric}
  \log N_\epsilon(\Gclass) & \asymp \Big(\frac{1}{\epsilon}
  \Big)^\alpha.
\end{align}
For example, the class of Lipschitz functions in dimension $d$ has
this type of metric entropy with $\alpha = d$.  More generally, for
Sobolev spaces of functions that have $s$ derivatives (and the
$s^{th}$-derivative is Lipschitz), we encounter metric entropies of
this type with $\alpha = d/s$.  See Chapter 5 of the
book~\cite{wainwright2019high} for further background.

\begin{corollary}
\label{cor:alpha}  
Suppose that each function class $\InputFunctionClass$ has metric
entropy with at most \mbox{$\alpha$-scaling~\eqref{EqnAlphaMetric}}
for some $\alpha \in (0,2)$.  Then the claims of
Theorem~\ref{thm:NewPolicyEvaluation} hold with $\epsilon^2 =
(1/\numobs)^{\frac{2}{2 + \alpha}}$, and
  \begin{align}
 \label{EqnAlphaRchoice}
    \tilde{\Rad}\big((1/\numobs)^{\frac{1}{2 + \alpha}}, \delta \big)
    & = c \; \Big \{ \numobs^{\frac{\alpha}{2 + \alpha}} +
    \log(\nSamples/\delta) \Big \}.
  \end{align}
where $c$ is a universal constant.  
\end{corollary}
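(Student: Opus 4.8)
The plan is to follow the proof of Corollary~\ref{cor:poly} verbatim in structure: for the prescribed value $\epsilon = (1/\numobs)^{1/(2+\alpha)}$, I would upper bound the radius $\Rad(\epsilon,\delta)$ appearing in its definition~\eqref{EqnDefnR}, show the bound is of the claimed order~\eqref{EqnAlphaRchoice}, and then confirm the two conditions~\eqref{EqnRadChoice}. The three non-integral terms are immediate: under the $\alpha$-scaling~\eqref{EqnAlphaMetric} each of $\log N_\epsilon(\Qclass{})$ and $\log N_\epsilon(\PolicyClass)$ is of order $\epsilon^{-\alpha} = \numobs^{\alpha/(2+\alpha)}$, while $\numobs \cdot \log(\numobs/\delta)/\numobs = \log(\numobs/\delta)$. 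These reproduce exactly the two summands of~\eqref{EqnAlphaRchoice}.

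The substance of the argument is the entropy integral $\int_{\epsilon^2}^{\epsilon} \Mfun\big(\log N_u(\TestFunctionClass{})/\numobs\big)\,du$. Substituting $\log N_u(\TestFunctionClass{}) \asymp u^{-\alpha}$ turns the integrand into $\Mfun\big(1/(\numobs u^\alpha)\big)$, whose form depends on whether its argument exceeds one. That argument equals one at the crossover $u^\star \defeq \numobs^{-1/\alpha}$ and is at most one for $u \geq u^\star$, where $\Mfun$ selects its square-root branch and the integrand becomes $1/(\sqrt{\numobs}\,u^{\alpha/2})$. This is the step where the hypothesis $\alpha \in (0,2)$ is indispensable: comparing exponents gives $\epsilon^2 = \numobs^{-2/(2+\alpha)} > \numobs^{-1/\alpha} = u^\star$ exactly when $\alpha < 2$, so the whole interval $[\epsilon^2,\epsilon]$ lives in the square-root regime; the same condition $\alpha/2 < 1$ makes $u^{-\alpha/2}$ integrable. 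Hence $\int_{\epsilon^2}^{\epsilon} u^{-\alpha/2}\,du \lesssim \epsilon^{1-\alpha/2}$, the lower endpoint being negligible, and multiplying by $\numobs/\sqrt{\numobs}=\sqrt{\numobs}$ and inserting $\epsilon = \numobs^{-1/(2+\alpha)}$ collapses the exponent to $\tfrac12 - \tfrac{2-\alpha}{2(2+\alpha)} = \tfrac{\alpha}{2+\alpha}$. Thus the integral term is also of order $\numobs^{\alpha/(2+\alpha)}$, giving $\Rad(\epsilon,\delta) \lesssim \numobs^{\alpha/(2+\alpha)} + \log(\numobs/\delta)$ as claimed.

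It remains to check condition~\eqref{EqnRadChoice}(i); condition~(ii) merely defines $\regpar$. Since every summand in~\eqref{EqnDefnR} is nonnegative, $\Rad(\epsilon,\delta) \geq \log N_\epsilon(\Qclass{}) \asymp \numobs^{\alpha/(2+\alpha)}$, so $\cspec\,\Rad(\epsilon,\delta)/\numobs \gtrsim \numobs^{\alpha/(2+\alpha)-1} = \numobs^{-2/(2+\alpha)} = \epsilon^2$, which verifies $\epsilon^2 \leq \cspec\,\Rad/\numobs$ once $\cspec$ is taken large enough (absorbing the constant in the $\asymp$). I expect the only genuinely delicate point to be the branch analysis of $\Mfun$ over $[\epsilon^2,\epsilon]$---that is, confirming $\epsilon^2$ sits above the crossover $u^\star$---since this is precisely what forces the restriction $\alpha < 2$ and, together with the integrability of $u^{-\alpha/2}$, explains why the corollary is stated only in that range.
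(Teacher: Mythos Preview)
Your proof is correct and follows the same overall strategy as the paper: bound each summand of $\Rad(\epsilon,\delta)$ under the $\alpha$-scaling assumption and then verify condition~\eqref{EqnRadChoice}(i). The one place where your argument diverges is the handling of $\Mfun(t)=\max\{t,\sqrt t\}$ in the entropy integral. The paper bounds the two branches separately, computing both $\tfrac{1}{\sqrt{\numobs}}\int_{\epsilon^2}^{\epsilon}\sqrt{\log N_u(\TestFunctionClass{})}\,du$ and $\tfrac{1}{\numobs}\int_{\epsilon^2}^{\epsilon}\log N_u(\TestFunctionClass{})\,du$, and then splits into cases $\alpha\in(0,1]$ versus $\alpha\in(1,2)$ to argue that the linear branch either converges outright or imposes only the milder constraint $\epsilon^2\gtrsim(1/\numobs)^{1/\alpha}$. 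Your crossover argument---showing that $\epsilon^2=\numobs^{-2/(2+\alpha)}>\numobs^{-1/\alpha}=u^\star$ precisely when $\alpha<2$, so that the entire interval $[\epsilon^2,\epsilon]$ sits in the square-root regime---dispenses with the case split and gives a single clean bound. Both routes arrive at the same estimate $\sqrt{\numobs}\,\epsilon^{1-\alpha/2}\asymp\numobs^{\alpha/(2+\alpha)}$; yours is somewhat tidier, while the paper's makes the role of the second branch (and why $\alpha\ge 2$ would break it) more explicit.
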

\noindent 
We note that for standard regression problems over classes with
$\alpha$-metric entropy, the rate $(1/\numobs)^{\frac{2}{2 + \alpha}}$
is well-known to be minimax optimal (e.g., see Chapter 15 in the
book~\cite{wainwright2019high}, as well as references therein).

\begin{proof}
  We start by controlling the terms involving entropy integrals.
In particular, we have
  \begin{align*}
\frac{1}{\sqrt{\numobs}} \int_{\epsilon^2}^\epsilon \sqrt{\log
  N_u(\TestFunctionClass{})} du & \leq \frac{c}{\sqrt{\numobs}} u^{1 -
  \frac{\alpha}{2}} \Big |_{0}^\epsilon \; = \;
\frac{c}{\sqrt{\numobs}} \epsilon^{1 - \frac{\alpha}{2}}.
  \end{align*}
Requiring that this term is of order $\epsilon^2$ amounts to enforcing
that $\epsilon^{1 + \frac{\alpha}{2}} \asymp (1/\sqrt{\numobs})$, or
equivalently that $\epsilon^2 \asymp (1/\numobs)^{\frac{2}{2 +
    \alpha}}$.

If $\alpha \in (0,1]$, then the second entropy integral converges and
is of lower order. Otherwise, if $\alpha \in (1,2)$, then we have
\begin{align*}    
\frac{1}{\numobs} \int_{\epsilon^2}^\epsilon \log
N_u(\TestFunctionClass{}) du & \leq \frac{c}{\numobs}
\int_{\epsilon^2}^\epsilon (1/u)^{\alpha} du \leq \frac{c}{\numobs}
(\epsilon^2)^{1 - \alpha}.
\end{align*}
Hence the requirement that this term is bounded by $\epsilon^2$ is
equivalent to $\epsilon^{2 \alpha} \succsim (1/\numobs)$, or
$\epsilon^2 \succsim (1/\numobs)^{1/\alpha}$.  When $\alpha \in
(1,2)$, we have $\frac{1}{\alpha} > \frac{2}{2 + \alpha}$, so that
this condition is milder than our first condition.

Finally, we have $ \max \big \{ \frac{\log
  N_\epsilon(\Qclass{})}{\numobs}, \frac{\log
  N_\epsilon(\Pi)}{\numobs} \big\} \leq \frac{c}{\numobs}
\big(1/\epsilon)^{\alpha}$, and requiring that this term scales as
$\epsilon^2$ amounts to requiring that $\epsilon^{2 +\alpha} \asymp
(1/\numobs)$, or equivalently $\epsilon^2 \asymp
(1/\numobs)^{\frac{2}{2 + \alpha}}$, as before.
\end{proof}


\newpage
\section{Main Proofs}
\label{sec:AnalysisProofs}

This section is devoted to the proofs of our guarantees for general
function classes---namely, \cref{prop:Deterministic} that holds in a
deterministic manner, and \cref{thm:NewPolicyEvaluation} that gives
high probability bounds under a particular sampling model.


\subsection{Proof of \cref{prop:Deterministic}}
\label{SecProofPropDeterministic}

Our proof makes use of an elementary simulation lemma, which we state
here:
\begin{lemma}[Simulation lemma]
  \label{lem:Simulation}
For any policy $\policy$ and function $\Qfnc$, we have
\begin{align}
\Estart{\Qfnc-\Qpi{\policy}}{,\policy} & =
\frac{\PolComplex}{1-\discount}
\end{align}
\end{lemma}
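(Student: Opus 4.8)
The plan is to prove this performance-difference identity by a telescoping argument over the discounted occupancy measure $\DistributionOfPolicy{\policy}$. First I would introduce the prediction error $\Delta \defeq \Qfnc - \Qpi{\policy}$ and exploit that $\Qpi{\policy}$ is the fixed point of the Bellman evaluation operator, so that $\BellmanError{\Qpi{\policy}}{\policy}{}$ vanishes identically. Subtracting the defining expressions for $\BellmanError{\Qfnc}{\policy}{}$ and $\BellmanError{\Qpi{\policy}}{\policy}{}$, the reward terms cancel and I obtain the pointwise identity
\[
\BellmanError{\Qfnc}{\policy}{}\psa = \Delta\psa - \discount \, \E_{\successorstate \sim \TransitionLaw\psa}\Delta(\successorstate, \policy),
\]
which re-expresses the Bellman error of $\Qfnc$ purely through the error $\Delta$ and a one-step look-ahead of it.

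Next I would take the expectation of this identity under $\DistributionOfPolicy{\policy}$, so that the left-hand side becomes exactly $\PolComplex$. To evaluate the right-hand side I would expand $\DistributionOfPolicy{\policy}$ as the discounted sum $(1-\discount)\sum_{\hstep \geq 0}\discount^\hstep \Pro_\hstep$ over its per-timestep marginals. The first term then contributes $(1-\discount)\sum_{\hstep}\discount^\hstep \E[\Delta(\SState_\hstep, \policy)]$, using that actions under $\DistributionOfPolicy{\policy}$ are drawn from $\policy$. For the second term, the key observation is that sampling $\psa \sim \DistributionOfPolicy{\policy}$ and then a successor $\successorstate \sim \TransitionLaw\psa$ produces precisely the state at the following timestep, so this term equals $(1-\discount)\sum_{\hstep}\discount^\hstep \E[\Delta(\SState_{\hstep+1}, \policy)]$.

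Finally I would combine the two sums: the discount factor $\discount$ multiplying the second sum turns it into $(1-\discount)\sum_{\hstep}\discount^{\hstep+1}\E[\Delta(\SState_{\hstep+1},\policy)]$, which after reindexing is the same series as the first one but missing its $\hstep = 0$ term. The two series therefore telescope, leaving only $(1-\discount)\E[\Delta(\SState_0,\policy)] = (1-\discount)\,\Estart{\Qfnc-\Qpi{\policy}}{,\policy}$ since $\SState_0 \sim \startdistribution$. Dividing by $(1-\discount)$ yields the claim. The only step requiring genuine care is the bookkeeping in the second paragraph---correctly matching the next-state expectation under $\DistributionOfPolicy{\policy}$ to a unit shift of the occupancy index, which rests on the Markov property and on the trajectory convention~\eqref{EqnTrajectory} that actions are generated by $\policy$; everything else is a routine discounted-sum computation.
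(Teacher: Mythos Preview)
Your proof is correct and rests on the same telescoping identity as the paper's. The paper packages it as a forward induction---unrolling $\Expecti{[\Qfnc - \Qpi{\policy}]}{0}$ one Bellman step at a time to obtain $\Expecti{[\Qfnc - \Qpi{\policy}]}{0} = \sum_{\tau=0}^{t-1}\discount^{\tau}\Expecti{\BellmanError{\Qfnc}{\policy}{}}{\tau} + \discount^{t}\Expecti{[\Qfnc - \Qpi{\policy}]}{t}$ and then letting $t\to\infty$---whereas you start from $\PolComplex$, expand the occupancy measure as a discounted sum, and collapse the series; the underlying computation is the same.
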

\noindent See \cref{SecProofLemSimulation} for the proof of this
claim.

\subsubsection{Proof of policy evaluation claims}

First of all, we have the elementary bounds
\begin{align*}
\abs{\VminEmp{\policy} - \Vpi{\policy}} & = \abs{ \min_{\Qfnc \in
    \EmpiricalFeasibleSet{\policy}} \Expecti{\Qfnc(\MyState,
    \policy)}{\MyState \sim \startdistribution} - \Vpi{\policy} } \leq
\max_{\Qfnc \in \EmpiricalFeasibleSet{\policy}} \abs{
  \Expecti{\Qfnc(\MyState ,\policy)}{\MyState \sim \startdistribution}
  - \Vpi{\policy} }, \quad \mbox{and} \\
\abs{\VmaxEmp{\policy} - \Vpi{\policy}} & = \abs{ \max_{\Qfnc \in
    \EmpiricalFeasibleSet{\policy}} \Expecti{\Qfnc(\MyState
    ,\policy)}{\MyState \sim \startdistribution} - \Vpi{\policy} }
\leq \max_{\Qfnc \in \EmpiricalFeasibleSet{\policy}} \abs{
  \Expecti{\Qfnc(\MyState, \policy)}{\MyState \sim \startdistribution}
  - \Vpi{\policy} }.
\end{align*}
Consequently, in order to prove the bound~\eqref{EqnWidthBound} it
suffices to upper bound the right-hand side common in the two above
displays.  Since $\EmpiricalFeasibleSet{\policy} \subseteq
\PopulationFeasibleSet{\policy}$, we have the upper bound
\begin{align*}
\max_{\Qfnc \in \EmpiricalFeasibleSet{\policy}} \abs{
  \Expecti{\Qfnc(\MyState ,\policy)}{\MyState \sim \startdistribution}
  - \Vpi{\policy} } & \leq
\max_{\Qfnc\in\PopulationFeasibleSet{\policy}} \abs{
  \Expecti{\Qfnc(\MyState ,\policy)}{\MyState \sim \startdistribution} -
  \Vpi{\policy} } \\
& = \max_{\Qfnc\in\PopulationFeasibleSet{\policy}}
\abs{\Expecti{[\Qfnc(\MyState ,\policy) - \Qpi{\policy}(\MyState
      ,\policy)]} {\MyState \sim \startdistribution}} \\ &
\overset{\text{(i)}}{=} \horizon
\max_{\Qfnc\in\PopulationFeasibleSet{\policy}}
\frac{\PolComplex}{1-\discount}
\end{align*}
where step (i) follows from \cref{lem:Simulation}.  Combined with the
earlier displays, this completes the proof of the
bound~\eqref{EqnWidthBound}.

We now show the inclusion $[ \VminEmp{\policy}, \VmaxEmp{\policy}] \ni
\Vpi{\policy}$ when weak realizability holds.  By definition of weak
realizability, there exists some $\QpiWeak{\policy} \in
\PopulationFeasibleSetInfty{\policy}$.  In conjunction with  our
sandwich assumption, we are guaranteed that $\QpiWeak{\policy} \in
\PopulationFeasibleSetInfty{\policy} \subseteq
\EmpiricalFeasibleSet{\policy}$, and consequently
\begin{align*}
  \VminEmp{\policy} & = \min_{\Qfnc\in\EmpiricalFeasibleSet{\policy}}
  \Expecti{\Qfnc(\MyState ,\policy)}{\MyState \sim \startdistribution}
  \leq \min_{\Qfnc\in\PopulationFeasibleSetInfty{\policy}}
  \Expecti{\Qfnc(\MyState ,\policy)}{\MyState \sim \startdistribution}
  \leq \Expecti{\QpiWeak{\policy}(\MyState ,\policy)}{\MyState \sim
    \startdistribution} = \Vpi{\policy}, \quad \mbox{and} \\
\VmaxEmp{\policy} & = \max_{\Qfnc\in\EmpiricalFeasibleSet{\policy}}
\Expecti{\Qfnc(\MyState ,\policy)}{\MyState \sim \startdistribution}
\geq \max_{\Qfnc\in\PopulationFeasibleSetInfty{\policy}}
\Expecti{\Qfnc(\MyState ,\policy)}{\MyState \sim \startdistribution}
\geq \Expecti{\QpiWeak{\policy}(\MyState ,\policy)}{\MyState \sim
  \startdistribution} = \Vpi{\policy}.
\end{align*}

\subsubsection{Proof of policy optimization claims}

We now prove the oracle inequality~\eqref{EqnOracle} on the value
$\Vpi{\piemp}$ of a policy $\piemp$ that optimizes the max-min
criterion.  Fix an arbitrary comparator policy $\polcomp$.  Starting
with the inclusion $[\VminEmp{\piemp}, \VmaxEmp{\piemp}] \ni
\Vpi{\piemp}$, we have
\begin{align*}
\Vpi{\piemp} \stackrel{(i)}{\geq} \VminEmp{\piemp}
\stackrel{(ii)}{\geq} \VminEmp{\polcomp} \; = \; \Vpi{\polcomp} -
\Big(\Vpi{\polcomp} - \VminEmp{\polcomp} \Big)
\stackrel{(iii)}{\geq} \Vpi{\polcomp} - \horizon \max_{\Qfnc \in
  \PopulationFeasibleSet{\polcomp}}
\frac{|\PolComplexGen{\polcomp}|}{1-\discount},
\end{align*}
where step (i) follows from the stated inclusion at the start of the
argument; step (ii) follows since $\piemp$ solves the max-min program;
and step (iii) follows from the bound $|\Vpi{\polcomp} -
\VminEmp{\polcomp}| \leq \horizon
\max_{\Qfnc\in\PopulationFeasibleSet{\polcomp}}
\frac{\PolComplex}{1-\discount}$, as proved in the preceding section.
This lower bound holds uniformly for all comparators $\polcomp$,
from which the stated claim follows.


\subsection{Proof of \cref{lem:Simulation}}
\label{SecProofLemSimulation}

For each $\timestep = 1, 2, \ldots$, let $\Expecti{}{\timestep}$ be
the expectation over the state-action pair at timestep $\timestep$
upon starting from $\startdistribution$, so that we have
$\Estart{\Qfnc - \Qpi{\policy} }{,\policy} = 
\Expecti{[\Qfnc -
    \Qpi{\policy}]}{0}$ by definition.  We claim that
\begin{align}
\label{EqnInduction}  
\Expecti{[\Qfnc - \Qpi{\policy}]}{0} & = \sum_{\tau=1}^{\timestep}
\discount^{\tau-1} \Expecti{\BellmanError{\Qfnc}{\policy}{}}{\tau-1} +
\discount^{\timestep}\Expecti{[\Qfnc - \Qpi{\policy} ]}{\timestep}
\qquad \mbox{for all $\timestep = 1, 2, \ldots$.}
\end{align}
For the base case $\timestep = 1$, we have
\begin{align}
\label{EqnBase}  
  \Expecti{[ \Qfnc - \Qpi{\policy}]}{0} = \Expecti{[\Qfnc -
      \BellmanEvaluation{\policy}\Qfnc]}{0} + \Expecti{
    [\BellmanEvaluation{\policy}\Qfnc - \BellmanEvaluation{\policy}
      \Qpi{\policy} ]}{0} & = \Expecti{[ \Qfnc -
      \BellmanEvaluation{\policy}\Qfnc ]}{0} + \discount \Expecti{[
      \Qfnc - \Qpi{\policy} ]}{1},
\end{align}
where we have used the definition of the Bellman evaluation operator
to assert that $\Expecti{ [\BellmanEvaluation{\policy}\Qfnc -
    \BellmanEvaluation{\policy} \Qpi{\policy} ]}{0} = \discount
\Expecti{[ \Qfnc - \Qpi{\policy} ]}{1}$.  Since $\Qfnc -
\BellmanEvaluation{\policy}\Qfnc = \BellmanError{\Qfnc}{\policy}{}$,
the equality~\eqref{EqnBase} is equivalent to the
claim~\eqref{EqnInduction} with $\timestep = 1$.

Turning to the induction step, we now assume that the
claim~\eqref{EqnInduction} holds for some $\timestep \geq 1$, and show
that it holds at step $\timestep + 1$.  By a similar argument, we can
write
\begin{align*}
\discount^{\timestep}\Expecti{[ \Qfnc - \Qpi{\policy}]}{\timestep} =
\discount^{\timestep}\Expecti{[\Qfnc -
    \BellmanEvaluation{\policy}\Qfnc +
    \BellmanEvaluation{\policy}\Qfnc - \BellmanEvaluation{\policy}
    \Qpi{\policy} ]}{\timestep} & = \discount^{\timestep}\Expecti{[
    \Qfnc - \BellmanEvaluation{\policy}\Qfnc ]}{\timestep} +
\discount^{\timestep+1}\Expecti{[ \Qfnc - \Qpi{\policy}
]}{\timestep+1} \\ & =
\discount^{\timestep}\Expecti{\BellmanError{\Qfnc}{\policy}{}}{\timestep}
+ \discount^{\timestep+1}\Expecti{[ \Qfnc - \Qpi{\policy}
]}{\timestep+1}.
\end{align*}
By the induction hypothesis, equality~\eqref{EqnInduction}
holds for $\timestep$, and substituting the above equality
shows that it also holds at time $\timestep + 1$.

Since the equivalence~\eqref{EqnInduction} holds for all $\timestep$,
we can take the limit as $\timestep \rightarrow \infty$, and doing so
yields the claim.


\subsection{Proof of \cref{thm:NewPolicyEvaluation}}
\label{SecProofNewPolicyEvaluation}

The proof relies on proving a high probability bound to \cref{EqnSandwich}
and then invoking \cref{prop:Deterministic} to conclude.

In the statement of the theorem, we require choosing $\epsilon > 0$ to
satisfy the upper bound $\epsilon^2 \precsim \frac{\Rad(\epsilon,
  \delta)}{\numobs}$, and then provide an upper bound in terms of
$\sqrt{\Rad(\epsilon,\delta)/\numobs}$.  It is equivalent to instead
choose $\epsilon$ to satisfy the lower bound $\epsilon^2 \succsim
\frac{\Rad(\epsilon,\delta)}{\numobs}$, and then provide upper bounds
proportional to $\epsilon$.  For the purposes of the proof, the latter
formulation turns out to be more convenient and we pursue it here. \\

To streamline notation, let us introduce the shorthand
$\inprod{f}{\Diff{\policy}(\Qfnc)} \defeq
\inprod{f}{\delta^\policy(\Qfnc)}_\numobs -
\inprod{f}{\BellError^\policy(\Qfnc)}_\mudist$.  For each pair
$(\Qfnc, \policy)$, we then define the random variable
\begin{align*}
  \ZvarShort \defeq \sup_{\TestFunction{} \in
    \TestFunctionClass{\policy} } \frac{ \big|
    \innerprodweighted{\TestFunction{}}{\Diff{\policy}(\Qfnc)}{}\big|}{\TestNormaRegularizerEmp{}}.
\end{align*}
Central to our proof of the theorem is a uniform bound on this random
variable, one that holds for all pairs $(\Qfnc, \policy)$.  In
particular, our strategy is to exhibit some $\epsilon > 0$ for which,
upon setting $\regpar = 4 \epsilon^2$, we have the guarantees
\begin{subequations}
  \begin{align}
\label{EqnSandwichZvar}
\frac{1}{4} \leq \frac{\sqrt{\|f\|_\numobs^2 + \regpar}}{
  \sqrt{\|f\|_\mudist^2 + \regpar}} \leq 2 \qquad & \mbox{uniformly
  for all $f \in \TestFunctionClass{}$, and} \\
\label{EqnUniformZvar}    
\ZvarShort \leq \epsilon \quad & \mbox{uniformly for all $(\Qfnc,
  \policy)$,}
  \end{align}
\end{subequations}
both with probability at least $1 - \delta$.  In particular,
consistent with the theorem statement, we show that this claim holds
if we choose $\epsilon > 0$ to satisfy the inequality
\begin{align}
\label{EqnOriginalChoice}  
\epsilon^2 & \geq \cspec \frac{\Rad(\epsilon, \delta)}{\numobs} \;
\end{align}
where $\cspec > 0$ is a sufficiently large (but universal) constant.

Supposing that the bounds~\eqref{EqnSandwichZvar}
and~\eqref{EqnUniformZvar} hold, let us now establish the set
inclusions claimed in the theorem.

\paragraph{Inclusion  $\PopulationFeasibleSetInfty{\policy} \subseteq
  \EmpiricalFeasibleSet{\policy}(\epsilon)$:}

Define the random variable $\Mplain_\numobs(\Qfnc, \policy) \defeq
\sup \limits_{\TestFunction{} \in \TestFunctionClass{\policy} } \frac{
  |\innerprodweighted{\TestFunction{}}{\BellError^\policy(\Qfnc)}{\mudist}|}{\TestNormaRegularizerEmp{}}$,
and observe that $\Qfnc \in \PopulationFeasibleSetInfty{\policy}$
implies that $\Mplain_\numobs(\Qfnc, \policy) = 0$.  With this
definition, we have
\begin{align*}
\sup_{\TestFunction{} \in \TestFunctionClass{\policy} } \frac{ \big|
  \innerprodweighted{\TestFunction{}}{\delta^{\policy}(\Qfnc)}{\numobs}\big|}{\TestNormaRegularizerEmp{}}
& \stackrel{(i)}{\leq} \Mplain_\numobs(\Qfnc, \policy) + \ZvarShort \;
\stackrel{(ii)}{\leq} \epsilon
\end{align*}
where step (i) follows from the triangle inequality; and step (ii)
follows since $\Mplain_\numobs(\Qfnc, \policy) = 0$, and $\ZvarShort
\leq \epsilon$ from the bound~\eqref{EqnUniformZvar}.

\paragraph{Inclusion  $\EmpiricalFeasibleSet{\policy}(\epsilon) \subseteq
  \PopulationFeasibleSet{\policy}(4 \epsilon)$}

By the definition of $\PopulationFeasibleSet{\policy}(4 \epsilon)$, we
need to show that 
\begin{align*}
    \Mbar(\Qfnc, \policy) \defeq \sup \limits_{\TestFunction{} \in
      \TestFunctionClass{\policy} } \frac{ \big|
      \innerprodweighted{\TestFunction{}}{\BellError^\policy(\Qfnc)}{\mudist}\big|}{\TestNormaRegularizerPop{}}
    \leq 4 \epsilon \qquad \mbox{for any $\Qfnc \in
      \EmpiricalFeasibleSet{\policy}(\epsilon)$.}
\end{align*}
Now we have 
\begin{align*}
\Mbar(\Qfnc, \policy) \stackrel{(i)}{\leq} 2 \Mplain_\numobs(\Qfnc,
\policy) \stackrel{(ii)}{\leq} 2 \left \{
\sup_{\TestFunction{} \in
  \TestFunctionClass{\policy} } \frac{ \big|
  \innerprodweighted{\TestFunction{}}{\delta^{\policy}(\Qfnc)}{\numobs}\big|}{\TestNormaRegularizerEmp{}}
+ \ZvarShort \right \} \; \stackrel{(iii)}{\leq} 2 \big \{ \epsilon +
\epsilon \} \; = \; 4 \epsilon,
\end{align*}
where step (i) follows from the sandwich
relation~\eqref{EqnSandwichZvar}; step (ii) follows from the triangle
inequality and the definition of $\ZvarShort$; and step (iii) follows
since $\ZvarShort \leq \epsilon$ from the
bound~\eqref{EqnUniformZvar}, and
\begin{align*}
\sup_{\TestFunction{} \in \TestFunctionClass{\policy} } \frac{ \big|
  \innerprodweighted{\TestFunction{}}{\delta^{\policy}(\Qfnc)}{\numobs}\big|}{\TestNormaRegularizerEmp{}}
& \leq \epsilon, \qquad \mbox{using the inclusion $\Qfnc \in
  \EmpiricalFeasibleSet{\policy}(\epsilon)$.}
\end{align*}

Consequently, the remainder of our proof is devoted to establishing
the claims~\eqref{EqnSandwichZvar} and~\eqref{EqnUniformZvar}.  In
doing so, we make repeated use of some Bernstein bounds, stated in
terms of the shorthand $\SpecFun(\delta) =
\frac{\log(\nSamples/\delta)}{\numobs}$.
\begin{lemma}
\label{LemBernsteinBound}  
There is a universal constant $c$ such each the following statements
holds with probability at least $1 - \delta$.  For any $f$, we have
\begin{subequations}
  \begin{align}
\label{EqnBernsteinFsquare}
    \Big| \|f\|_\numobs^2 - \|f\|_\mudist^2 \Big| & \leq c \; \Big \{
    \|f\|_\mudist \sqrt{\SpecFun(\delta)} + \SpecFun(\delta) \Big \},
\end{align}
and for any $(\Qfnc, \policy)$ and any function $f$, we have
\begin{align}
 \label{EqnBernsteinBound}
    \big|\inprod{f}{\delta^\policy(\Qfnc)}_\numobs -
    \inprod{f}{\BellError^\policy(\Qfnc)}_\mudist \big| & \leq c \;
    \Big \{ \|f\|_\mudist \sqrt{\SpecFun(\delta)} + \|f\|_\infty
    \SpecFun(\delta) \Big \}.
\end{align}
\end{subequations}
\end{lemma}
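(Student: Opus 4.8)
The plan is to derive both inequalities from a single martingale Bernstein (Freedman-type) bound, changing only the underlying martingale difference sequence. For the variance bound~\eqref{EqnBernsteinFsquare} I would take $X_\iSample \defeq f^2(\state_\iSample,\action_\iSample) - \E_{\mu_\iSample}[f^2 \mid \Filtration_\iSample]$; by the definition of the reference measure $\mudist = \tfrac{1}{\nSamples}\sum_\iSample \mu_\iSample(\cdot\mid\Filtration_\iSample)$, its empirical average equals $\|f\|_\numobs^2 - \|f\|_\mudist^2$. For the inner-product bound~\eqref{EqnBernsteinBound} I would take $X_\iSample \defeq f(\state_\iSample,\action_\iSample)\,\delta^\policy(\Qfnc)(z_\iSample) - \E_{\mu_\iSample}[\,f\,\BellError^\policy(\Qfnc)\mid\Filtration_\iSample]$, whose empirical average is exactly $\inprod{f}{\delta^\policy(\Qfnc)}_\numobs - \inprod{f}{\BellError^\policy(\Qfnc)}_\mudist$. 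Under the adapted-data model (\cref{asm:Dataset}) each $\{X_\iSample\}$ is a martingale difference sequence with respect to $\{\Filtration_\iSample\}$, and it suffices to prove each bound for a fixed argument, the uniform version over function classes being assembled later by discretization.

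The one structural fact enabling the second construction is conditional unbiasedness of the temporal difference error. Conditioning on $(\state_\iSample,\action_\iSample,\identifier_\iSample)$ and invoking that $\reward$ is unbiased for the mean reward and $\successorstate_\iSample \sim \TransitionLaw\psa$ (\cref{asm:Dataset}), the inner expectation of the TD error collapses to the Bellman error, $\E[\delta^\policy(\Qfnc)(z_\iSample)\mid\state_\iSample,\action_\iSample,\identifier_\iSample,\Filtration_\iSample] = \BellError^\policy(\Qfnc)(\state_\iSample,\action_\iSample)$; averaging over the outer draw $(\state_\iSample,\action_\iSample,\identifier_\iSample)\sim\mu_\iSample(\cdot\mid\Filtration_\iSample)$ gives $\E[X_\iSample\mid\Filtration_\iSample]=0$. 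This is precisely the ``no double sampling'' phenomenon noted in the main text, since the reward and next-state noise enter only linearly.

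I would then verify the two ingredients for Freedman's inequality. For boundedness, $|X_\iSample|\le 1$ in the first case (as $\sup|f|\le 1$), while in the second $\sup|\Qfnc|\le 1$, $|\reward|\le 1$ and $\discount<1$ force $|\delta^\policy(\Qfnc)|$ to be at most a universal constant, so $|X_\iSample|\le c\|f\|_\infty$. For the predictable quadratic variation, bounding $\E[X_\iSample^2\mid\Filtration_\iSample]$ by the relevant conditional second moment and using $f^4\le f^2$ (respectively the constant bound on the squared TD error) yields $\sum_\iSample\E[X_\iSample^2\mid\Filtration_\iSample]\le c\,\nSamples\|f\|_\mudist^2$ in both cases. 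Thus the predictable variance is governed by $\nSamples\|f\|_\mudist^2$ and the increment range by $1$ or $\|f\|_\infty$, which are exactly the two quantities multiplying $\sqrt{\SpecFun(\delta)}$ and $\SpecFun(\delta)$ on the right-hand sides.

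A Freedman/Bernstein martingale inequality then delivers, with probability $1-\delta$, a bound of the form $c\{\|f\|_\mudist\sqrt{L/\nSamples}+b\,L/\nSamples\}$ with $L=\log(\cdot/\delta)$. The main obstacle, and the origin of the $\log\nSamples$ concealed in $\SpecFun(\delta)=\log(\nSamples/\delta)/\nSamples$, is that the variance proxy $\|f\|_\mudist^2$ is itself random in the adapted model, since the reference measure $\mudist$ depends on the filtration, whereas the plain Freedman bound applies only for a deterministic variance ceiling. I would resolve this by a peeling argument: partition the range $\|f\|_\mudist^2\in(0,1]$ into $O(\log\nSamples)$ geometric strata, apply Freedman on each stratum with its deterministic variance ceiling, and union bound over strata. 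Rescaling $\delta$ by the number of strata converts $\log(1/\delta)$ into $\log(\log\nSamples/\delta)\le\log(\nSamples/\delta)$, which yields the stated $\SpecFun(\delta)$; for $\|f\|_\mudist^2$ below the smallest stratum the deviation is controlled by the bounded term alone. Running this argument separately for the two martingale difference sequences completes the proof.
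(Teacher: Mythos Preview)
Your proposal is correct and follows essentially the same argument as the paper: the same two martingale difference sequences, the same conditional-unbiasedness of the TD error via \cref{asm:Dataset}, and the same increment and predictable-variance bounds. The only difference is that where you propose an explicit peeling over geometric strata of the random $\|f\|_\mudist^2$ to justify the $\log\nSamples$ in $\SpecFun(\delta)$, the paper simply invokes a pre-packaged martingale Bernstein inequality (Theorem~1 of \cite{beygelzimer2011contextual}, stated as \cref{lem:Bernstein}) that already bounds the deviation in terms of the \emph{random} predictable quadratic variation $\tfrac{1}{\nSamples}\sum_t \E_t X_t^2$ with the $\log(\nSamples/\delta)$ factor built in---so your peeling is effectively reproving that cited lemma.
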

\noindent These bounds follow by identifying a martingale difference
sequence, and applying a form of Bernstein's inequality tailored to
the martingale setting.  See Section~\ref{SecProofLemBernsteinBound}
for the details.

\subsection{Proof of the sandwich relation~\eqref{EqnSandwichZvar}}

We claim that (modulo the choice of constants) it suffices to show
that
\begin{align}
\label{EqnCleanSandwich}
\Big| \|f\|_\numobs - \|f\|_\mudist \Big| & \leq \epsilon \qquad
\mbox{uniformly for all $f \in \TestFunctionClass{}$}
\end{align}
for some universal constant $c'$.  Indeed, when this bound holds, we
have
\begin{align*}
  \|f\|_\numobs + 2 \epsilon \leq \|f\|_\mudist + 3 \epsilon \leq
  \frac{3}{2} \{ \|f\|_\mudist + 2\epsilon \}, \quad \mbox{and} \quad
  \|f\|_\numobs + 2 \epsilon  \geq \|f\|_\mudist + \epsilon \geq
  \frac{1}{2} \big \{ \|f\|_\mudist + 2 \epsilon \},
\end{align*}
so that $\frac{\|f\|_\mudist + 2 \epsilon}{\|f \|_\numobs + 2
  \epsilon} \in \big[ \frac{1}{2}, \frac{3}{2} \big]$.  To relate
this statement to the claimed sandwich, observe the inclusion
$\frac{\|f\| + \sqrt{2 \epsilon}}{\sqrt{\|f\|^2 + 4 \epsilon^2}} \in
[1, \sqrt{2}]$, where $\|f\|$ can be either $\|f\|_\numobs$ or
$\|f\|_\mudist$.  Combining this fact with our previous bound, we see
that $\frac{\sqrt{\|f\|_\numobs^2 + 4
    \epsilon^2}}{\sqrt{\|f\|_\mudist^2 + 4 \epsilon^2}} \in \Big[
  \frac{1}{\sqrt{2}} \frac{1}{2}, \frac{3 \sqrt{2}}{2} \Big] \subset
\big[\frac{1}{4}, 3 \big]$, as claimed. \\

  The remainder of our analysis is focused on proving the
  bound~\eqref{EqnCleanSandwich}.  Defining the random variable
  $Y_\numobs(\testfun) = \big| \|f\|_\numobs - \|f\|_\mudist \big|$,
  we need to establish a high probability bound on $\sup_{f \in
    \TestFunctionClass{}} Y_\numobs(\testfun)$.  Let $\{f^1, \ldots,
  f^N \}$ be an $\epsilon$-cover of $\TestFunctionClass{}$ in the
  sup-norm.  For any $f \in \TestFunctionClass{}$, we can find some
  $f^j$ such that $\|f - f^j\|_\infty \leq \epsilon$, whence
\begin{align*}
  Y_\numobs(f) \leq Y_\numobs(f^j) + \big | Y_\numobs(f^j) -
  Y_\numobs(f) \big| & \stackrel{(i)}{\leq} Y_\numobs(f^j) + \big|
  \|f^j\|_\numobs - \|f\|_\numobs \big| + \big| \|f^j\|_\mudist -
  \|f\|_\mudist \big| \\
& \stackrel{(ii)}{\leq} Y_\numobs(f^j) + \|f^j - f\|_\numobs + \|f^j -
  f\|_\mudist \\
& \stackrel{(iii)}{\leq} Y_\numobs(f^j) + 2 \epsilon,
\end{align*}
where steps (i) and (ii) follow from the triangle inequality; and step
(iii) follows from the inequality $\max \{ \|f^j - f\|_\numobs, \|f^j
- f\|_\mudist \} \leq \|f^j - f\|_\infty \leq \epsilon$.  Thus, we
have reduced the problem to bounding a finite maximum.

Note that if $\max \{ \|f^j\|_\numobs, \|f^j\|_\mudist \} \leq
\epsilon$, then we have $Y_\numobs(f^j) \leq 2 \epsilon$ by the
triangle inequality.  Otherwise, we may assume that $\|f^j\|_\numobs +
\|f^j\|_\numobs \geq \epsilon$.  With probability at least $1 -
\delta$, we have
\begin{align*}
\Big| \|f^j\|_\numobs - \|f\|_\mudist \Big| = \frac{ \Big|
  \|f^j\|_\numobs^2 - \|f\|_\mudist^2 \Big|}{\|f^j\|_\numobs +
  \|f^j\|_\mudist} & \stackrel{(i)}{\leq} \frac{ c \big \{
  \|f^j\|_\mudist \sqrt{\SpecFun(\delta)} + \SpecFun(\delta) \big
  \}}{\|f^j\|_\mudist + \|f^j\|_\numobs} \\
& \stackrel{(ii)}{\leq} c \Big \{ \sqrt{\SpecFun(\delta)} +
\frac{\SpecFun(\delta)}{\epsilon} \Big \},
\end{align*}
where step (i) follows from the Bernstein
bound~\eqref{EqnBernsteinFsquare} from Lemma~\ref{LemBernsteinBound},
and step (ii) uses the fact that $\|f^j\|_\numobs + \|f^j\|_\numobs
\geq \epsilon$.

Taking union bound over all $N$ elements in the cover and replacing
$\delta$ with $\delta/N$, we have
\begin{align*}
\max_{j \in [N]} Y_\numobs(f^j) & \leq  c \Big \{
\sqrt{\SpecFun(\delta/N)} + \frac{\SpecFun(\delta/N)}{\epsilon} \Big
\}
\end{align*}
with probability at least $1 - \delta$.  Recalling that $N =
N_\epsilon(\Fclass)$, our choice~\eqref{EqnOriginalChoice} of
$\epsilon$ ensures that $\sqrt{\SpecFun(\delta/N)} \leq c \; \epsilon$
for some universal constant $c$.  Putting together the pieces (and
increasing the constant $\cspec$ in the
choice~\eqref{EqnOriginalChoice} of $\epsilon$ as needed) yields the
claim.


\subsection{Proof of the uniform upper bound~\eqref{EqnUniformZvar}}
\label{SecUniProof}

We need to establish an upper bound on $\ZvarShort$ that that holds
uniformly for all $(\Qfnc, \policy)$.  Our first step is to prove a
high probability bound for a fixed pair.  We then apply a standard
discretization argument to make it uniform in the pair.

Note that we can write $\ZvarShort = \sup_{f \in \TestFunctionClass{}}
\frac{V_\numobs(f)}{\sqrt{\|f\|_\numobs^2 + \regpar}}$, where we have
defined $V_\numobs(f) \defeq |\inprod{f}{\Diff{\policy}(Q)}|$.  Our
first lemma provides a uniform bound on the latter random variables:
\begin{lemma}
\label{LemVbound}  
Suppose that $\epsilon^2 \geq \SpecFun \big(\delta/N_\epsilon(\Fclass)
\big)$.  Then we have
  \begin{align}
\label{EqnVbound}    
V_\numobs(f) & \leq c \big \{ \|f\|_\mudist \epsilon + \epsilon^2 \big
\} \qquad \mbox{for all $f \in \TestFunctionClass{}$}
  \end{align}
  with probability at least $1 - \delta$.
\end{lemma}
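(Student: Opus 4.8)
The plan is to control the one-dimensional empirical process $f \mapsto V_\numobs(f) = |\inprod{f}{\Diff{\policy}(\Qfnc)}|$ uniformly over the test class, for a fixed pair $(\Qfnc, \policy)$. The starting point is the pointwise martingale Bernstein bound~\eqref{EqnBernsteinBound} of \cref{LemBernsteinBound}, which for a single $f$ gives $V_\numobs(f) \leq c\{\|f\|_\mudist \sqrt{\SpecFun(\delta)} + \|f\|_\infty \SpecFun(\delta)\}$. The structural fact I would exploit throughout is that $\inprod{f}{\Diff{\policy}(\Qfnc)}$ is \emph{linear} in $f$ (both $\inprod{\cdot}{\delta^\policy(\Qfnc)}_\numobs$ and $\inprod{\cdot}{\BellError^\policy(\Qfnc)}_\mudist$ are linear), so increments take the clean form $\inprod{f}{\Diff{\policy}(\Qfnc)} - \inprod{f'}{\Diff{\policy}(\Qfnc)} = \inprod{f - f'}{\Diff{\policy}(\Qfnc)}$, and the Bernstein bound applies verbatim to differences $f - f'$.

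First I would discretize at the coarse scale $\epsilon$: take a sup-norm $\epsilon$-cover $\{f^j\}_{j=1}^{N}$ of $\Fclass$ with $N = N_\epsilon(\Fclass)$, apply~\eqref{EqnBernsteinBound} with $\delta \to \delta/N$, and union bound. The hypothesis $\epsilon^2 \geq \SpecFun(\delta/N_\epsilon(\Fclass))$ gives $\sqrt{\SpecFun(\delta/N)} \leq \epsilon$ and $\SpecFun(\delta/N) \leq \epsilon^2$, which together with $\|f^j\|_\infty \leq 1$ yields $V_\numobs(f^j) \leq c\{\|f^j\|_\mudist\epsilon + \epsilon^2\}$ simultaneously over the net; after $\|f^j\|_\mudist \leq \|f\|_\mudist + \epsilon$ this is the target bound at net points. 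The difficulty is the residual $\inprod{f - f^j}{\Diff{\policy}(\Qfnc)}$: bounding it deterministically only gives $O(\epsilon)$ (the TD and Bellman errors are $O(1)$ while $\|f-f^j\|_\infty \leq \epsilon$), which is far too crude, as it would destroy the $\epsilon^2$ term that is essential in the small-norm regime $\|f\|_\mudist \lesssim \epsilon$ and is exactly what makes $\ZvarShort \lesssim \epsilon$ downstream.

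To recover the $\epsilon^2$ scaling on the increments I would run a chaining argument down to the fine scale $\epsilon^2$. Writing $f = f^{(0)} + \sum_{k \geq 1}(f^{(k)} - f^{(k-1)})$ with $f^{(k)}$ the projection of $f$ onto a $u_k$-cover, $u_k = \epsilon 2^{-k}$, terminated at $u_K \asymp \epsilon^2$, linearity gives $\inprod{f}{\Diff{\policy}(\Qfnc)} = \inprod{f^{(0)}}{\Diff{\policy}(\Qfnc)} + \sum_k \inprod{f^{(k)} - f^{(k-1)}}{\Diff{\policy}(\Qfnc)} + \inprod{f - f^{(K)}}{\Diff{\policy}(\Qfnc)}$. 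The tail term has sup-norm $\leq u_K \asymp \epsilon^2$ and is handled deterministically. For each link I would apply~\eqref{EqnBernsteinBound} to $f^{(k)} - f^{(k-1)}$ (whose $\|\cdot\|_\infty$ and $\|\cdot\|_\mudist$ are $\lesssim u_{k-1}$) with a union bound over the $\lesssim N_{u_k}^2$ links and confidence budget $\delta$ split across the $K \asymp \log(1/\epsilon)$ levels. Summing the per-level contributions $u_{k-1}\sqrt{\log N_{u_k}/\numobs} + u_{k-1}\log N_{u_k}/\numobs$ telescopes into the localized entropy integral $\int_{\epsilon^2}^{\epsilon}\Mfun(\log N_u(\Fclass)/\numobs)\,du$ from~\eqref{EqnDefnR}, where $\Mfun = \max\{\cdot,\sqrt{\cdot}\}$ stitches together the sub-Gaussian and sub-exponential halves of Bernstein; by the radius choice~\eqref{EqnOriginalChoice} this integral is $\lesssim \epsilon^2$.

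Combining the three pieces gives $V_\numobs(f) \leq c\{\|f\|_\mudist\epsilon + \epsilon^2\}$ uniformly over $f$, as claimed. The main obstacle is the chaining step: because the dataset is only adapted (\cref{asm:Dataset}) rather than i.i.d., I cannot invoke standard symmetrization or local-Rademacher machinery, and must instead execute the multi-scale union bound by hand using the martingale Bernstein inequality~\eqref{EqnBernsteinBound} at every level, tracking how the variance proxy $\|f^{(k)}-f^{(k-1)}\|_\mudist$ and the confidence budget scale so that the accumulated link increments reproduce the entropy integral over $[\epsilon^2,\epsilon]$ rather than the much larger single-scale bound one would get from covering at resolution $\epsilon^2$ directly.
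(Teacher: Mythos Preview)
Your proposal is correct and follows essentially the same two-part structure as the paper: a coarse $\epsilon$-net handled by Bernstein plus union bound, followed by chaining on the residual over the interval $[\epsilon^2,\epsilon]$ to produce the localized entropy integral in~\eqref{EqnDefnR}. The only stylistic differences are that the paper first localizes to $\{f : \|f\|_\mudist \leq \epsilon\}$ via rescaling (so it only needs to show $V_\numobs(f)\lesssim \epsilon^2$ on that sub-level set), whereas you carry the $\|f\|_\mudist\epsilon$ term through directly; and the paper packages the chaining by citing a generic empirical-process supremum bound (Theorem~5.36 in~\cite{wainwright2019high}) once the Bernstein increment bounds are in hand, rather than executing the multi-scale union bound by hand as you propose---your concern about the adapted setting is slightly overstated, since Dudley-type chaining only needs increment tail bounds, which~\eqref{EqnBernsteinBound} already supplies in the martingale case.
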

\noindent See \cref{SecProofLemVbound} for the proof of this claim. \\

We claim that the bound~\eqref{EqnVbound} implies that, for any fixed
pair $(\Qfnc, \policy)$, we have
\begin{align*}
  Z_\numobs(\Qfnc, \policy) \leq
  c' \epsilon \qquad \mbox{with probability at least $1 - \delta$.}
\end{align*}
Indeed, when Lemma~\ref{LemVbound} holds, for any $f \in
\TestFunctionClass{}$, we can write
\begin{align*}
  \frac{V_\numobs(f)}{\sqrt{\|f\|_\numobs^2 + \regpar}} =
  \frac{\sqrt{\|f\|_\mudist^2 + \regpar}}{\sqrt{\|f\|_\numobs^2 +
      \regpar}} \; \frac{V_\numobs(f)}{\sqrt{\|f\|_\mudist^2 +
      \regpar}} \stackrel{(i)}{\leq} \; 3 \; \frac{c \big \{
    \|f\|_\mudist \epsilon + \epsilon^2 \big \}}{\sqrt{\|f\|_\mudist^2
      + \regpar}} \; \stackrel{(ii)}{\leq} \; c' \epsilon,
\end{align*}
where step (i) uses the sandwich relation~\eqref{EqnSandwichZvar},
along with the bound~\eqref{EqnVbound}; and step (ii) follows given
the choice $\regpar = 4 \epsilon^2$.  We have thus proved that for any
fixed $(\Qfnc, \policy)$ and $\epsilon \geq \SpecFun
\big(\delta/N_\epsilon(\Fclass) \big)$, we have
\begin{align}
\label{EqnFixZvarBound}  
\ZvarShort & \leq c' \epsilon \qquad \mbox{with probability at least
  $1 - \delta$.}
\end{align}

Our next step is to upgrade this bound to one that is uniform over all
pairs $(\Qfnc, \policy)$.  We do so via a discretization argument: let
$\{\Qfnc^j\}_{j=1}^J$ and $\{\pi^k\}_{k=1}^K$ be $\epsilon$-coverings
of $\Qclass{}$ and $\PolicyClass$, respectively.
\begin{lemma}
\label{LemDiscretization}
We have the upper bound
\begin{align}
  \sup_{\Qfnc, \policy} \ZvarShort & \leq \max_{(j,k) \in [J] \times
    [K]} \Zvar{\Qfnc^j}{\policy^k} + 4 \epsilon.
\end{align}
\end{lemma}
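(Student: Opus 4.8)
The plan is to treat $\Zvar{\Qfnc}{\policy}$ as a function of the pair $(\Qfnc,\policy)$ and to show that it is Lipschitz with respect to the sup-norm on $\Qclass{}$ and the sup-TV norm on $\PolicyClass$, with a Lipschitz constant that interacts favourably with the self-normalizing denominator. Throughout this part of the argument, and consistently with the fixed-pair bound~\eqref{EqnFixZvarBound}, I would take the supremum defining $\Zvar{\Qfnc}{\policy}$ over the fixed union class $\TestFunctionClass{}$ rather than the policy-dependent $\TestFunctionClass{\policy}$; this only enlarges the supremum and, crucially, makes both the index set and the denominator $\TestNormaRegularizerEmp{}$ independent of $(\Qfnc,\policy)$. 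All dependence on $(\Qfnc,\policy)$ is then carried by the numerator $\inprod{f}{\delta^\policy(\Qfnc)}_\numobs - \inprod{f}{\BellError^\policy(\Qfnc)}_\mudist$, so the problem reduces to a uniform-in-$f$ Lipschitz estimate on this numerator.

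Concretely, I would fix an arbitrary $(\Qfnc,\policy)$ and choose cover elements $\Qfnc^j,\policy^k$ with $\norm{\Qfnc-\Qfnc^j}{\infty}\le\epsilon$ and $\|\policy-\policy^k\|_{\infty,1}\le\epsilon$. Using the elementary inequality $\sup_f \tfrac{|A(f)|}{d(f)} - \sup_f \tfrac{|B(f)|}{d(f)} \le \sup_f \tfrac{|A(f)-B(f)|}{d(f)}$ with $d(f)=\TestNormaRegularizerEmp{}$, it suffices to bound, uniformly in $f\in\TestFunctionClass{}$, the differences of the empirical and population numerators under the move $(\Qfnc,\policy)\mapsto(\Qfnc^j,\policy^k)$. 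The one delicate pointwise estimate is the \emph{bootstrapping} term $\Qfnc(\successorstate,\policy)=\E_{\action\sim\policy(\cdot\mid\successorstate)}[\Qfnc(\successorstate,\action)]$, which couples $\Qfnc$ and $\policy$; splitting the perturbation gives $|\Qfnc(\successorstate,\policy)-\Qfnc^j(\successorstate,\policy^k)| \le \norm{\Qfnc-\Qfnc^j}{\infty} + \sup_{\state}\|\policy(\cdot\mid\state)-\policy^k(\cdot\mid\state)\|_1 \le 2\epsilon$, using $\norm{\Qfnc^j}{\infty}\le 1$. Combined with $|\Qfnc(\state,\action)-\Qfnc^j(\state,\action)|\le\epsilon$ and $\discount\le 1$, this yields an $O(\epsilon)$ pointwise bound on both $\delta^\policy(\Qfnc)-\delta^{\policy^k}(\Qfnc^j)$ and $\BellError^\policy(\Qfnc)-\BellError^{\policy^k}(\Qfnc^j)$.

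The quantitative heart of the argument is to insert these pointwise bounds into the two inner products and apply Cauchy--Schwarz so that the appropriate norm of $f$ appears. For the empirical term this produces a factor $\norm{f}{\numobs}$, giving a contribution $\lesssim \epsilon\,\norm{f}{\numobs}/\TestNormaRegularizerEmp{}\le c\epsilon$ after dividing by the denominator. For the population term Cauchy--Schwarz against $\mudist$ instead produces a factor $\norm{f}{\mudist}$, which is mismatched with the empirical denominator; this is reconciled through the sandwich relation~\eqref{EqnSandwichZvar}, since on that high-probability event $\norm{f}{\mudist}\le\sqrt{\norm{f}{\mudist}^2+\regpar}\le 4\,\TestNormaRegularizerEmp{}$, so the population contribution is also $O(\epsilon)$. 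Assembling the pieces bounds $|\Zvar{\Qfnc}{\policy}-\Zvar{\Qfnc^j}{\policy^k}|$ by a universal constant times $\epsilon$; a careful accounting of these constants (using $\regpar=4\epsilon^2$, $\discount<1$, and $\norm{\Qfnc}{\infty}\le 1$) yields the $O(\epsilon)$ deviation recorded as $4\epsilon$ in the statement. Since $(\Qfnc,\policy)$ was arbitrary and $(\Qfnc^j,\policy^k)$ ranges over the product cover, taking the supremum on the left and the maximum over $(j,k)\in[J]\times[K]$ on the right gives the claim.

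The main obstacle is precisely the interplay between the two norms: the perturbation of the numerator is naturally controlled in $\norm{f}{\numobs}$ for the empirical piece but in $\norm{f}{\mudist}$ for the population piece, while the self-normalizing denominator is purely empirical. The argument only closes because the self-normalization, together with the already-established sandwich relation~\eqref{EqnSandwichZvar}, converts the population norm into a controlled multiple of the empirical denominator; without that conversion the population contribution would not be $O(\epsilon)$. A secondary care-point is the bootstrap coupling of $\Qfnc$ and $\policy$ through $\Qfnc(\successorstate,\policy)$, which forces the two covers to be used jointly rather than term by term.
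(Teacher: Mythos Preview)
Your proposal is correct and takes essentially the same approach as the paper: pointwise bounds on the TD/Bellman error differences, Cauchy--Schwarz to extract factors $\|f\|_\numobs$ and $\|f\|_\mudist$, and the sandwich relation~\eqref{EqnSandwichZvar} to reconcile the population norm with the empirical denominator. The only cosmetic difference is that the paper splits the perturbation into two steps ($T_1$: change $\pi$, holding $Q$; $T_2$: change $Q$, holding $\pi^k$) rather than perturbing jointly, and its own accounting in fact yields $T_1+T_2\le 24\epsilon$ rather than the stated $4\epsilon$, so the constant should be read as $O(\epsilon)$ in either argument.
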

\noindent See Section~\ref{SecProofLemDiscretization} for the proof
of this claim.

If we replace $\delta$ with $\delta/(J K)$, then we are guaranteed
that the bound~\eqref{EqnFixZvarBound} holds uniformly over the family
$\{ \Qfnc^j \}_{j=1}^J \times \{ \policy^k \}_{k=1}^K$.  Recalling
that $J = N_\epsilon(\Qclass{})$ and $K = N_\epsilon(\Pi)$, we
conclude that for any $\epsilon$ satisfying the
inequality~\eqref{EqnOriginalChoice}, we have $\sup_{\Qfnc, \policy}
\ZvarShort \leq \ctil \epsilon$ with probability at least $1 -
\delta$.  (Note that by suitably scaling up $\epsilon$ via the choice
of constant $\cspec$ in the bound~\eqref{EqnOriginalChoice}, we can
arrange for $\ctil = 1$, as in the stated claim.)


\subsection{Proofs of supporting lemmas}

In this section, we collect together the proofs
of~\cref{LemVbound,LemDiscretization}, which were stated and used
in~\Cref{SecUniProof}.


\subsubsection{Proof of ~\cref{LemVbound}}
\label{SecProofLemVbound} 
We first localize the problem to the class $\DF(\epsilon) = \{f \in
\TestFunctionClass{} \mid \|f\|_\mu \leq \epsilon \}$.  In particular,
if there exists some $\ftil \in \TestFunctionClass{}$ that
violates~\eqref{EqnVbound}, then the rescaled function $f = \epsilon
\ftil/\|\ftil\|_\mudist$ belongs to $\DF(\epsilon)$, and satisfies
$V_\numobs(f) \geq c \epsilon^2$.  Consequently, it suffices to show
that $V_\numobs(f) \leq c \epsilon^2$ for all $f \in \DF(\epsilon)$.

Choose an $\epsilon$-cover of $\Fclass$ in the sup-norm with $N =
N_\epsilon(\TestFunctionClass{})$ elements.  Using this cover, for any
$f \in \DF(\epsilon)$, we can find some $f^j$ such that $\|f -
f^j\|_\infty \leq \epsilon$.  Thus, for any $f \in \DF(\epsilon)$, we
can write
\begin{align}
\label{EqnOriginalInequality}  
V_\numobs(f) \leq V_\numobs(f^j) + V_\numobs(f - f^j) \; \; \leq
\underbrace{V_\numobs(f^j)}_{\Term_1} + \underbrace{\sup_{g \in
    \Gclass(\epsilon)} V_\numobs(g)}_{\Term_2},
\end{align}
where $\Gclass(\epsilon) \defeq \{ f_1 - f_2 \mid f_1, f_2 \in
\TestFunctionClass{}, \|f_1 - f_2 \|_\infty \leq \epsilon \}$.  We
bound each of these two terms in turn.  In particular, we show that
each of $\Term_1$ and $\Term_2$ are upper bounded by $c \epsilon^2$
with high probability.

\paragraph{Bounding $\Term_1$:}
From the Bernstein bound~\eqref{EqnBernsteinBound}, we have
\begin{align*}
V_\numobs(f^k) & \leq c \big \{ \|f^k\|_\mudist
\sqrt{\SpecFun(\delta/N)} + \|f^k\|_\infty \SpecFun(\delta/N) \big \}
\qquad \mbox{for all $k \in [N]$}
\end{align*}
with probability at least $1 - \delta$.  Now for the particular $f^j$
chosen to approximate $f \in \DF(\epsilon)$, we have
\begin{align*}
\|f^j\|_\mudist & \leq \|f^j - f\|_\mudist + \|f\|_\mudist \leq 2
\epsilon,
\end{align*}
where the inequality follows since $\|f^j - f\|_\mudist \leq \|f^j -
f\|_\infty \leq \epsilon$, and $\|f\|_\mudist \leq \epsilon$.
Consequently, we conclude that
\begin{align*}
\Term_1 & \leq c \Big \{ 2 \epsilon \sqrt{\SpecFun(\delta/N)} +
\SpecFun(\delta/N) \Big \} \; \leq \; c' \epsilon^2 \qquad \mbox{with
  probability at least $1 - \delta$.}
\end{align*}
where the final inequality follows from our choice of $\epsilon$.

\paragraph{Bounding $\Term_2$:}

Define $\Gclass \defeq \{f_1 - f_2 \mid f_1, f_2 \in
\TestFunctionClass{} \}$.  We need to bound a supremum of the process
$\{ V_\numobs(g), g \in \Gclass \}$ over the subset
$\Gclass(\epsilon)$.  From the Bernstein
bound~\eqref{EqnBernsteinBound}, the increments $V_\numobs(g_1) -
V_\numobs(g_2)$ of this process are sub-Gaussian with parameter $\|g_1
- g_2\|_\mudist \leq \|g_1 - g_2\|_\infty$, and sub-exponential with
parameter $\|g_1 - g_2\|_\infty$.  Therefore, we can apply a chaining
argument that uses the metric entropy $\log N_t(\Gclass)$ in the
supremum norm.  Moreover, we can terminate the chaining at $2
\epsilon$, because we are taking the supremum over the subset
$\Gclass(\epsilon)$, and it has sup-norm diameter at most $2
\epsilon$.  Moreover, the lower interval of the chain can terminate at
$2 \epsilon^2$, since our goal is to prove an upper bound of this
order.  Then, by using high probability bounds for the suprema of
empirical processes (e.g., Theorem 5.36 in the
book~\cite{wainwright2019high}), we have
\begin{align*}
  \Term_2 \leq c_1 \; \int_{2 \epsilon^2}^{2 \epsilon} \Mfun \big(
  \frac{\log N_t(\Gclass)}{\numobs} \big) dt + c_2 \big \{ \epsilon
  \sqrt{\SpecFun(\delta)} + \epsilon \SpecFun(\delta) \big\} + 2
  \epsilon^2
\end{align*}
with probability at least $1 - \delta$.  (Here the reader should
recall our shorthand $\Mfun(s) = \max \{s, \sqrt{s} \}$.)

Since $\Gclass$ consists of differences from $\TestFunctionClass{}$,
we have the upper bound $\log N_t(\Gclass) \leq 2 \log
N_{t/2}(\TestFunctionClass{})$, and hence (after making the change of
variable $u = t/2$ in the integrals)
\begin{align*}
  \Term_2 \leq c'_1 \int_{\epsilon^2}^{ \epsilon} \Mfun \big(
  \frac{\log N_u(\TestFunctionClass{})}{\numobs} \big) du + c_2 \big
  \{ \epsilon \sqrt{\SpecFun(\delta)} + \epsilon \SpecFun(\delta)
  \big\} \; \leq \; \ctil \epsilon^2,
\end{align*}
where the last inequality follows from our choice of $\epsilon$.


\subsubsection{Proof of ~\cref{LemDiscretization}}
\label{SecProofLemDiscretization} 

By our choice of the $\epsilon$-covers, for any $(\Qfnc, \policy)$,
there is a pair $(\Qfnc^j, \policy^k)$ such that
\begin{align*}
  \|\Qfnc^j - Q\|_\infty \leq \epsilon, \quad \mbox{and} \quad
  \|\policy^k - \policy\|_{\infty,1} = \sup_{\state} \|\policy^k(\cdot
  \mid \state) - \policy(\cdot \mid \state) \|_1 \leq \epsilon.
\end{align*}
Using this pair, an application of the triangle inequality yields
\begin{align*}
  \big| \Zvar{\Qfnc}{\policy} - \Zvar{\Qfnc^j}{\policy^k} \big| & \leq
  \underbrace{\big| \Zvar{\Qfnc}{\policy} - \Zvar{\Qfnc}{\policy^k}
    \big|}_{\Term_1} + \underbrace{\big| \Zvar{\Qfnc}{\policy^k} -
    \Zvar{\Qfnc^j}{\policy^k} \big|}_{\Term_2}
\end{align*}
We bound each of these terms in turn, in particular proving
that $\Term_1 + \Term_2 \leq 24 \epsilon$.  Putting together the
pieces yields the bound stated in the lemma.

\paragraph{Bounding $\Term_2$:}
From the definition of $\ZvarPlain$, we have
\begin{align*}
\Term_2 = \big| \Zvar{\Qfnc}{\policy^k} - \Zvar{\Qfnc^j}{\policy^k}
\big| & \leq \sup_{f \in \TestFunctionClass{}} \frac{\big|
  \smlinprod{f}{\Diff{\policy^k}(\Qfnc -
    \Qfnc^j)}|}{\sqrt{\|f\|_\numobs^2 + \regpar}}.
\end{align*}
Now another application of the triangle inequality yields
\begin{align*}
  |\smlinprod{f}{\Diff{\policy^k}(\Qfnc - \Qfnc^j)}| & \leq
  |\smlinprod{f}{\TDError{(\Qfnc- \Qfnc^j)}{\policy^k}{}}_\numobs| +
  ||\smlinprod{f}{\BellError^{\policy^k}(\Qfnc- \Qfnc^j)}|_\mudist \\
  & \leq \|f\|_\numobs \|\TDError{(\Qfnc-
    \Qfnc^j)}{\policy^k}{}\|_\numobs + \|f\|_\mudist
  \|\BellError^{\policy^k}(\Qfnc- \Qfnc^j)\|_\mudist \\
  & \leq \max \{ \|f\|_\numobs, \|f\|_\mudist \} \; \Big \{
  \|\TDError{(\Qfnc- \Qfnc^j)}{\policy^k}{}\|_\infty +
  \|\BellError^{\policy^k}(\Qfnc- \Qfnc^j)\|_\infty \Big \}
\end{align*}
where step (i) follows from the Cauchy--Schwarz inequality.  Now in
terms of the shorthand \mbox{$\Delta \defeq \Qfnc - \Qfnc^j$,} we have
\begin{subequations}
  \begin{align}
\label{EqnCoffeeBell}    
\|\BellError^{\policy^k}(\Qfnc- \Qfnc^j)\|_\infty & = \sup_{\psa}
\Big| \Delta \psa - \discount
\E_{\successorstate\sim\TransitionLaw\psa} \big[
  \Delta(\successorstate,\policy) \big] \Big| \leq 2 \|\Delta\|_\infty
\leq 2 \epsilon.
\end{align}
An entirely analogous argument yields
\begin{align}
\label{EqnCoffeeTD}  
  \|\TDError{(\Qfnc- \Qfnc^j)}{\policy^k}{}\|_\infty  \leq 2 \epsilon
\end{align}
\end{subequations}
Conditioned on the sandwich relation~\eqref{EqnSandwichZvar}, we have
$\sup_{f \in \TestFunctionClass{}} \frac{\max \{ \|f\|_\numobs,
  \|f\|_\mudist \}}{\sqrt{\|f\|_\numobs^2 + \regpar}} \leq 4$.
Combining this bound with inequalities~\eqref{EqnCoffeeBell}
and~\eqref{EqnCoffeeTD}, we have shown that $\Term_2 \leq 4 \big \{2
\epsilon + 2 \epsilon \} = 16 \epsilon$.

\paragraph{Bounding $\Term_1$:}
In this case, a similar argument yields
\begin{align*}
  |\smlinprod{f}{(\Diff{\policy} - \Diff{\policy^k})(\Qfnc)}| & \leq
  \max \{ \|f\|_\numobs, \|f\|_\mudist \} \; \Big \{ \|(\delta^\policy
  - \delta^{\policy^k})(\Qfnc)\|_\numobs + \|(\BellmanErr^\policy -
  \BellmanErr^{\policy^k})(\Qfnc)\|_\mudist \}.
\end{align*}
Now we have
\begin{align*}
\|(\delta^\policy - \delta^{\policy^k})(\Qfnc)\|_\numobs & \leq
\max_{i = 1, \ldots, \numobs} \Big| \sum_{\action'}
\big(\policy(\action' \mid \state_i) - \policy^k(\action' \mid
\state_i) \big) \Qfnc(\state^+_i, \action') \Big| \\
& \leq \max_{\state} \sum_{\action'} |\policy(\action' \mid \state) -
\policy^k(\action \mid \state)| \; \|\Qfnc\|_\infty \\
& \leq \epsilon.
\end{align*}
A similar argument yields that $\|(\BellmanErr^\policy -
\BellmanErr^{\policy^k})(\Qfnc)\|_\mudist | \leq \epsilon$, and
arguing as before, we conclude that $\Term_1 \leq 4 \{\epsilon +
\epsilon \} = 8 \epsilon$.


\subsubsection{Proof of Lemma~\ref{LemBernsteinBound}}
\label{SecProofLemBernsteinBound}

Our proof of this claim makes use of the following known Bernstein
bound for martingale differences (cf. Theorem 1 in the
paper~\cite{beygelzimer2011contextual}). Recall the shorthand notation
$\SpecFun(\delta) = \frac{\log(\numobs/\delta)}{\numobs}$.
\begin{lemma}[Bernstein's Inequality for Martingales]
\label{lem:Bernstein}
Let $\{X_t\}_{t \geq 1}$ be a martingale difference sequence with
respect to the filtration $\{ \Filtration_t \}_{t \geq 1}$.  Suppose
that $|X_t| \leq 1$ almost surely, and let $\E_t$ denote expectation
conditional on $\Filtration_t$.  Then for all $\delta \in (0,1)$, we
have
\begin{align}
\Big| \frac{1}{\numobs} \sum_{t=1}^\numobs X_t \Big| & \leq 2 \Big[
  \Big(\frac{1}{\numobs} \sum_{t=1}^\numobs \E_t X^2_t \Big)
  \SpecFun(2 \delta) \Big]^{1/2} + 2 \SpecFun(2 \delta)
\end{align}
with probability at least $1 - \delta$.  
\end{lemma}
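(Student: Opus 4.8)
The plan is to prove the one-sided bound on $\frac{1}{\numobs}\sum_t X_t$ by the exponential-supermartingale method, and then symmetrize. The engine is the elementary inequality $e^{x}\le 1+x+(e-2)x^{2}$, valid for all $x\le 1$. Since $|X_t|\le 1$, for any fixed $\lambda\in(0,1]$ we have $\lambda X_t\le 1$, so taking $\E_t$ and using $\E_t X_t=0$ together with $1+y\le e^{y}$ gives the conditional moment bound $\E_t\big[e^{\lambda X_t}\big]\le \exp\!\big((e-2)\lambda^{2}\,\E_t X_t^{2}\big)$. Writing $S_t=\sum_{s\le t}X_s$ and $V_t=\sum_{s\le t}\E_s X_s^{2}$, the process $M_t\defeq\exp\!\big(\lambda S_t-(e-2)\lambda^{2}V_t\big)$ is then a nonnegative supermartingale with $\Exp[M_\numobs]\le M_0=1$, because $\E_t[M_t/M_{t-1}]=\exp\!\big(-(e-2)\lambda^{2}\E_t X_t^{2}\big)\,\E_t\big[e^{\lambda X_t}\big]\le 1$ and $\E_t X_t^2$ is $\Filtration_t$-measurable. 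Markov's inequality applied to $M_\numobs$ yields, for each fixed $\lambda$,
\begin{align*}
S_\numobs \;\le\; (e-2)\lambda V_\numobs + \frac{\log(1/\delta)}{\lambda}
\qquad\text{with probability at least } 1-\delta .
\end{align*}

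The difficulty is that the right-hand side is minimized at $\lambda^{\star}=\sqrt{\log(1/\delta)/((e-2)V_\numobs)}$, which depends on the random predictable variance $V_\numobs$ and hence cannot be substituted into a fixed-$\lambda$ tail bound; this is the main obstacle. I would resolve it by a peeling argument, which is also what produces a $\log(\numobs/\delta)$ factor in place of $\log(1/\delta)$. Since $\E_t X_t^{2}\le 1$ we have $V_\numobs\in[0,\numobs]$, so it suffices to union over the $\numobs$ integer buckets $\{V_\numobs\in[j-1,j]\}$, $j=1,\dots,\numobs$. For bucket $j$ I fix $\lambda_j=\min\{1,\ \sqrt{\log(\numobs/\delta)/((e-2)j)}\}$ and apply the fixed-$\lambda$ bound at level $\delta/\numobs$; a union bound over the $\numobs$ buckets makes all of them hold simultaneously with probability at least $1-\delta$. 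On $\{V_\numobs\in[j-1,j]\}$ I then plug in $\lambda_j$ and use $j\le V_\numobs+1$: in the interior regime this gives $S_\numobs\le 2\sqrt{(e-2)\,V_\numobs\,\log(\numobs/\delta)}$, and in the boundary regime ($\lambda_j=1$, i.e.\ small $V_\numobs$) it gives $S_\numobs\le (e-2)V_\numobs+\log(\numobs/\delta)$.

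Dividing by $\numobs$ and recalling $\SpecFun(\delta)=\log(\numobs/\delta)/\numobs$, the interior term becomes $2\sqrt{e-2}\,\big[(\tfrac1\numobs\sum_t\E_t X_t^{2})\,\SpecFun(\delta)\big]^{1/2}$ and the boundary contribution is of order $\SpecFun(\delta)$. Because $2\sqrt{e-2}<2$ and $e-2<1$, both are dominated by the claimed coefficients (the stray $+1$ from $j\le V_\numobs+1$ being absorbed there), so the one-sided inequality
\begin{align*}
\frac{1}{\numobs}S_\numobs \;\le\; 2\Big[\Big(\tfrac1\numobs\textstyle\sum_t\E_t X_t^{2}\Big)\SpecFun(\delta)\Big]^{1/2}+2\,\SpecFun(\delta)
\end{align*}
holds with probability at least $1-\delta$. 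Finally I symmetrize: $\{-X_t\}$ is again a martingale difference sequence bounded by $1$ with the same conditional variances, so the identical argument controls $-\tfrac1\numobs S_\numobs$, and a union over the two one-sided events gives the two-sided bound on $\big|\tfrac1\numobs\sum_t X_t\big|$.

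The only genuinely delicate points are the data-dependent choice of $\lambda$ (handled by the peeling union above, which is the crux and the source of the $\log\numobs$ factor) and the bookkeeping of constants, including the precise argument $\SpecFun(2\delta)$ recording the two-sided union. The discretization slack in the peeling is routine, since the integer grid is fine relative to the curvature of $\lambda\mapsto(e-2)\lambda V_\numobs+\log(\numobs/\delta)/\lambda$ near its minimum, and is comfortably absorbed by the gap between $2\sqrt{e-2}$ and $2$; the exact accounting reproduces the constants of Theorem~1 in~\cite{beygelzimer2011contextual}.
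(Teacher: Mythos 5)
The paper does not actually prove this lemma: it is imported verbatim, with the proof deferred to Theorem~1 of Beygelzimer et al.~\cite{beygelzimer2011contextual}. Your proposal therefore supplies what the paper outsources, and it does so by the standard route: the conditional MGF bound $\E_t[e^{\lambda X_t}]\le \exp((e-2)\lambda^2\E_t X_t^2)$ via $e^x\le 1+x+(e-2)x^2$ for $x\le 1$, the resulting nonnegative supermartingale, Markov's inequality at fixed $\lambda$, and then a peeling/union argument over integer buckets of the random predictable variance $V_\numobs\in[0,\numobs]$ to legitimize the data-dependent choice of $\lambda$. All of these steps are sound (the predictability of $\E_t X_t^2$ makes the supermartingale verification valid, and the bucket union at level $\delta/\numobs$ is exactly right), and your argument has the additional merit of explaining \emph{why} the paper's shorthand $\SpecFun(\delta)=\log(\numobs/\delta)/\numobs$ carries a $\log \numobs$ rather than a bare $\log(1/\delta)$: it is the price of the peeling union.

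Two bookkeeping points deserve correction, though neither is structural. First, your claim that the argument $\SpecFun(2\delta)$ ``records the two-sided union'' is backwards: running each one-sided bound at level $\delta/2$ produces $\SpecFun(\delta/2)=\log(2\numobs/\delta)/\numobs$, which is \emph{larger} than the printed $\SpecFun(2\delta)=\log(\numobs/(2\delta))/\numobs$. So what you actually prove is the lemma with $\SpecFun(2\delta)$ replaced by $\SpecFun(\delta/2)$ — a slightly weaker statement than the one printed; the printed form is most plausibly a typo in the paper for $\SpecFun(\delta/2)$, and in any case the discrepancy is invisible downstream, since the lemma only enters the paper through bounds carrying an unspecified universal constant $c$. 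Second, the absorption of the peeling slack is not quite as automatic as ``the gap between $2\sqrt{e-2}$ and $2$'': the stray term from $j\le V_\numobs+1$ is of order $\sqrt{\log(\numobs/\delta)}/\numobs$, which is dominated by $\SpecFun(\delta)$ only when $\log(\numobs/\delta)\ge 1$, and the exact coefficient-$2$ accounting similarly requires $\log(\numobs/\delta)$ bounded below by a small absolute constant. You should either state this mild restriction or inflate the constants; with that caveat, the proof is complete.
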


\noindent With this result in place, we divide our proof into two
parts, corresponding to the two claims~\eqref{EqnBernsteinBound}
and~\eqref{EqnBernsteinFsquare} stated in~\cref{LemBernsteinBound}.


\paragraph{Proof of the bound~\eqref{EqnBernsteinBound}:}

Recall that at step $i$, the triple $\psai$ is drawn according to a
conditional distribution
$\DatasetDistributionStateActions_\iSample(\cdot \mid \Filtration_i)$.
Similarly, we let $\DatasetDistribution_\iSample$ denote the
distribution of $\sarsi{}$ conditioned on the filtration
$\Filtration_\iSample$.  Note that
$\DatasetDistributionStateActions_\iSample$ is obtained from
$\DatasetDistribution_\iSample$ by marginalizing out the pair
$(\reward, \successorstate)$.  Moreover, by the tower property of
expectation, the Bellman error is equivalent to the average TD error.

Using these facts, we have the equivalence
\begin{align*}
\innerprodweighted{\TestFunction{}} {\TDError{\Qfnc}{\policy}{}}
                  {\DatasetDistribution_{\iSample}} & =
                  \Expecti{\big\{\TestFunctionDefCompact{}{}[\TDErrorDefCompact{\Qfnc}{\policy}{}]\big\}}
                          {\DatasetDistribution_{\iSample}} \\ & =
                          \Expecti{\big\{\TestFunction{}\psai
                            \Expecti{[\TDErrorDefCompact{\Qfnc}{\policy}{}]\big\}}
                                    {\substack{\reward \sim
                                        \RewardLaw\psa,
                                        \successorstate\sim\TransitionLaw\psa}}}
                                  {\psai \sim
                                    \DatasetDistributionStateActions_{\iSample}}
                                  \\ & =
                                  \Expecti{\big\{\TestFunction{}\psai
                                    [\BellmanErrorDefCompact{\Qfnc}{\policy}{}]\big\}}
                                          {\psai \sim
                                            \DatasetDistributionStateActions_{\iSample}}
                                          \\ & =
                                          \innerprodweighted{\TestFunction{}}
                                                            {\BellmanError{\Qfnc}{\policy}{}}
                                                            {\DatasetDistributionStateActions_{\iSample}}.
\end{align*}

As a consequence, we can write
$\inprod{f}{\delta^\policy(\Qfnc)}_\numobs -
\inprod{f}{\BellError^\policy(\Qfnc)}_\mudist = \frac{1}{\numobs}
\sum_{\iSample=1}^\numobs \martone_\iSample$ where
\begin{align*}
\martone_i & \defeq \TestFunctionDefCompact{}{\iSample}
        [\TDErrorDefCompact{\Qfnc}{\policy}{\iSample}] -
        \Expecti{\big\{\TestFunctionDefCompact{}{}{[\TDErrorDefCompact{\Qfnc}{\policy}{}}]\big\}}{\DatasetDistribution_{\iSample}}
\end{align*}
defines a martingale difference sequence (MDS).  Thus, we can prove
the claim by applying a Bernstein martingale inequality.

Since $\|r\|_\infty \leq 1$ and $\|\Qfnc\|_\infty \leq 1$ by
assumption, we have $\|\martone_i\|_\infty \leq 3 \|f\|_\infty$, and
\begin{align*}
\frac{1}{\numobs} \sum_{i=1}^\numobs
\Exp_{\DatasetDistribution_\iSample} [\martone_i^2] & \leq 9 \;
\frac{1}{\numobs} \sum_{i=1}^\numobs
\Exp_{\DatasetDistributionStateActions_{\iSample}}[f^2(\state_{i},\action_{i},
  \identifier_{i})] \; = \; 9 \|f\|_\mudist^2.
\end{align*}
Consequently, the claimed bound~\eqref{EqnBernsteinBound} follows by
applying the Bernstein bound stated in \cref{lem:Bernstein}.

\paragraph{Proof of the bound~\eqref{EqnBernsteinFsquare}:}

In this case, we have the additive decomposition
\begin{align*}
\|\testfunc\|_\numobs^2 - \|\testfun\|_\mudist^2 & = \frac{1}{\numobs}
\sum_{i=1}^\numobs \Big \{
\underbrace{\testfunc^2(\state_{\iSample},\action_{\iSample},\identifier_{\iSample})
  - \Exp_{\DatasetDistributionStateActions_{\iSample}}[f^2(\state,
    \action, \identifier)]}_{\marttwo_i} \Big\},
\end{align*}
where $\{\marttwo_i\}_{i=1}^\numobs$ again defines a martingale
difference sequence.  Note that $\|\marttwo_i\|_\infty \leq 2
\|\testfunc\|_\infty^2 \leq 2$, and
\begin{align*}
\frac{1}{\numobs} \sum_{i=1}^\numobs
\Exp_{\DatasetDistributionStateActions_\iSample} [(\marttwo_i)^2] &
\stackrel{(i)}{\leq} \frac{1}{\numobs} \sum_{i=1}^\numobs
\Exp_{\DatasetDistributionStateActions_\iSample} \big[
  \testfunc^4(\MyState, \Action, \Identifier) \big] \; \leq \;
\|\testfunc\|_\infty^2 \frac{1}{\numobs} \sum_{i=1}^\numobs
\Exp_{\DatasetDistributionStateActions_\iSample}
\big[\testfunc^2(\MyState, \Action, \Identifier) \big] \;
\stackrel{(ii)}{\leq} \; \|\testfunc\|_\mudist^2,
\end{align*}
where step (i) uses the fact that the variance of $\testfunc^2$ is at
most the fourth moment, and step (ii) uses the bound
$\|\testfunc\|_\infty \leq 1$.  Consequently, the claimed
bound~\eqref{EqnBernsteinFsquare} follows by applying the Bernstein
bound stated in \cref{lem:Bernstein}.

\newpage
\section{Proofs for \texorpdfstring{\cref{sec:Applications}}{} and \cref{sec:appConc}}
In this section, we collect together the proofs of results stated
without proof in Section~\ref{sec:Applications} and \cref{sec:appConc}.


\subsection{Proof of \cref{prop:LikeRatio}}
\label{sec:LikeRatio}
\begin{proof}

  Since $\TestFunction{}^* \in \TestFunctionClass{\policy}$, we are
  guaranteed that the corresponding constraint must hold.  It reads as
\begin{align*}
|\Expecti{\frac{1}{\scaling{\policy}}
  \frac{\DistributionOfPolicy{\policy}}{\DatasetDistributionStateActions}
  \BellmanError{\Qfnc}{\policy}{}}{\DatasetDistributionStateActions}
|^2 = \frac{1}{\scaling{\policy}^2} |
\Expecti{\BellmanError{\Qfnc}{\policy}{}}{\policy} |^2 &
\overset{(iii)}{\leq} \big( \frac{1}{\scaling{\policy}^2}
\munorm{\frac{\DistributionOfPolicy{\policy}}{\DatasetDistributionStateActions}
}^2 + \regpar \big) \frac{\Rad}{\numobs}.
\end{align*}
where step (iii) follows from the definition of population constraint.
Re-arranging yields the upper bound
\begin{align*}
\frac{|\Expecti{\frac{\DistributionOfPolicy{\policy}}{\DatasetDistributionStateActions}
    \BellmanError{\Qfnc}{\policy}{}}{\DatasetDistributionStateActions}
  |^2}{(1 + \regpar) \frac{\Rad}{\numobs}} & \leq
\frac{\big(\munorm{\frac{\DistributionOfPolicy{\policy}}{\DatasetDistributionStateActions}
  }^2 + \scalingsq{\policy}\regpar \big) \frac{\Rad}{\numobs}}{(1 +
  \regpar) \frac{\Rad}{\numobs}} \; = \; \frac{
  \Expecti{\Big[\frac{\DistributionOfPolicy{\policy}\PSA}{\DatasetDistributionStateActions\PSA}\Big]
  }{\policy} + \scalingsq{\policy} \regpar} {1 + \regpar},
\end{align*}
where the final step uses the fact that
\begin{align*}
\norm{\frac{\DistributionOfPolicy{\policy}}{\DatasetDistributionStateActions}}{\DatasetDistributionStateActions}^2
=
\Expecti{\frac{\DistributionOfPolicy{\policy}^2\PSA}{\DatasetDistributionStateActions^2\PSA}
}{\DatasetDistributionStateActions} =
\Expecti{\frac{\DistributionOfPolicy{\policy}\PSA}{\DatasetDistributionStateActions\PSA}
}{\policy}
\end{align*}
Thus, we have established the bound (i) in our
claim~\eqref{EqnLikeRatioBound}.

The upper bound (ii) follows immediately since
$\Expecti{\frac{\DistributionOfPolicy{\policy}\psa}{\DatasetDistributionStateActions\psa}
}{\policy} \leq \sup_{\psa}
\frac{\DistributionOfPolicy{\policy}\psa}{\DatasetDistributionStateActions\psa}
\leq \scaling{\policy}$.

\end{proof}

\subsection{Proof of \texorpdfstring{\cref{lem:PredictionError}}{}}
\label{sec:PredictionError}

Some simple algebra yields
\begin{align*}
\BellmanError{\Qfnc}{\policy}{} -
\BellmanError{\QpiWeak{\policy}}{\policy}{} = [\Qfnc -
  \BellmanEvaluation{\policy}\Qfnc] - [\QpiWeak{\policy} -
  \BellmanEvaluation{\policy}\QpiWeak{\policy}] = (\IdentityOperator -
\gamma\TransitionOperator{\policy}) (\Qfnc - \QpiWeak{\policy}) =
(\IdentityOperator - \gamma\TransitionOperator{\policy}) \QfncErr.
\end{align*}
Taking expectations under $\policy$ and recalling that
$\innerprodweighted{\TestFunction{}}{\BellmanError{\QpiWeak{\policy}}{\policy}{}}{\policy}
= 0$ for all $\TestFunction{} \in \TestFunctionClass{\policy}$ yields
\begin{align*}
\innerprodweighted{\TestFunction{}}
{\BellmanError{\Qfnc}{\policy}{}}
{\policy}
=
\innerprodweighted{\TestFunction{}}
{(\IdentityOperator -  \gamma\TransitionOperator{\policy})\QfncErr}
{\policy}.
\end{align*}
Notice that for any $\Qfnc \in \Qclass{\policy}$ there exists a test
function $\QfncErr = \Qfnc - \QpiWeak{\policy} \in
\QclassErr{\policy}$, and the associated population constraint reads
\begin{align*}
 \frac{ \big| \innerprodweighted{\QfncErr} {(\IdentityOperator -
     \gamma\TransitionOperator{\policy})\QfncErr}
   {\DatasetDistributionStateActions} \big| } { \sqrt{
     \norm{\QfncErr}{\DatasetDistributionStateActions}^2 +
     \TestFunctionReg } } & \leq \sqrt{\frac{\Rad}{\numobs}}.
\end{align*}
Consequently, the {\goodname} can be upper bounded as
\begin{align*}
\ConcSimple & \leq \max_{\QfncErr \in \QclassErrCentered{\policy}}
\Big \{ \frac{\Rad}{\numobs} \; \frac{ \innerprodweighted{\1}
  {(\IdentityOperator - \gamma\TransitionOperator{\policy})\QfncErr}
  {\policy}^2 } {1+ \TestFunctionReg} \Big \} \; \leq \;
\max_{\QfncErr \in \QclassErrCentered{\policy}} \Big \{ \frac{
  \norm{\QfncErr}{\DatasetDistributionStateActions}^2 +
  \TestFunctionReg } { \norm{\1}{\policy}^2 + \TestFunctionReg } \:
\frac{ \innerprodweighted{\1} {(\IdentityOperator -
    \gamma\TransitionOperator{\policy})\QfncErr} {\policy}^2 } {
  \innerprodweighted{\QfncErr} {(\IdentityOperator -
    \gamma\TransitionOperator{\policy})\QfncErr}
                    {\DatasetDistributionStateActions}^2 } \Big \},
\end{align*}
as claimed in the bound~\eqref{EqnPredErrorBound}.


\subsection{Proof of \texorpdfstring{\cref{lem:PredictionErrorBellmanClosure}}{}}

\label{sec:PredictionErrorBellmanClosure}

If weak Bellman closure holds, then we can write
\begin{align*}
\BellmanError{\Qfnc}{\policy}{} = \Qfnc -
\BellmanEvaluation{\policy}\Qfnc = \Qfnc - \QpiProj{\policy}{\Qfnc}
\in \QclassErr{\policy}.
\end{align*}
For any $\Qfnc \in \Qclass{\policy}$, the function $\QfncErrNC = \Qfnc
- \QpiProj{\policy}{\Qfnc}$ belongs to $\QclassErr{\policy}$, and the
associated population constraint reads $\frac{
  |\innerprodweighted{\QfncErrNC} {\QfncErrNC}
  {\DatasetDistributionStateActions} |} { \sqrt{
    \norm{\QfncErrNC}{\DatasetDistributionStateActions}^2 +
    \TestFunctionReg }} \leq \sqrt{\frac{\Rad}{\numobs}}$.
Consequently, the {\goodname} is upper bounded as
\begin{align*}
  \ConcSimple & \leq \max_{\QfncErrNC \in \QclassErr{\policy}} \Big \{
  \frac{\numobs}{\Rad} \; \frac{ v\innerprodweighted{\1} {\QfncErrNC}
    {\policy}^2 } {1+ \TestFunctionReg} \Big \} \leq
  \max_{\QfncErrNC \in \QclassErr{\policy}} \Big \{ \frac{
    \norm{\QfncErrNC}{\DatasetDistributionStateActions}^2 +
    \TestFunctionReg } { 1 + \TestFunctionReg } \; \frac{
    \innerprodweighted{\1} {\QfncErrNC} {\policy}^2 } {
    \innerprodweighted{\QfncErrNC} {\QfncErrNC}
                      {\DatasetDistributionStateActions}^2 } \Big \}
  \; \leq \; \max_{\QfncErrNC \in \QclassErr{\policy}} \Big \{ \frac{
    \innerprodweighted{\1} {\QfncErrNC} {\policy}^2 } {
    \innerprodweighted{\QfncErrNC} {\QfncErrNC}
                      {\DatasetDistributionStateActions}^2 } \Big \},
\end{align*}
where the final inequality follows from the fact that
$\norm{\QfncErrNC}{\DatasetDistributionStateActions} \leq 1$.


\subsection{Proof of \texorpdfstring{\cref{lem:BellmanTestFunctions}}{}}
\label{sec:BellmanTestFunctions}

We split our proof into the two separate claims.

\paragraph{Proof of the bound~\eqref{EqnBellBound}:}     
When the test function class includes
$\TestFunctionClassBubnov{\policy}$, then any $\Qfnc$ feasible must
satisfy the population constraints
\begin{align*}
  \frac{\innerprodweighted{ \BellmanError{\QfncTest}{\policy}{}}
    {\BellmanError{\Qfnc}{\policy}{}}{\DatasetDistributionStateActions}}
       {\sqrt{\norm{\BellmanError{\QfncTest}{\policy}{}}{\DatasetDistributionStateActions}^2
           + \TestFunctionReg}} \leq \sqrt{\frac{\Rad}{\numobs}},
       \qquad \mbox{for all $\QfncTest \in \Qpi{\policy}$.}
\end{align*}
Setting $\QfncTest = \Qfnc$ yields $\frac{
  \norm{\BellmanError{\Qfnc}{\policy}{}}{\DatasetDistributionStateActions}^2}
{ \sqrt{
    \norm{\BellmanError{\Qfnc}{\policy}{}}{\DatasetDistributionStateActions}^2
    + \TestFunctionReg } } \leq \sqrt{\frac{\Rad}{\numobs}}$.  If
$\norm{\BellmanError{\Qfnc}{\policy}{}}{\DatasetDistributionStateActions}^2
\geq \TestFunctionReg$, then the claim holds, given our choice
$\TestFunctionReg = c \frac{\Rad}{\numobs}$ for some constant
$c$. Otherwise, the constraint can be weakened to $\frac{
  \norm{\BellmanError{\Qfnc}{\policy}{}}{\DatasetDistributionStateActions}^2}
     { \sqrt{
         2\norm{\BellmanError{\Qfnc}{\policy}{}}{\DatasetDistributionStateActions}^2
     } } \leq \sqrt{\frac{\Rad}{\numobs}}$, which yields the
     bound~\eqref{EqnBellBound}.

\paragraph{Proof of the bound~\eqref{EqnBellBoundConc}:}     
We now prove the sequence of inequalities stated in
equation~\eqref{EqnBellBoundConc}.  Inequality (i) follows directly
from the definition of $\ConcSimple$ and
\cref{lem:BellmanTestFunctions}.  Turning to inequality (ii),
an application of
Jensen's inequality yields
\begin{align*}
  \innerprodweighted{\1}{\BellmanError{\Qfnc}{\policy}{}}{\policy}^2 =
                    [\Expecti{\BellmanError{\Qfnc}{\policy}{}}{\policy}]^2
                    \leq
                    \Expecti{[\BellmanError{\Qfnc}{\policy}{}]^2}{\policy}
                    =
                    \norm{\BellmanError{\Qfnc}{\policy}{}}{\policy}^2.
\end{align*}
Finally, inequality (iii) follows by observing that
\begin{align*}
	\sup_{\Qfnc \in \Qclass{\policy}}
        \frac{\norm{\BellmanError{\Qfnc}{\policy}{}}{\policy}^2}
             {\norm{\BellmanError{\Qfnc}{\policy}{}}{\DatasetDistributionStateActions}^2}
             = \sup_{\Qfnc \in \Qclass{\policy}}
             \frac{\Expecti{[(\BellmanError{\Qfnc}{\policy}{})\psa]^2}{\policy}}{\Expecti{[(\BellmanError{\Qfnc}{\policy}{})\psa]^2}{\DatasetDistributionStateActions}}
             = \sup_{\Qfnc \in \Qclass{\policy}} \frac{\Expecti{ \Big[
                   \frac{\DistributionOfPolicy{\policy}\psa
                   }{\DatasetDistributionStateActions \psa}
                 }{\DatasetDistributionStateActions}
                 \Big][(\BellmanError{\Qfnc}{\policy}{})\psa]^2
             }{\Expecti{[(\BellmanError{\Qfnc}{\policy}{})\psa]^2}{\DatasetDistributionStateActions}}
             \leq \sup_{\psa} \frac{\DistributionOfPolicy{\policy}\psa
             }{\DatasetDistributionStateActions \psa}.
\end{align*}


\newpage
\section{Proofs for the Linear Setting}

We now prove the results stated in~\cref{sec:Linear}.  Throughout this
section, the reader should recall that $\Qfnc$ takes the linear
function $\Qfnc \psa = \inprod{\CriticPar{}}{\LinPhi \psa}$, so that
the bulk of our arguments operate directly on the weight vector
$\CriticPar{} \in \R^\dim$.

%
%
%
%
%

%
%
 
 Given the linear structure, the population and empirical covariance
matrices of the feature vectors play a central role.  We make use of
the following known result (cf.  Lemma 1 in the
paper~\cite{zhang2021optimal}) that relates these objects:
\begin{lemma}[Covariance Concentration]
\label{lem:CovarianceConcentration}
There are universal constants $(c_1, c_2, c_3)$ such that for any
$\delta \in (0, 1)$, we have
\begin{align}
\cOne \CovarianceStandardExplicit \preceq
\frac{1}{\nSamples}\CovarianceEmpiricalStandardExplicit +
\frac{\cTwo}{\nSamples} \log \frac{\nSamples
  \dim}{\FailureProbability}\Identity \preceq \cThree
\CovarianceStandardExplicit + \frac{\cFour}{\nSamples} \log
\frac{\nSamples \dim}{\FailureProbability}\Identity.
\end{align}
with probability at least $1 - \FailureProbability$.
\end{lemma}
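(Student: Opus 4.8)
The plan is to establish this as a two-sided matrix concentration bound obtained by applying a martingale version of the matrix Bernstein inequality to a suitably whitened sum, consistent with the cited result of~\cite{zhang2021optimal}. First I would exhibit the martingale structure. Writing $\CovarianceEmpiricalStandard = \frac{1}{\nSamples}\CovarianceEmpiricalStandardExplicit$ and recalling from \cref{asm:Dataset} that $(\state_i,\action_i)$ is drawn from the conditional law $\DatasetDistributionStateActions_i(\cdot \mid \Filtration_i)$, each rank-one summand satisfies $\Expecti{\LinPhiEmpirical{\iSample}\LinPhiEmpirical{\iSample}^\top}{\Filtration_i} = \E_{\DatasetDistributionStateActions_i}[\LinPhi\LinPhi^\top]$, whose average over $\iSample = 1,\dots,\nSamples$ is exactly $\CovarianceStandard = \CovarianceStandardExplicit$. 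Hence the matrices $X_\iSample \defeq \frac{1}{\nSamples}\big(\LinPhiEmpirical{\iSample}\LinPhiEmpirical{\iSample}^\top - \Expecti{\LinPhiEmpirical{\iSample}\LinPhiEmpirical{\iSample}^\top}{\Filtration_i}\big)$ form a matrix-valued martingale difference sequence with $\CovarianceEmpiricalStandard - \CovarianceStandard = \sum_\iSample X_\iSample$.

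The central step is to pass from an operator-norm bound to a relative (Löwner) bound by whitening. Set $\regpar \asymp \frac{\log(\nSamples\dim/\FailureProbability)}{\nSamples}$ and $\CovarianceStandardReg = \CovarianceStandard + \regpar\Identity$, and consider the conjugated differences $\widetilde X_\iSample \defeq \CovarianceStandardReg^{-1/2} X_\iSample \CovarianceStandardReg^{-1/2}$. Using $\|\LinPhi\|_2 \leq 1$, I would bound the summands in operator norm via the observation that $\CovarianceStandardReg^{-1/2}\LinPhiEmpirical{\iSample}\LinPhiEmpirical{\iSample}^\top\CovarianceStandardReg^{-1/2}$ has norm $\LinPhiEmpirical{\iSample}^\top \CovarianceStandardReg^{-1}\LinPhiEmpirical{\iSample} \leq 1/\regpar$, so that $\|\widetilde X_\iSample\|_{\mathrm{op}} \lesssim \frac{1}{\nSamples\regpar}$. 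For the predictable quadratic variation I would use the domination $(\LinPhi\LinPhi^\top)^2 = \|\LinPhi\|_2^2\,\LinPhi\LinPhi^\top \preceq \LinPhi\LinPhi^\top$, which yields $\sum_\iSample \Expecti{\widetilde X_\iSample^2}{\Filtration_i} \preceq \frac{1}{\nSamples\regpar}\CovarianceStandardReg^{-1/2}\CovarianceStandard\CovarianceStandardReg^{-1/2} \preceq \frac{1}{\nSamples\regpar}\Identity$. Feeding these two estimates into a matrix Freedman inequality (the martingale matrix Bernstein bound, with the usual dimensional prefactor $2\dim$) gives $\big\|\CovarianceStandardReg^{-1/2}(\CovarianceEmpiricalStandard - \CovarianceStandard)\CovarianceStandardReg^{-1/2}\big\|_{\mathrm{op}} \leq \tfrac{1}{2}$ on an event of probability at least $1-\FailureProbability$, precisely because the choice $\nSamples\regpar \gtrsim \log(\nSamples\dim/\FailureProbability)$ renders both the sub-Gaussian and sub-exponential terms in Freedman's bound of constant order.

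It then remains to unwind the whitening. The operator-norm bound is equivalent to the Löwner sandwich $-\tfrac12\CovarianceStandardReg \preceq \CovarianceEmpiricalStandard - \CovarianceStandard \preceq \tfrac12\CovarianceStandardReg$; substituting $\CovarianceStandardReg = \CovarianceStandard + \regpar\Identity$ and rearranging the lower inequality gives $\tfrac12\CovarianceStandard \preceq \CovarianceEmpiricalStandard + \tfrac{\regpar}{2}\Identity$, while the upper inequality gives $\CovarianceEmpiricalStandard + \tfrac{\regpar}{2}\Identity \preceq \tfrac32\CovarianceStandard + \regpar\Identity$. Recalling $\regpar \asymp \frac{\log(\nSamples\dim/\FailureProbability)}{\nSamples}$ and relabelling constants, these are exactly the two claimed inequalities with $(\cOne,\cThree) = (\tfrac12,\tfrac32)$ and $(\cTwo,\cFour)$ absorbing the regularization scale.

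The main obstacle is obtaining the relative bound in all directions simultaneously rather than merely a bound on $\|\CovarianceEmpiricalStandard - \CovarianceStandard\|_{\mathrm{op}}$: a naive matrix Bernstein bound controls only the top eigendirection, whereas the Löwner sandwich requires uniform multiplicative control, which is what forces the whitening by $\CovarianceStandardReg^{-1/2}$. The price is that the whitened summands have operator norm as large as $1/\regpar$, so the boundedness parameter blows up as $\regpar \to 0$; balancing this against the variance proxy is exactly what pins the regularization at the scale $\regpar \asymp \log(\nSamples\dim/\FailureProbability)/\nSamples$. A secondary technical point is that the data is only adapted (not i.i.d.) under \cref{asm:Dataset}, so the i.i.d. matrix Bernstein inequality must be replaced by its martingale (Freedman) analogue, with conditional expectations taken against the filtration $\{\Filtration_i\}$.
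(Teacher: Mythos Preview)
The paper does not actually prove this lemma; it is stated as a known result and attributed to Lemma~1 of~\cite{zhang2021optimal}, so there is no in-paper proof to compare against. Your proposal supplies a complete argument via whitening by $\CovarianceStandardReg^{-1/2}$ followed by the matrix Freedman inequality, which is the standard route to relative covariance concentration under an adapted sampling model and is consistent with what the cited reference establishes. The key steps---bounding the whitened increments by $O(1/(\nSamples\regpar))$, bounding the predictable quadratic variation by $\frac{1}{\nSamples\regpar}\CovarianceStandardReg^{-1/2}\CovarianceStandard\CovarianceStandardReg^{-1/2} \preceq \frac{1}{\nSamples\regpar}\Identity$, and then balancing at $\regpar \asymp \log(\nSamples\dim/\FailureProbability)/\nSamples$---are all correct, and the unwinding of the L\"owner sandwich at the end is clean. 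Your remark that the adapted (non-i.i.d.) structure of \cref{asm:Dataset} forces the martingale version of matrix Bernstein rather than the i.i.d.\ version is also on point.
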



\subsection{Proof of \texorpdfstring{\cref{prop:LinearConcentrability}}{}}
\label{sec:LinearConcentrability}

Under weak realizability,
we have
\begin{align}
  \innerprodweighted {\TestFunction{\iConstraint}}
                     {\BellmanError{\QpiWeak{\policy}}{\policy}{}}
                     {\DatasetDistributionStateActions} = 0 \qquad
                     \mbox{for all $j = 1, \ldots, \dim$.}
\end{align}
Thus, at $\psa$ the Bellman error difference reads
\begin{align*}
\BellmanError{\Qfnc}{\policy}{}\psa -
\BellmanError{\QpiWeak{\policy}}{\policy}{}\psa & = [\Qfnc -
  \BellmanEvaluation{\policy}\Qfnc]\psa - [\QpiWeak{\policy} -
  \BellmanEvaluation{\policy} \QpiWeak{\policy} ]\psa \\
& = [\Qfnc - \QpiWeak{\policy} ]\psa - \discount \Expecti{[\Qfnc -
    \QpiWeak{\policy}](\successorstate,\policy)}{\successorstate \sim
  \TransitionLaw\psa } \\
\numberthis{\label{eqn:LinearBellmanError}} & = \inprod{\CriticPar{} -
  \CriticParBest{\policy}}{ \LinPhi\psa - \discount
  \LinPhiBootstrap{\policy}\psa}
\end{align*}
To proceed we need the following auxiliary result:
\begin{lemma}[Linear Parameter Constraints]
\label{lem:RelaxedLinearConstraints}
With probability at least $1-\FailureProbability$, there exists a
universal constant $\cOne > 0$ such that if $\Qfnc \in
\PopulationFeasibleSet{\policy}$ then $ \norm{\CriticPar{} -
  \CriticParBest{\policy}}{\CovarianceWithBootstrapReg{\policy}}^2
\leq \cOne \frac{\dim \Rad}{\nSamples}$.
\end{lemma}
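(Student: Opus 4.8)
The plan is to convert membership of $\Qfnc$ in $\PopulationFeasibleSet{\policy}$ into a single quadratic inequality on the parameter error $\Delta \defeq \CriticPar{} - \CriticParBest{\policy}$, and then to recognize that quadratic form as $\norm{\Delta}{\CovarianceWithBootstrapReg{\policy}}^2$ by completing the square. I would condition throughout on the intersection of two events, each of probability at least $1 - \FailureProbability/2$, so that after adjusting constants the conclusion holds with probability $1-\FailureProbability$: the sandwich relation $\tfrac14 \le \sqrt{\norm{f}{\numobs}^2 + \regpar}/\sqrt{\norm{f}{\mudist}^2 + \regpar} \le 2$ established in the proof of \cref{thm:NewPolicyEvaluation}, and the covariance concentration bound of \cref{lem:CovarianceConcentration}.

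First I would record that each eigenvector test function $f_\iConstraint\psa = \inprod{\EigenVectorEmpirical{\iConstraint}}{\LinPhi\psa}$ has empirical norm $\norm{f_\iConstraint}{\numobs}^2 = \EigenVectorEmpirical{\iConstraint}^\top \CovarianceEmpiricalStandard \EigenVectorEmpirical{\iConstraint} = \EigenValueEmpirical{\iConstraint}$, so that, using the sandwich relation to pass between $\norm{f_\iConstraint}{\mudist}$ and $\norm{f_\iConstraint}{\numobs}$, the $\iConstraint$-th constraint defining $\PopulationFeasibleSet{\policy}$ reads $|\innerprodweighted{f_\iConstraint}{\BellmanError{\Qfnc}{\policy}{}}{\mudist}| \lesssim \sqrt{(\EigenValueEmpirical{\iConstraint} + \regpar)\,\Rad/\numobs}$. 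By weak realizability the cross term $\innerprodweighted{f_\iConstraint}{\BellmanError{\QpiWeak{\policy}}{\policy}{}}{\mudist}$ vanishes, so I may replace $\BellmanError{\Qfnc}{\policy}{}$ by the Bellman-error difference, which by the linear identity~\eqref{eqn:LinearBellmanError} is the function $\psa \mapsto \inprod{\Delta}{\LinPhi\psa - \discount\LinPhiBootstrap{\policy}\psa}$; integrating against $f_\iConstraint$ under $\mudist$ then yields $\innerprodweighted{f_\iConstraint}{\BellmanError{\Qfnc}{\policy}{}}{\mudist} = \EigenVectorEmpirical{\iConstraint}^\top y$, where $y \defeq (\CovarianceStandard - \discount\CovarianceBootstrap{\policy})\Delta$.

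Next I would square each constraint and sum over $\iConstraint = 1, \ldots, \dim$. Since $\{\EigenVectorEmpirical{\iConstraint}\}$ is an orthonormal eigenbasis of $\CovarianceEmpiricalStandard$, the resulting sum collapses to $\sum_{\iConstraint} (\EigenVectorEmpirical{\iConstraint}^\top y)^2/(\EigenValueEmpirical{\iConstraint} + \regpar) = y^\top(\CovarianceEmpiricalStandard + \regpar\Identity)^{-1} y$, which is therefore $\lesssim \dim\,\Rad/\numobs$. Applying \cref{lem:CovarianceConcentration} — this is precisely where the choice $\regpar = 4\Rad/\numobs$ as a sufficiently large regularizer is needed to flip the matrix inequality — gives $(\CovarianceEmpiricalStandard + \regpar\Identity)^{-1} \succeq c\,\CovarianceStandardReg^{-1}$, and hence $\norm{y}{\CovarianceStandardReg^{-1}}^2 \lesssim \dim\,\Rad/\numobs$.

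The last and most delicate step is to complete the square so that the matrix $\CovarianceWithBootstrapReg{\policy}$ appears exactly. The algebraic identity I would verify is $(\CovarianceStandardReg - \discount\CovarianceBootstrap{\policy})^\top \CovarianceStandardReg^{-1} (\CovarianceStandardReg - \discount\CovarianceBootstrap{\policy}) = \CovarianceWithBootstrapReg{\policy}$, which follows by expanding both sides and using the symmetry of $\CovarianceStandardReg$. Because $y + \regpar\Delta = (\CovarianceStandardReg - \discount\CovarianceBootstrap{\policy})\Delta$, this identity gives $\norm{\Delta}{\CovarianceWithBootstrapReg{\policy}}^2 = \norm{y + \regpar\Delta}{\CovarianceStandardReg^{-1}}^2$. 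A triangle inequality bounds the right-hand side by $(\norm{y}{\CovarianceStandardReg^{-1}} + \regpar\norm{\Delta}{\CovarianceStandardReg^{-1}})^2 \lesssim \norm{y}{\CovarianceStandardReg^{-1}}^2 + \regpar$, where I use $\regpar\norm{\Delta}{\CovarianceStandardReg^{-1}}^2 \le \norm{\Delta}{2}^2 \lesssim 1$ (since $\CovarianceStandardReg^{-1} \preceq \regpar^{-1}\Identity$ and the weight vectors are bounded). As $\regpar = 4\Rad/\numobs \le \dim\,\Rad/\numobs$, combining the displays yields $\norm{\Delta}{\CovarianceWithBootstrapReg{\policy}}^2 \lesssim \dim\,\Rad/\numobs$, which is the claim. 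I expect the main obstacles to be (i) verifying the completing-the-square identity that produces exactly $\CovarianceWithBootstrapReg{\policy}$, and (ii) correctly threading the regularizer $\regpar$ through the covariance-concentration step and the final triangle-inequality bound.
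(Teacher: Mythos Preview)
Your proposal is correct and follows essentially the same approach as the paper's proof: the paper likewise uses the sandwich relation to rewrite each population constraint with denominator $\sqrt{\EigenValueEmpirical{\iConstraint}+\regpar}$, invokes weak realizability and the linear identity~\eqref{eqn:LinearBellmanError} to get $\innerprodweighted{f_\iConstraint}{\BellmanError{\Qfnc}{\policy}{}}{\mudist}=\EigenVectorEmpirical{\iConstraint}^\top y$ with $y=(\CovarianceStandard-\discount\CovarianceBootstrap{\policy})\Delta$, sums the squared constraints into $y^\top(\CovarianceEmpiricalStandard+\regpar\Identity)^{-1}y$, applies \cref{lem:CovarianceConcentration}, and then completes the square via exactly the identity $(\CovarianceStandardReg-\discount\CovarianceBootstrap{\policy})^\top\CovarianceStandardReg^{-1}(\CovarianceStandardReg-\discount\CovarianceBootstrap{\policy})=\CovarianceWithBootstrapReg{\policy}$ you flagged, together with the same triangle-inequality bound on the $\regpar\Delta$ cross term.
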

\noindent See \cref{sec:RelaxedLinearConstraints} for the proof.

\newcommand{\IntermediateSmall}[4]{Using this lemma, we can bound the
  \shortgoodname coefficient as follows
\begin{align*}
  \ConcSimple \overset{(i)}{\leq}
  \frac{\nSamples}{\Rad} \; \max_{\Qfnc \in
    \PopulationFeasibleSet{\policy}} \innerprodweighted{\1} {
    \BellmanError{\Qfnc}{\policy}{} #4} {\policy}^2 
  & \overset{(ii)}{\leq} \frac{\nSamples}{\Rad} \; [\Expecti{(#1)^\top}{\policy} (#2)]^2
  \\
  & \overset{(iii)}{\leq} \frac{\nSamples}{\Rad} \: \norm{\Expecti{ #1
    }{\policy}}{(#3)^{-1}}^2 \norm{#2}{#3}^2 \\
& \leq \cOne \dim \norm{\Expecti{ #1}{\policy}}{(#3)^{-1}}^2.
\end{align*}
Here step $(i)$ follows from the definition of \goodname, 
$(ii)$ leverages the linear structure 
and $(iii)$ is Cauchy-Schwartz.
}


\IntermediateSmall
{\LinPhi - \discount \LinPhiBootstrap{\policy}}
{\CriticPar{} - \CriticParBest{\policy}}
{\CovarianceWithBootstrapReg{\policy}}
{-\BellmanError{\QpiWeak{\policy}}{\policy}{}}

\subsection{Proof of \texorpdfstring{\cref{lem:RelaxedLinearConstraints}}{}}
\label{sec:RelaxedLinearConstraints}

\intermediate{(\LinPhi - \discount\LinPhiBootstrap{\policy})^\top}
             {(\CovarianceStandardReg - \discount
               \CovarianceBootstrap{\policy})} {(\CriticPar{} -
               \CriticParBest{\policy})}
             {\CovarianceWithBootstrapReg{\policy}}
             {(\CovarianceStandard - \discount
               \CovarianceBootstrap{\policy})}
             {\cref{eqn:LinearBellmanError}}

             
\subsection{Proof of \cref{prop:LinearConcentrabilityBellmanClosure}}
\label{sec:LinearConcentrabilityBellmanClosure}

Under weak Bellman closure, we have
\begin{align}
\numberthis{\label{eqn:LinearBellmanErrorWithClosure}}  
\BellmanError{\Qfnc}{\policy}{} = \Qfnc -
\BellmanEvaluation{\policy}\Qfnc = \LinPhi^\top(\CriticPar{} -
\CriticParProjection{\policy}).
\end{align}
With a slight abuse of notation, let $\QpiProj{\policy}{\CriticPar{}}$
denote the weight vector that defines the action-value function
$\QpiProj{\policy}{\Qfnc}$.  We introduce the following auxiliary
lemma:

\begin{lemma}[Linear Parameter Constraints with Bellman Closure]
\label{lem:RelaxedLinearConstraintsBellmanClosure}
With probability at least $1-\FailureProbability$, if $\Qfnc \in
\PopulationFeasibleSet{\policy}$ then $ \norm{\CriticPar{} -
  \CriticParProjection{\policy}}{\CovarianceStandardReg }^2 \leq
\cOne\frac{\dim \Rad}{\nSamples}$.
\end{lemma}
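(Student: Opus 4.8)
The plan is to follow the proof of \cref{lem:RelaxedLinearConstraints} almost verbatim, exploiting the simplification that Bellman closure provides. Under closure, \cref{eqn:LinearBellmanErrorWithClosure} shows that the Bellman error is the purely linear function $\BellmanError{\Qfnc}{\policy}{} = \LinPhi^\top\Delta$, where $\Delta \defeq \CriticPar{} - \CriticParProjection{\policy}$, carrying no next-state (bootstrapping) term. This is the closure counterpart of the general identity \cref{eqn:LinearBellmanError}, in which the feature is the bootstrapped $\LinPhi - \discount\LinPhiBootstrap{\policy}$ and the parameter difference is taken against the weak solution. Consequently, everywhere the general argument carries the bootstrapped feature and the matrices $\CovarianceBootstrap{\policy}$, $\CovarianceWithBootstrapReg{\policy}$, I would substitute the plain feature $\LinPhi$ and the symmetric matrices $\CovarianceStandard$ and $\CovarianceStandardReg$; the desired bound $\norm{\Delta}{\CovarianceStandardReg}^2 \lesssim \dim\Rad/\nSamples$ is precisely the resulting specialization.

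First I would write out the population feasibility constraint for each of the $\dim$ eigenvector test functions $\TestFunction{\iConstraint}\psa = \inprod{\EigenVectorEmpirical{\iConstraint}}{\LinPhi\psa}$. Using the identity $\norm{\TestFunction{\iConstraint}}{\nSamples}^2 = \EigenVectorEmpirical{\iConstraint}^\top\CovarianceEmpiricalStandard\EigenVectorEmpirical{\iConstraint} = \EigenValueEmpirical{\iConstraint}$, together with the sandwich relation from \cref{thm:NewPolicyEvaluation} to pass between the empirical and population self-normalizations, each constraint reads (up to constants) $\innerprodweighted{\TestFunction{\iConstraint}}{\BellmanError{\Qfnc}{\policy}{}}{\DatasetDistributionStateActions}/\sqrt{\EigenValueEmpirical{\iConstraint} + \TestFunctionReg} \lesssim \sqrt{\Rad/\nSamples}$. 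Substituting the linear form of the Bellman error gives $\innerprodweighted{\TestFunction{\iConstraint}}{\BellmanError{\Qfnc}{\policy}{}}{\DatasetDistributionStateActions} = \EigenVectorEmpirical{\iConstraint}^\top\CovarianceStandard\,\Delta$ since $\CovarianceStandard = \Expecti{\LinPhi\LinPhi^\top}{\DatasetDistributionStateActions}$. Setting $y \defeq \CovarianceStandard\,\Delta$, squaring, and summing over $\iConstraint = 1,\dots,\dim$ collapses the rank-one sum $\sum_{\iConstraint=1}^{\dim} \EigenVectorEmpirical{\iConstraint}\EigenVectorEmpirical{\iConstraint}^\top/(\EigenValueEmpirical{\iConstraint}+\TestFunctionReg) = (\CovarianceEmpiricalStandard + \TestFunctionReg\Identity)^{-1}$ into the quadratic form $y^\top(\CovarianceEmpiricalStandard + \TestFunctionReg\Identity)^{-1}y \lesssim \dim\Rad/\nSamples$.

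The remaining two steps are where the content lies. I would invoke the covariance concentration bound \cref{lem:CovarianceConcentration} — legitimate precisely because $\TestFunctionReg \asymp \Rad/\nSamples$ is a large enough regularizer — to replace $(\CovarianceEmpiricalStandard + \TestFunctionReg\Identity)^{-1}$ by $\CovarianceStandardReg^{-1}$, yielding $y^\top\CovarianceStandardReg^{-1}y \lesssim \dim\Rad/\nSamples$. Finally I would complete the quadratic form: since $y + \TestFunctionReg\,\Delta = (\CovarianceStandard + \TestFunctionReg\Identity)\Delta = \CovarianceStandardReg\,\Delta$, a triangle-inequality bound on $\norm{y + \TestFunctionReg\,\Delta}{\CovarianceStandardReg^{-1}}$ that absorbs the $O(\TestFunctionReg)$ cross term (again using $\TestFunctionReg\lesssim \Rad/\nSamples$) converts the bound on $y^\top\CovarianceStandardReg^{-1}y$ into $\norm{\CovarianceStandardReg\,\Delta}{\CovarianceStandardReg^{-1}}^2 = \norm{\Delta}{\CovarianceStandardReg}^2 \lesssim \dim\Rad/\nSamples$, as claimed, on the intersection of the two high-probability events (each of probability at least $1 - \FailureProbability$). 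I expect the covariance-concentration step to be the main obstacle, as it is the only place where the specific scale of $\TestFunctionReg$ is used and where the probabilistic content enters; the rest is the deterministic linear-algebraic bookkeeping already carried out in \cref{lem:RelaxedLinearConstraints}, here streamlined by the absence of the non-symmetric bootstrapping operator.
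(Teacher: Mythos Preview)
Your proposal is correct and follows the paper's proof essentially step for step: the paper literally reuses the same argument template as \cref{lem:RelaxedLinearConstraints}, specialized via \cref{eqn:LinearBellmanErrorWithClosure} so that the bootstrapped feature $\LinPhi - \discount\LinPhiBootstrap{\policy}$ and the matrix $\CovarianceWithBootstrapReg{\policy}$ collapse to $\LinPhi$ and $\CovarianceStandardReg$, and then carries out exactly the eigenvector-constraint sum, the covariance-concentration swap via \cref{lem:CovarianceConcentration}, and the quadratic-form completion $y + \TestFunctionReg\,\Delta = \CovarianceStandardReg\,\Delta$ that you describe.
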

See~\cref{sec:RelaxedLinearConstraintsBellmanClosure} for the proof.
%
\IntermediateSmall
{\LinPhi}
{\CriticPar{} - \CriticParProjection{\policy}}
{\CovarianceStandardReg}
{}
%


\subsection{Proof of
\texorpdfstring{\cref{sec:RelaxedLinearConstraintsBellmanClosure}}{}}
\label{sec:RelaxedLinearConstraintsBellmanClosure}

\intermediate{\LinPhi^\top} {\CovarianceStandardReg} {(\CriticPar{} -
  \CriticParProjection{\policy})} {\CovarianceStandardReg}
             {\CovarianceStandard}
             {\cref{eqn:LinearBellmanErrorWithClosure}}

             

\newpage
\section{Proof of \texorpdfstring{\cref{thm:LinearApproximation}}{}}
\label{sec:LinearApproximation}

In this section, we prove the guarantee on our actor-critic procedure
stated in \cref{thm:LinearApproximation}.

\hidecom{ The proof consists of four main steps.  First, we show that
  the critic linear $\QminEmp{\policy}$ function can be interpreted as
  the \emph{exact} value function on an MDP with a perturbed reward
  function.  Second, we show that the global update rule in
  \cref{eqn:LinearActorUpdate} is equivalent to an instantiation of
  Mirror descent where the gradient is the $\Qfnc$ of the adversarial
  MDP.  Third, we analyze the progress of Mirror descent in finding a
  good solution on the sequence of adversarial MDP identified by the
  critic.  Finally we put everything together to derive a performance
  bound.  }

\subsection{Adversarial MDPs}
\label{sec:AdversarialMDP}

We now introduce sequence of adversarial MDPs
$\{\MDPadv{\iter}\}_{\iter=1}^\nIter$ used in the analysis.  Each MDP
$\MDPadv{\iter}$ is defined by the same state-action space and
transition law as the original MDP $\MDP$, but with the reward
functions $\Reward$ perturbed by $\RewardLawAdv{\iter}$---that
is
\begin{align}
\label{eqn:AdversarialMDP}
\MDPadv{\iter} \defeq \langle \StateSpace,\ActionSpace, \RewardLaw +
\RewardLawAdv{\iter}, \TransitionLaw, \discount \rangle.
\end{align}
For an arbitrary policy $\policy$, we denote with
$\QpiAdv{\iter}{\policy}$ and with $\ApiAdv{\iter}{\policy}$ the
action value function and the advantage function on $\MDPadv{\iter}$;
the value of $\policy$ from the starting distribution
$\startdistribution$ is denoted by $\VpiAdv{\iter}{\policy}$.  We
immediately have the following expression for the value function,
which follows because the dynamics of $\MDPadv{\iter}$ and $\MDP$ are
identical and the reward function of $\MDPadv{\iter}$ equals that of
$\MDP$ plus $\RewardLawAdv{\iter}$
	\begin{align}
	\label{eqn:VonAdv}
		\VpiAdv{\iter}{\policy} 
		\defeq \horizon \Expecti{
		\Big[ 
		\RewardLaw 
		+ \RewardLawAdv{\iter}
		\Big]}{\policy}.
	\end{align}

Consider the action value function
 $\QminEmp{\ActorPolicy{\iter}}$
returned by the critic,
and let the reward perturbation 
$\RewardLawAdv{\iter} 
= 
\BellmanError{\QminEmp{\ActorPolicy{\iter}}}{\ActorPolicy{\iter}}{}$ 
be the Bellman error of the critic value function 
$\QminEmp{\ActorPolicy{\iter}}$.
The special property of $\MDPadv{\iter}$ is that 
the action value function of $\ActorPolicy{\iter}$ on $\MDPadv{\iter}$ 
equals the critic lower estimate $\QminEmp{\ActorPolicy{\iter}}$.
\begin{lemma}[Adversarial MDP Equivalence]
  \label{lem:QfncOnAdversarialMDP}
Given the perturbed MDP $\MDPadv{\iter}$ from
equation~\eqref{eqn:AdversarialMDP} with $\RewardLawAdv{\iter} \defeq
\BellmanError{\QminEmp{\ActorPolicy{\iter}}}{\ActorPolicy{\iter}}{}$,
we have the equivalence
\begin{align*}
  \QpiAdv{\iter}{\ActorPolicy{\iter}} = \QminEmp{\ActorPolicy{\iter}}.
\end{align*}
\end{lemma}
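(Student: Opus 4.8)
The plan is to exploit the defining property of $\QpiAdv{\iter}{\ActorPolicy{\iter}}$, namely that it is the \emph{unique} fixed point of the Bellman evaluation operator associated with the perturbed MDP $\MDPadv{\iter}$ and the policy $\ActorPolicy{\iter}$. Rather than computing this fixed point from scratch, I would simply verify that the critic estimate $\QminEmp{\ActorPolicy{\iter}}$ already satisfies the corresponding fixed-point equation; since $\discount \in [0,1)$ guarantees that the fixed point is unique, the two functions must then coincide.

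First I would write out the Bellman evaluation operator on $\MDPadv{\iter}$. Because $\MDPadv{\iter}$ shares the transition law $\TransitionLaw$ and discount $\discount$ with $\MDP$ and only replaces the mean reward $\reward$ by $\reward + \RewardLawAdv{\iter}$ (see~\eqref{eqn:AdversarialMDP}), this operator acts on an arbitrary $\Qfnc$ as
\begin{align*}
(\BellmanEvaluation{\ActorPolicy{\iter}}_{\MDPadv{\iter}} \Qfnc)\psa
= \reward\psa + \RewardLawAdv{\iter}\psa
+ \discount \Expecti{\Qfnc(\successorstate, \ActorPolicy{\iter})}{\successorstate \sim \TransitionLaw\psa}.
\end{align*}

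The key step is to substitute $\RewardLawAdv{\iter} = \BellmanError{\QminEmp{\ActorPolicy{\iter}}}{\ActorPolicy{\iter}}{}$ and then apply this operator to $\Qfnc = \QminEmp{\ActorPolicy{\iter}}$. Expanding the Bellman error via its definition $(\BellmanError{\QminEmp{\ActorPolicy{\iter}}}{\ActorPolicy{\iter}}{})\psa = \QminEmp{\ActorPolicy{\iter}}\psa - \reward\psa - \discount \Expecti{\QminEmp{\ActorPolicy{\iter}}(\successorstate, \ActorPolicy{\iter})}{\successorstate \sim \TransitionLaw\psa}$, the mean-reward contributions $\reward\psa$ cancel, and the two bootstrapping terms $\discount \Expecti{\QminEmp{\ActorPolicy{\iter}}(\successorstate, \ActorPolicy{\iter})}{\successorstate \sim \TransitionLaw\psa}$ cancel against one another, leaving $(\BellmanEvaluation{\ActorPolicy{\iter}}_{\MDPadv{\iter}} \QminEmp{\ActorPolicy{\iter}})\psa = \QminEmp{\ActorPolicy{\iter}}\psa$ for every $\psa$. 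Hence $\QminEmp{\ActorPolicy{\iter}}$ is a fixed point of $\BellmanEvaluation{\ActorPolicy{\iter}}_{\MDPadv{\iter}}$, and uniqueness of the fixed point yields $\QpiAdv{\iter}{\ActorPolicy{\iter}} = \QminEmp{\ActorPolicy{\iter}}$.

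I do not expect a genuine obstacle: this lemma is a direct verification engineered to reinterpret the critic's pessimistic estimate as an exact value function on a perturbed MDP, and the perturbation $\RewardLawAdv{\iter}$ is chosen precisely so that the Bellman residual is absorbed into the reward. The only points deserving care are that the bootstrapping expectation $\Expecti{\Qfnc(\successorstate, \ActorPolicy{\iter})}{\successorstate \sim \TransitionLaw\psa}$ is taken under the same transition law and the same policy $\ActorPolicy{\iter}$ in both the perturbed operator and the Bellman error — which is what makes the cancellation exact — and that $\RewardLawAdv{\iter}$ need not lie in $[0,1]$, so one should remark that the construction only requires a bounded (hence well-defined) perturbed reward together with $\discount < 1$ for the fixed point to exist and be unique, rather than rewards in the original range.
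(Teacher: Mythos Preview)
Your proposal is correct and matches the paper's own proof essentially line for line: the paper also verifies that $\QminEmp{\ActorPolicy{\iter}} - \BellmanEvaluation{\ActorPolicy{\iter}}_{\MDPadv{\iter}}(\QminEmp{\ActorPolicy{\iter}}) = \BellmanError{\QminEmp{\ActorPolicy{\iter}}}{\ActorPolicy{\iter}}{} - \RewardLawAdv{\iter} = 0$ and concludes that $\QminEmp{\ActorPolicy{\iter}}$ is the action-value function on the perturbed MDP. Your explicit remark about uniqueness of the fixed point and the reward-range caveat are reasonable additions, but the core argument is identical.
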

\begin{proof}
We need to check that $\QminEmp{\ActorPolicy{\iter}}$ solves the
Bellman evaluation equations for the adversarial MDP, ensuring that
$\QminEmp{\ActorPolicy{\iter}}$ is the action-value function of
$\ActorPolicy{\iter}$ on $\MDPadv{\iter}$.  Let
$\BellmanEvaluation{\ActorPolicy{\iter}}_\iter$ be the Bellman
evaluation operator on $\MDPadv{\iter}$ for policy
$\ActorPolicy{\iter}$.  We have
\begin{align*}
\QminEmp{\ActorPolicy{\iter}} -
\BellmanEvaluation{\ActorPolicy{\iter}}_\iter(\QminEmp{\ActorPolicy{\iter}})
= \QminEmp{\ActorPolicy{\iter}} -
\BellmanEvaluation{\ActorPolicy{\iter}}(\QminEmp{\ActorPolicy{\iter}})
- \RewardLawAdv{\iter} & =
\BellmanError{\QminEmp{\ActorPolicy{\iter}}}{\ActorPolicy{\iter}}{} -
\BellmanError{\QminEmp{\ActorPolicy{\iter}}}{\ActorPolicy{\iter}}{} =
0.
\end{align*}
Thus, the function $\QminEmp{\ActorPolicy{\iter}}$ is the action value
function of $\ActorPolicy{\iter}$ on $\MDPadv{\iter}$, and it is by
definition denoted by $\QpiAdv{\iter}{\ActorPolicy{\iter}}$.
\end{proof}

This lemma shows that the action-value function
$\QminEmp{\ActorPolicy{\iter}}$ computed by the critic is equivalent
to the action-value function of $\ActorPolicy{\iter}$ on
$\MDPadv{\iter}$.  Thus, we can interpret the critic as performing a
model-based pessimistic estimate of $\ActorPolicy{\iter}$; this view
is useful in the rest of the analysis.

\subsection{Equivalence of Updates}
The second step is to establish the equivalence between the update
rule~\eqref{eqn:LinearActorUpdate}, or equivalently as the
update~\eqref{eqn:GlobalRule}, to the exponentiated gradient update
rule~\eqref{eqn:LocalRule}.
\begin{lemma}[Equivalence of Updates]
\label{lem:UpdateEquivalence}
For linear $Q$-functions of the form $\QpiAdv{\iter}{} \psa =
\inprod{\CriticPar{\iter}}{\LinPhi \psa}$, the parameter update
\begin{subequations} 
\begin{align}
\label{eqn:GlobalRule}	
\ActorPolicy{\iter+1} \pas & \propto \exp(\LinPhi\psa^\top
(\ActorPar{\iter} + \LearningRate \CriticPar{\iter})), \qquad
\intertext{is equivalent to the policy update}
\label{eqn:LocalRule}
\ActorPolicy{\iter+1}\pas & \propto \ActorPolicy{\iter}\pas
\exp(\LearningRate\QpiAdv{\iter}{} \psa), \qquad \ActorPolicy{1}\pas =
\frac{1}{\card{\ActionSpace_{\state}}}.
\end{align}
\end{subequations}
\end{lemma}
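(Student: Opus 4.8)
The plan is to verify the claimed equivalence by direct substitution, exploiting the fact that the soft-max parameterization~\eqref{eqn:SoftMax} converts the additive parameter update~\eqref{eqn:LinearActorUpdate} into a multiplicative update on the policy, with the per-state partition function playing the role of the state-dependent proportionality constant. First I would record the elementary consequence of the soft-max definition: writing $Z_\iter(\state) \defeq \sum_{\successoraction \in \ActionSpace} e^{\inprod{\LinPhi(\state,\successoraction)}{\ActorPar{\iter}}}$ for the normalizer at step $\iter$, we have $\ActorPolicy{\iter}\psa = e^{\inprod{\LinPhi\psa}{\ActorPar{\iter}}}/Z_\iter(\state)$, and hence $e^{\inprod{\LinPhi\psa}{\ActorPar{\iter}}} = Z_\iter(\state)\,\ActorPolicy{\iter}\psa$.

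Next I would substitute the actor update $\ActorPar{\iter+1} = \ActorPar{\iter} + \LearningRate\CriticPar{\iter}$ into the numerator of $\ActorPolicy{\iter+1}$ and split the exponent using bilinearity of the inner product:
\[
e^{\inprod{\LinPhi\psa}{\ActorPar{\iter+1}}} = e^{\inprod{\LinPhi\psa}{\ActorPar{\iter}}}\, e^{\LearningRate \inprod{\LinPhi\psa}{\CriticPar{\iter}}} = Z_\iter(\state)\,\ActorPolicy{\iter}\psa\, e^{\LearningRate \QpiAdv{\iter}{}\psa},
\]
where the final equality combines the identity from the previous step with the linear form $\QpiAdv{\iter}{}\psa = \inprod{\CriticPar{\iter}}{\LinPhi\psa}$. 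Since $Z_\iter(\state)$ depends only on $\state$ and not on $\action$, it is a constant multiple that is absorbed into the proportionality sign once we normalize over actions at each fixed state, giving $\ActorPolicy{\iter+1}\psa \propto \ActorPolicy{\iter}\psa\, e^{\LearningRate \QpiAdv{\iter}{}\psa}$, which is precisely the local rule~\eqref{eqn:LocalRule}. For the base case, the initialization $\ActorPar{1} = 0$ makes every exponent vanish, so that $\ActorPolicy{1}\psa = 1/\card{\ActionSpace_{\state}}$, matching the stated initial condition.

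The argument is essentially a one-line computation, so I do not expect a substantial obstacle; the only point requiring genuine care is to make explicit that both proportionality statements in~\eqref{eqn:GlobalRule} and~\eqref{eqn:LocalRule} are resolved by the \emph{same} normalization, namely summation over $\action$ at each fixed $\state$, so that the cancellation of the state-only factor $Z_\iter(\state)$ is legitimate. If one prefers an unambiguous presentation, the equivalence can instead be stated at the level of the fully normalized distributions, by writing out both numerator and denominator of the local rule, substituting $\ActorPolicy{\iter}\psa = e^{\inprod{\LinPhi\psa}{\ActorPar{\iter}}}/Z_\iter(\state)$, and observing that $Z_\iter(\state)$ appears identically in numerator and denominator and therefore cancels, leaving exactly the soft-max form associated with $\ActorPar{\iter+1}$.
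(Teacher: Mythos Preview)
Your proof is correct and follows essentially the same route as the paper's own argument: both split the exponential $\exp(\LinPhi\psa^\top(\ActorPar{\iter}+\LearningRate\CriticPar{\iter}))$ into the product $\exp(\LinPhi\psa^\top\ActorPar{\iter})\exp(\LearningRate\QpiAdv{\iter}{}\psa)$ and absorb the state-only factor into the proportionality constant. The paper frames this as an induction on $\iter$, but the inductive step is exactly your computation, and your explicit handling of the normalizer $Z_\iter(\state)$ makes the cancellation more transparent.
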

\begin{proof}
We prove this claim via induction on $\iter$.  The base case ($\iter =
1$) holds by a direct calculation.  Now let us show that the two
update rules update $\ActorPolicy{\iter}$ in the same way.  As an
inductive step, assume that both rules maintain the same policy
$\ActorPolicy{\iter} \propto \exp(\LinPhi\psa^\top\ActorPar{\iter})$
at iteration $\iter$; we will show the policies are still the same at
iteration $\iter+1$.  At any $\psa$, we have
\begin{align*}
\ActorPolicy{\iter+1}(\action \mid \state) \propto
\exp(\LinPhi\psa^\top (\ActorPar{\iter} +
\LearningRate\CriticPar{\iter})) & \propto \exp(\LinPhi\psa^\top
\ActorPar{\iter}) \exp(\LearningRate\LinPhi\psa^\top
\CriticPar{\iter}) \\ & \propto \ActorPolicy{\iter}(\action \mid
\state) \exp(\LearningRate\QpiAdv{\iter}{} \psa).
\end{align*}
\end{proof}

Recall that $\ActorPar{\iter}$ is the parameter associated to
$\ActorPolicy{\iter}$ and that $\CriticPar{\iter}$ is the parameter
associated to $\QminEmp{\ActorPolicy{\iter}}$.  Using
\cref{lem:UpdateEquivalence} together with
\cref{lem:QfncOnAdversarialMDP} we obtain that the actor policy
$\ActorPolicy{\iter}$ satisfies through its parameter
$\ActorPar{\iter}$ the mirror descent update
rule~\eqref{eqn:LocalRule} with $\QpiAdv{\iter}{} =
\QminEmp{\ActorPolicy{\iter}} = \QpiAdv{\iter}{\ActorPolicy{\iter}}$
and $\ActorPolicy{1}(\action \mid \state) =
1/\abs{\ActionSpace_\state}, \; \forall \psa$.  In words, the actor is
using Mirror descent to find the best policy on the sequence of
adversarial MDPs $\{\MDPadv{\iter} \}$ implicitly identified by the
critic.

\subsection{Mirror Descent on Adversarial MDPs}

Our third step is to analyze the behavior of mirror descent on the MDP
sequence $\{\MDPadv{\iter}\}_{\iter=1}^\nIter$, and then translate
such guarantees back to the original MDP $\MDP$.  The following result
provides a bound on the average of the value functions
$\{\Vpi{\ActorPolicy{\iter}} \}_{\iter=1}^\nIter$ induced by the
actor's policy sequence.  This bound involves a form of optimization
error\footnote{Technically, this error should depend on
$\card{\ActionSpace_\state}$, if we were to allow the action spaces to
have varyign cardinality, but we elide this distinction here.}  given
by
\begin{align*}
\MirrorRegret{\nIter} & = 2 \, \sqrt{ \frac{2 \log
    |\ActionSpace|}{\nIter}},
\end{align*}
as is standard in mirror descent schemes.  It also involves the
\emph{perturbed rewards} given by $\RewardLawAdv{\iter} \defeq
\BellmanError{\QpiAdv{\iter}{\ActorPolicy{\iter}}}
             {\ActorPolicy{\iter}}{}$.
\begin{lemma}[Mirror Descent on Adversarial MDPs]
\label{prop:MirrorDescentAdversarialRewards}
For any positive integer $\nIter$, applying the update
rule~\eqref{eqn:LocalRule} with $\QpiAdv{\iter}{} =
\QpiAdv{\iter}{\ActorPolicy{\iter}}$ for $\nIter$ rounds yields a
sequence such that
\begin{align}
  \label{eqn:MirrorDescentAdversarialRewards}
  \frac{1}{\nIter} \sumiter \Big[ \Vpi{\comparator} -
    \Vpi{\ActorPolicy{\iter}} \Big] \leq \horizon \left \{
  \MirrorRegret{\nIter} + \frac{1}{\nIter} \sumiter \Big[ -
    \Expecti{\RewardLawAdv{\iter}}{\comparator} +
    \Expecti{\RewardLawAdv{\iter}}{\ActorPolicy{\iter}} \Big] \right
  \},
\end{align}
valid for any comparator policy $\comparator$.
\end{lemma}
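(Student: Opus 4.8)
The plan is to recognize the actor's update as exponentiated gradient (Hedge) run independently at each state over the sequence of adversarial MDPs $\{\MDPadv{\iter}\}$, to control its regret via the performance-difference lemma applied on each $\MDPadv{\iter}$, and finally to transfer the guarantee from the adversarial values $\VpiAdv{\iter}{\cdot}$ back to the true values $\Vpi{\cdot}$ using the identity~\eqref{eqn:VonAdv}. The structural fact that makes this work is that, by \cref{lem:UpdateEquivalence} together with \cref{lem:QfncOnAdversarialMDP}, the iterates obey the multiplicative-weights recursion~\eqref{eqn:LocalRule} whose gains are \emph{exactly} the on-policy action-value functions $\QpiAdv{\iter}{\ActorPolicy{\iter}}$ on $\MDPadv{\iter}$; thus the gains driving the learner and the value functions governing the dynamics coincide, which is precisely what a clean performance-difference decomposition requires.

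First I would apply the (standard) performance-difference lemma on the fixed-dynamics MDP $\MDPadv{\iter}$; since $\MDPadv{\iter}$ and $\MDP$ share transitions, the discounted occupancy of the comparator is common to both, and the lemma gives
\begin{align*}
\VpiAdv{\iter}{\comparator} - \VpiAdv{\iter}{\ActorPolicy{\iter}} = \horizon\,\Expecti{\ApiAdv{\iter}{\ActorPolicy{\iter}}}{\comparator} = \horizon\,\Expecti{\smlinprod{\comparator(\cdot\mid\state)-\ActorPolicy{\iter}(\cdot\mid\state)}{\QpiAdv{\iter}{\ActorPolicy{\iter}}(\state,\cdot)}}{\comparator},
\end{align*}
where the second equality uses $\ApiAdv{\iter}{\ActorPolicy{\iter}}(\state,\action)=\QpiAdv{\iter}{\ActorPolicy{\iter}}(\state,\action)-\smlinprod{\ActorPolicy{\iter}(\cdot\mid\state)}{\QpiAdv{\iter}{\ActorPolicy{\iter}}(\state,\cdot)}$ and the last expectation is over the state-marginal of $\comparator$'s occupancy. (If one prefers to avoid citing the performance-difference lemma as a black box, it can be produced by a two-policy version of the telescoping argument already used for \cref{lem:Simulation}.)

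Next I would invoke the entropic-regularizer (Hedge) regret bound for the per-state online problem whose gain vectors are $\QpiAdv{\iter}{\ActorPolicy{\iter}}(\state,\cdot)$; these lie in a bounded range because feasible critics satisfy $\|\QpiAdv{\iter}{\ActorPolicy{\iter}}\|_\infty\le 1$. With the uniform initialization and stepsize $\LearningRate=\sqrt{\log\abs{\ActionSpace}/(2\nIter)}$, this yields, for every fixed $\state$,
\begin{align*}
\frac{1}{\nIter}\sumiter \smlinprod{\comparator(\cdot\mid\state)-\ActorPolicy{\iter}(\cdot\mid\state)}{\QpiAdv{\iter}{\ActorPolicy{\iter}}(\state,\cdot)} \le \MirrorRegret{\nIter}.
\end{align*}
Because this holds state-by-state, I can take the expectation over $\state\sim\comparator$ and combine with the displayed performance-difference identity to conclude $\frac{1}{\nIter}\sumiter\bigl[\VpiAdv{\iter}{\comparator}-\VpiAdv{\iter}{\ActorPolicy{\iter}}\bigr]\le \horizon\,\MirrorRegret{\nIter}$.

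Finally I would convert adversarial values into true values. Applying~\eqref{eqn:VonAdv} to both $\comparator$ and $\ActorPolicy{\iter}$ and subtracting gives $\VpiAdv{\iter}{\comparator}-\VpiAdv{\iter}{\ActorPolicy{\iter}} = \bigl(\Vpi{\comparator}-\Vpi{\ActorPolicy{\iter}}\bigr) + \horizon\bigl(\Expecti{\RewardLawAdv{\iter}}{\comparator}-\Expecti{\RewardLawAdv{\iter}}{\ActorPolicy{\iter}}\bigr)$, using $\Vpi{\policy}=\horizon\Expecti{\RewardLaw}{\policy}$. Solving for $\Vpi{\comparator}-\Vpi{\ActorPolicy{\iter}}$, averaging over $\iter$, and substituting the adversarial-value bound from the previous step yields exactly~\eqref{eqn:MirrorDescentAdversarialRewards}. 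The main obstacle is not any individual computation but the bookkeeping that makes the per-state Hedge guarantee interact correctly with the performance-difference lemma: one must confirm that the actor genuinely runs independent Hedge at each state with gains $\QpiAdv{\iter}{\ActorPolicy{\iter}}$ (via \cref{lem:UpdateEquivalence,lem:QfncOnAdversarialMDP}), that these gains are bounded so the stated constant $\MirrorRegret{\nIter}=2\sqrt{2\log\abs{\ActionSpace}/\nIter}$ is valid, and that the occupancy against which we integrate is the same on $\MDPadv{\iter}$ and $\MDP$.
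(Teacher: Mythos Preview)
Your proposal is correct and follows essentially the same approach as the paper: the paper likewise invokes the performance-difference (simulation) lemma on each $\MDPadv{\iter}$ to write $\VpiAdv{\iter}{\comparator}-\VpiAdv{\iter}{\ActorPolicy{\iter}}=\horizon\,\Expecti{\ApiAdv{\iter}{\ActorPolicy{\iter}}}{\comparator}$, applies the per-state Hedge/mirror-descent regret bound with gains $\QpiAdv{\iter}{\ActorPolicy{\iter}}(\state,\cdot)$, and converts between adversarial and true values via~\eqref{eqn:VonAdv}. The only cosmetic difference is the order of presentation---the paper combines the value conversion and performance-difference step into a single display before averaging, whereas you treat them as separate steps.
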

\noindent See \cref{sec:MirrorDescentAdversarialRewards} for the
proof. \\

To be clear, the comparator policy $\comparator$ need belong to the
soft-max policy class.  Apart from the optimization error term, our
bound~\eqref{eqn:MirrorDescentAdversarialRewards} involves the
behavior of the perturbed rewards $\RewardLawAdv{\iter}$ along the
comparator $\comparator$ and $\ActorPolicy{\iter}$, respectively.
These correction terms arise because the actor performs the policy
update using the action-value function
$\QpiAdv{\iter}{\ActorPolicy{\iter}}$ on the perturbed MDPs instead of
the real underlying MDP.

\subsection{Pessimism: Bound on 
\texorpdfstring{$\Expecti{\RewardLawAdv{\iter}}{\ActorPolicy{\iter}}$}{}}

The fourth step of the proof is to leverage the pessimistic estimates
returned by critic to simplify
equation~\eqref{eqn:MirrorDescentAdversarialRewards}.  Using
\cref{lem:Simulation} and the definition of adversarial reward
$\RewardLawAdv{\iter}$ we can write
\begin{align*}
	\VminEmp{\ActorPolicy{}} - \Vpi{\ActorPolicy{\iter}} =
        \horizon
        \innerprodweighted{\1}{\BellmanError{\QminEmp{\ActorPolicy{\iter}}}{\ActorPolicy{\iter}}{}}{\ActorPolicy{\iter}}
        = \horizon
        \Expecti{\BellmanError{\QminEmp{\ActorPolicy{\iter}}}{\ActorPolicy{\iter}}{}}{\ActorPolicy{\iter}}
        & = \horizon
        \Expecti{\RewardLawAdv{\iter}}{\ActorPolicy{\iter}}.
\end{align*}
Since weak realizability holds, \cref{thm:NewPolicyEvaluation}
guarantees that $\VminEmp{\ActorPolicy{}} \leq \Vpi{\policy}$
uniformly for all $\policy \in \PolicyClass$ with probability at least
$1-\FailureProbability$.  Coupled with the prior display, we find that
\begin{align}
\label{eqn:PessimisticAdversarialReward}
	\Expecti{\RewardLawAdv{\iter}}{\ActorPolicy{\iter}}  \leq 0.
\end{align}
Using the above display, the result in
\cref{eqn:MirrorDescentAdversarialRewards} can be further upper
bounded and simplified.

\subsection{Concentrability: Bound on 
\texorpdfstring{$\Expecti{\RewardLawAdv{\iter}}{\comparator}$}{}} The
term $\Expecti{\RewardLawAdv{\iter}}{\comparator}$ can be interpreted
as an approximate concentrability factor for the approximate algorithm
that we are investigating.

\paragraph{Bound under only weak realizability:}
\cref{lem:RelaxedLinearConstraints} gives with probability at least
$1-\FailureProbability$ that any surviving $\Qfnc$ in
$\PopulationFeasibleSet{\ActorPolicy{\iter}}$ must satisfy: $ \norm{
  \CriticPar{} -
  \CriticParBest{\ActorPolicy{\iter}}}{\CovarianceWithBootstrapReg{\ActorPolicy{\iter}}}^2
\lesssim \frac{\dim \Rad}{\nSamples} $ where
$\CriticParBest{\ActorPolicy{\iter}}$ is the parameter associated to
the weak solution $\QpiWeak{\ActorPolicy{\iter}}$.  Such bound must
apply to the parameter $ \CriticPar{\iter} \in
\EmpiricalFeasibleSet{\ActorPolicy{\iter}}$ identified by the
critic.\footnote{We abuse the notation and write $\CriticPar{} \in
\EmpiricalFeasibleSet{\policy}$ in place of $\Qfnc \in
\EmpiricalFeasibleSet{\policy}$}.

We are now ready to bound the remaining adversarial reward along the
distribution of the comparator $\comparator$.
\begin{align*}
  \abs{\Expecti{\RewardLawAdv{\iter}}{\comparator}} & =
  \abs{\Expecti{\BellmanError{\QminEmp{\ActorPolicy{\iter}}}{\ActorPolicy{\iter}}{}}{\comparator}}
  \\ & \overset{\text(i)}{=} \abs{\Expecti{ (\LinPhi - \discount
      \LinPhiBootstrap{\ActorPolicy{\iter}})^\top (\CriticPar{\iter} -
      \CriticParBest{\ActorPolicy{\iter}}) }{\comparator}} \\ & \leq
  \norm{\Expecti{[\LinPhi - \discount
        \LinPhiBootstrap{\ActorPolicy{\iter}}]}{\comparator}}{(\CovarianceWithBootstrapReg{\ActorPolicy{\iter}})^{-1}}
  \norm{\CriticPar{\iter} -
    \CriticParBest{\ActorPolicy{\iter}}}{\CovarianceWithBootstrapReg{\ActorPolicy{\iter}}}
  \\ & \leq c \; \sqrt{\frac{\dim \Rad}{\nSamples}} \; \sup_{\policy
    \in \PolicyClass} \left \{ \norm{\Expecti{[\LinPhi - \discount
        \LinPhiBootstrap{\policy}]}{\comparator}}{(\CovarianceWithBootstrapReg{\policy})^{-1}}
  \right \}.
  \numberthis{\label{eqn:ApproximateLinearConcentrability}}
	\end{align*}
Step (i) follows from the expression~\eqref{eqn:LinearBellmanError}
for the weak Bellman error, along with the definition of the weak
solution $\QpiWeak{\ActorPolicy{\iter}}$.

\paragraph{Bound under weak Bellman closure:}
When Bellman closure holds we proceed analogously.  The bound in
\cref{lem:RelaxedLinearConstraintsBellmanClosure} ensures with
probability at least $1-\FailureProbability$ that $ \norm{\CriticPar{}
  -
  \CriticParProjection{\ActorPolicy{\iter}}}{\CovarianceStandardReg}^2
\leq c \; \frac{\dim \Rad}{\nSamples} $ for all $\CriticPar{} \in
\PopulationFeasibleSet{\ActorPolicy{\iter}}$; as before, this relation
must apply to the parameter chosen by the critic $\CriticPar{\iter}
\in \EmpiricalFeasibleSet{\ActorPolicy{\iter}}$.  The bound on the
adversarial reward along the distribution of the comparator
$\comparator$ now reads
\begin{align*}
\abs{\Expecti{\RewardLawAdv{\iter}}{\comparator}} \: = \:
\abs{\Expecti{\BellmanError{\QminEmp{\ActorPolicy{\iter}}}{\ActorPolicy{\iter}}{}}{\comparator}}
& \overset{\text{(i)}}{=} \abs{\Expecti{ \LinPhi^\top
    (\CriticPar{\iter} -
    \CriticParProjectionFull{\ActorPolicy{\iter}}{\CriticPar{\iter}})
  }{\comparator}} \\ & \leq
\norm{\Expecti{\LinPhi}{\comparator}}{\CovarianceStandardReg^{-1}}
\norm{\CriticPar{\iter} -
  \CriticParProjectionFull{\ActorPolicy{\iter}}{\CriticPar{\iter}}}{\CovarianceStandardReg}
\\
& \leq c \;
\norm{\Expecti{\LinPhi}{\comparator}}{\CovarianceStandardReg^{-1}}
\sqrt{\frac{\dim \Rad}{\nSamples}}.
\numberthis{\label{eqn:ApproximateLinearConcentrabilityBellmanClosure}}
\end{align*}
Here step (i) follows from the
expression~\eqref{eqn:LinearBellmanErrorWithClosure} for the Bellman
error under weak closure.


\subsection{Proof of \cref{prop:MirrorDescentAdversarialRewards}}
\label{sec:MirrorDescentAdversarialRewards}

We now prove our guarantee for a mirror descent procedure on the
sequence of adversarial MDPs.  Our analysis makes use of a standard
result on online mirror descent for linear functions (e.g., see
Section 5.4.2 of Hazan~\cite{hazan2021introduction}), which we state
here for reference.  Given a finite cardinality set $\xSpace$, a
function $f: \xSpace \rightarrow \R$, and a distribution $\mirrdist$
over $\xSpace$, we define $f(\mirrdist) \defeq \sum_{x \in \xSpace}
\mirrdist(x) f(x)$.  The following result gives a guarantee that holds
uniformly for any sequence of functions
$\{f_\iter\}_{\iter=1}^\nIter$, thereby allowing for the possibility
of adversarial behavior.
\begin{proposition}[Adversarial Guarantees for Mirror Descent]
\label{prop:MirrorDescent}
Suppose that we initialize with the uniform distribution
$\mirrdist_{1}(\xvar) = \frac{1}{\abs{\xSpace}}$ for all $\xvar
\in \xSpace$, and then perform $\nIter$ rounds of the update
\begin{align}
  \label{eqn:ExponentiatedGradient}
  \mirrdist_{\iter+1}(\xvar ) \propto \mirrdist_{\iter}(\xvar)
  \exp(\LearningRate \fnc_{\iter}(\xvar)), \quad \mbox{for all $\xvar
    \in \xSpace$,}
\end{align}
using $\LearningRate  = \sqrt{\frac{\log \abs{\xSpace}}{2\nIter}}$.  
If $\norm{\fnc_{\iter}}{\infty} \leq 1$ for all $\iter \in [\nIter]$ then we have the
bound
\begin{align}
\label{EqnMirrorBoundStatewise}  
\frac{1}{\nIter} \sum_{\iter=1}^\nIter \Big[
  \fnc_{\iter}(\widetilde{\mirrdist}) -
  \fnc_{\iter}(\mirrdist_{\iter})\Big] \leq \MirrorRegret{\nIter}
\defeq 2\sqrt{\frac{2\log \abs{\xSpace}}{\nIter}}.
\end{align}
where $\widetilde{\mirrdist}$ is any comparator distribution over
$\xSpace$.
\end{proposition}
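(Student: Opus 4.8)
The plan is to use the standard log-partition (exponential potential) argument for the multiplicative-weights/Hedge scheme. First I would introduce unnormalized weights defined by $w_1(\xvar) = 1$ and $w_{\iter+1}(\xvar) = w_\iter(\xvar)\exp(\LearningRate \fnc_\iter(\xvar))$, so that the iterates satisfy $\mirrdist_\iter(\xvar) = w_\iter(\xvar)/\sum_{\xvar'} w_\iter(\xvar')$, which reproduces the update~\eqref{eqn:ExponentiatedGradient}. The central object is the potential $\Phi_\iter \defeq \log \sum_{\xvar \in \xSpace} w_\iter(\xvar)$, and the entire proof reduces to sandwiching the total change $\Phi_{\nIter+1} - \Phi_1$ from above and below.

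For the upper bound I would compute the one-step increment $\Phi_{\iter+1} - \Phi_\iter = \log \sum_{\xvar} \mirrdist_\iter(\xvar)\exp(\LearningRate \fnc_\iter(\xvar)) = \log \E_{\mirrdist_\iter}[\exp(\LearningRate \fnc_\iter)]$. Since $\norm{\fnc_\iter}{\infty}\leq 1$, the variable $\fnc_\iter$ ranges in $[-1,1]$, and Hoeffding's lemma gives $\log\E_{\mirrdist_\iter}[\exp(\LearningRate \fnc_\iter)] \leq \LearningRate \fnc_\iter(\mirrdist_\iter) + \tfrac{\LearningRate^2}{2}$. Telescoping over $\iter = 1,\dots,\nIter$ and using $\Phi_1 = \log\abs{\xSpace}$ then yields $\Phi_{\nIter+1} \leq \log\abs{\xSpace} + \LearningRate\sum_{\iter=1}^\nIter \fnc_\iter(\mirrdist_\iter) + \tfrac{\nIter\LearningRate^2}{2}$.

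For the lower bound I would use the closed form $\log w_{\nIter+1}(\xvar) = \LearningRate\sum_{\iter=1}^\nIter \fnc_\iter(\xvar)$ and compare against an arbitrary comparator $\widetilde\mirrdist$. Applying Jensen's inequality to the concave logarithm gives $\Phi_{\nIter+1} \geq \sum_{\xvar} \widetilde\mirrdist(\xvar)\log\frac{w_{\nIter+1}(\xvar)}{\widetilde\mirrdist(\xvar)} = \LearningRate\sum_{\iter=1}^\nIter \fnc_\iter(\widetilde\mirrdist) + H(\widetilde\mirrdist)$, where the Shannon entropy $H(\widetilde\mirrdist)\geq 0$ is simply dropped. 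Combining the two bounds and cancelling $\Phi_{\nIter+1}$ gives $\LearningRate\sum_{\iter=1}^\nIter[\fnc_\iter(\widetilde\mirrdist) - \fnc_\iter(\mirrdist_\iter)] \leq \log\abs{\xSpace} + \tfrac{\nIter\LearningRate^2}{2}$; dividing by $\LearningRate\nIter$ and substituting the stepsize $\LearningRate = \sqrt{\tfrac{\log\abs{\xSpace}}{2\nIter}}$ produces the two terms $\tfrac{\log\abs{\xSpace}}{\LearningRate\nIter} = \sqrt{\tfrac{2\log\abs{\xSpace}}{\nIter}}$ and $\tfrac{\LearningRate}{2} = \tfrac{1}{2\sqrt 2}\sqrt{\tfrac{\log\abs{\xSpace}}{\nIter}}$, whose sum is dominated by $\MirrorRegret{\nIter} = 2\sqrt{\tfrac{2\log\abs{\xSpace}}{\nIter}}$.

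The argument is entirely routine, so there is no serious obstacle beyond bookkeeping. The only point requiring mild care is the per-step increment: one must check that the relevant bounded variable lies in an interval of width $2$, so that Hoeffding's lemma contributes exactly $\LearningRate^2/2$ rather than a larger constant, since this is what keeps the final constant below the stated value. I would also note that the comparator-distribution form comes for free from the Jensen step, so no restriction on $\widetilde\mirrdist$ (such as membership in the soft-max class) is needed, which is precisely what makes the subsequent application to an adversarial sequence of MDPs valid.
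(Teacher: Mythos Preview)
Your proof is correct and is exactly the standard potential/Hoeffding argument for the Hedge scheme. The paper does not give its own proof of this proposition: it states the result ``for reference'' and simply cites Section~5.4.2 of Hazan's lecture notes~\cite{hazan2021introduction}. The argument you wrote is precisely the one found in that reference (unnormalized weights, log-sum-exp potential, Hoeffding's lemma for the per-step increment, Jensen/Donsker--Varadhan for the comparator lower bound), so there is nothing to compare---you have supplied the proof the paper outsourced.
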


We now use this result to prove our claim.  So as to streamline the
presentation, it is convenient to introduce the advantage function
corresponding to $\ActorPolicy{\iter}$.  It is a function of the
state-action pair $\psa$ given by
\begin{align*}
\ApiAdv{\iter}{\ActorPolicy{\iter}}\psa \defeq
\QpiAdv{\iter}{\ActorPolicy{\iter}}\psa -
\Expecti{\QpiAdv{\iter}{\ActorPolicy{\iter}}(\state,\successoraction)}{\successoraction
  \sim \ActorPolicy{\iter}(\cdot \mid \state)}.
\end{align*}
In the sequel, we omit dependence on $\psa$ when referring to this
function, consistent with the rest of the paper.

From our earlier observation~\eqref{eqn:VonAdv}, recall that the
reward function of the perturbed MDP $\MDPadv{\iter}$ corresponds to
that of $\MDP$ plus the perturbation $\RewardLawAdv{\iter}$.
Combining this fact with a standard simulation lemma (e.g.,
\cite{kakade2003sample}) applied to $\MDPadv{\iter}$, we find that
\begin{subequations}
\begin{align}
\label{EqnInitialBound}  
\Vpi{\comparator} - \Vpi{\ActorPolicy{\iter}} & =
\VpiAdv{\iter}{\comparator} - \VpiAdv{\iter}{\ActorPolicy{\iter}} +
\horizon \Big[ - \Expecti{\RewardLawAdv{\iter}}{\comparator} +
  \Expecti{\RewardLawAdv{\iter}}{\ActorPolicy{\iter}} \Big] \; = \;
\horizon \Big[
  \Expecti{\ApiAdv{\iter}{\ActorPolicy{\iter}}}{\comparator} -
  \Expecti{\RewardLawAdv{\iter}}{\comparator} +
  \Expecti{\RewardLawAdv{\iter}}{\ActorPolicy{\iter}} \Big].
\end{align}
Now for any given state $\state$, we introduce the linear objective
function
\begin{align*}
  \fnc_\iter(\mirrdist) & \defeq \Exp_{\action \sim \mirrdist}
  \QpiAdv{\iter}{\ActorPolicy{\iter}}(\state, \action) \; = \;
  \sum_{\action \in \ActionSpace} \mirrdist(\action)
  \QpiAdv{\iter}{\ActorPolicy{\iter}}(\state, \action),
\end{align*}
where $\mirrdist$ is a distribution over the action space.  With this
choice, we have the equivalence
\begin{align*}
  \Expecti{\ApiAdv{\iter}{\ActorPolicy{\iter}}}{\action \sim
    \comparator}\psa & = \fnc_\iter(\comparator(\cdot \mid \state)) -
  \fnc_\iter\big(\ActorPolicy{\iter}(\cdot \mid \state) \big),
\end{align*}
where the reader should recall that we have fixed an arbitrary state
$\state$.  Consequently, applying the bound~\eqref{EqnMirrorBoundStatewise}
with $\xSpace = \ActionSpace$ and these choices of linear functions,
we conclude that
\begin{align}
\label{EqnMyMirror}  
\frac{1}{\nIter} \sumiter
\Expecti{\ApiAdv{\iter}{\ActorPolicy{\iter}}}{\action \sim
  \comparator}\psa \leq \MirrorRegret{\nIter}.
\end{align}
\end{subequations}
This bound holds for any state, and also for any average over the
states.

We now combine the pieces to conclude.  By computing the average of
the bound~\eqref{EqnInitialBound} over all $\nIter$ iterations, we
find that
\begin{align*}
  \frac{1}{\nIter} \sumiter \Big[ \Vpi{\comparator} -
    \Vpi{\ActorPolicy{\iter}} \Big] & \leq \horizon \left \{
  \frac{1}{\nIter} \sumiter
  \Expecti{\ApiAdv{\iter}{\ActorPolicy{\iter}}}{\comparator} +
  \frac{1}{\nIter} \sumiter \Big[ -
    \Expecti{\RewardLawAdv{\iter}}{\comparator} +
    \Expecti{\RewardLawAdv{\iter}}{\ActorPolicy{\iter}} \Big] \right
  \} \\
& \leq \horizon \left \{ \MirrorRegret{\nIter} + \frac{1}{\nIter}
  \sumiter \Big[ - \Expecti{\RewardLawAdv{\iter}}{\comparator} +
    \Expecti{\RewardLawAdv{\iter}}{\ActorPolicy{\iter}} \Big] \right
  \},
\end{align*}
where the final inequality follow from the
bound~\eqref{EqnMirrorBoundStatewise}, applied for each $\state$.  We have thus
established the claim.






%





\end{document}